\newcommand{\inbrace}[1]{\left \{ #1 \right \}}
\newcommand{\inparen}[1]{\left ( #1 \right )}
\newcommand{\insquare}[1]{\left [ #1 \right ]}
\newlength{\dhatheight}
\newcommand{\SET}[1]{\inbrace{#1}}
\DeclareMathOperator*{\argmin}{argmin}
\DeclareMathOperator*{\Ex}{\mathbb{E}}
\DeclareMathOperator*{\Prob}{Pr}
\renewcommand{\Pr}{\mathbf{Pr}}
\newcommand{\eps}{\varepsilon}
\newcommand{\bbF}{{\mathbb F}}
\newcommand{\bbN}{{\mathbb N}}
\newcommand{\bbA}{{\mathbb A}}
\newcommand{\calH}{\mathcal{H}}
\newcommand{\calI}{\mathcal{I}}
\newcommand{\calO}{\mathcal{O}}
\newcommand{\calT}{\mathcal{T}}
\newcommand{\calX}{\mathcal{X}}
\newcommand{\calY}{\mathcal{Y}}
\newcommand{\ERM}{\mathsf{ERM}\xspace}
\newcommand{\ind}{\mathbbm{1}}
\newcommand{\err}{{\rm err}}
\newcommand{\OPT}{\mathsf{OPT}}
\newcommand{\removed}[1]{}
\newcommand{\1}[1]{\mathbbm{1}\left[#1\right]}
\newcommand{\Alg}{\mathcal{A}} 
\newcommand{\Asim}{\Alg^{\textsf{sim}}}
\newcommand{\Bern}{\mathsf{Bernoulli}}
\newcommand{\D}{D} 
\newcommand{\Ddiff}{\D^{\diff}}
\newcommand{\diff}{\mathsf{diff}} 
\newcommand{\EEx}[2]{\Ex_{#1}\left[#2\right]} 
\newcommand{\event}{\mathcal{E}} 
\newcommand{\eventdiff}{\event^{\textsf{diff}}}
\newcommand{\eventgood}{\event^{\textsf{good}}}
\newcommand{\eventind}{\event^{\textsf{ind}}}
\newcommand{\eventnum}{\event^{\textsf{num}}}
\newcommand{\hstar}{h^{\star}}
\newcommand{\Iinsig}{\mathcal{I}^{\textrm{insig}}}
\newcommand{\Inst}{\mathcal{I}} 
\newcommand{\Isig}{\mathcal{I}^{\textrm{sig}}}
\newcommand{\istar}{i^{\star}}
\newcommand{\LazyHedge}{\mathsf{LazyHedge}}
\newcommand{\poly}{\operatorname*{poly}}
\newcommand{\polylog}{\operatorname*{polylog}}
\newcommand{\pr}[2]{\Pr_{#1}\left[#2\right]}
\newcommand{\Span}{\mathrm{Span}}
\newcommand{\wbar}{\overline{w}} 
\newcommand{\X}{\mathcal{X}} 
\newtheorem{thm}{Theorem}
\newtheorem{lem}[thm]{Lemma}
\newtheorem{rem}{Remark}
\newtheorem{prop}{Proposition}
\newtheorem{claim}[thm]{Claim}
\newtheorem{cor}[thm]{Corollary}
\newtheorem{openquestion}{Open Question}
\theoremstyle{definition}
\newtheorem{defn}{Definition}
\newtheorem*{defn*}{Definition}
\begin{document}

\title{Sample-Adaptivity Tradeoff in On-Demand Sampling}
\author[1]{Nika Haghtalab}
\author[2]{Omar Montasser}
\author[3]{Mingda Qiao}
\affil[1]{University of California, Berkeley}
\affil[2]{Yale University}
\affil[3]{University of Massachusetts Amherst}
\date{}

\maketitle

\begin{abstract}
    We study the tradeoff between sample complexity and round complexity in \emph{on-demand sampling}, where the learning algorithm adaptively samples from $k$ distributions over a limited number of rounds. In the realizable setting of Multi-Distribution Learning (MDL), we show that the optimal sample complexity of an $r$-round algorithm scales approximately as $dk^{\Theta(1/r)} / \eps$. For the general agnostic case, we present an algorithm that achieves near-optimal sample complexity of $\widetilde O((d + k) / \eps^2)$ within $\widetilde O(\sqrt{k})$ rounds. Of independent interest, we introduce a new framework, Optimization via On-Demand Sampling (OODS), which abstracts the sample-adaptivity tradeoff and captures most existing MDL algorithms. We establish nearly tight bounds on the round complexity in the OODS setting. The upper bounds directly yield the $\widetilde O(\sqrt{k})$-round algorithm for agnostic MDL, while the lower bounds imply that achieving sub-polynomial round complexity would require fundamentally new techniques that bypass the inherent hardness of OODS.
\end{abstract}

\section{Introduction}\label{sec:intro}

Modern machine learning pipelines increasingly treat the training set as a mutable resource---adapting the data collection process in response to intermediate learning signals in order to focus effort where it matters most. This adaptivity arises in a range of settings. In multi-distribution learning (see e.g.~\cite{BHPQ17,DBLP:conf/nips/HaghtalabJ022}), the on-demand sampling framework allows algorithms to adaptively select domains from which to sample to minimize the worst-case loss. Similarly in multi-armed bandit problems, adaptively selecting which arm to pull next is an important aspect of algorithm design. In practice, pipelines for training models adaptively decide how to reweigh or augment their datasets to improve downstream accuracy~\cite{doremi,shen2024data}. While these paradigms demonstrate---both theoretically and empirically---that adaptive data collection can significantly improve performance and sample efficiency, adaptivity is often an undesirable feature. It requires the ML practitioner to collect data sequentially, slowing down the end-to-end training process and limiting opportunities for parallelism and scalability. This tension raises a central question: 
\begin{quote} \centering \emph{to what extent is adaptive sample collection necessary to achieve the observed gains in learning performance? And what are the quantitative tradeoffs between the number of adaptive rounds and sample complexity?}
\end{quote}

We study this question in the context of \emph{on-demand sampling} within the framework of \emph{multi-distribution learning}~\cite{DBLP:conf/nips/HaghtalabJ022}. Multi-distribution learning extends the classical agnostic learning setting by giving the learner sampling access to $k$ distributions $\D_1, \dots, \D_k$, with the goal of learning a single predictor that minimizes the worst-case error across all distributions. 
This framework has emerged as a central model for studying algorithmic dataset selection, offering both a method of allocating a fixed sampling budget across heterogeneous data sources and a unifying perspective on several recent advances in federated learning, multi-task learning, domain adaptation, and fair and robust machine learning~\cite{DBLP:conf/icml/KearnsNRW18, DBLP:conf/icml/MohriSS19,DBLP:conf/iclr/SagawaKHL20,DBLP:conf/icml/RothblumY21, DBLP:conf/icml/Tosh022,HaghtalabJ023,ZZCDL24}.

Prior work on multi-distribution learning has established optimal on-demand sample complexities of $\widetilde{O}((d + k)/\eps)$ in the realizable case~\cite{BHPQ17,DBLP:conf/nips/Chen0Z18,DBLP:conf/nips/NguyenZ18} and $\widetilde{O}((\log(|\calH|)+ k)/\eps^2)$ in the agnostic case~\cite{DBLP:conf/nips/HaghtalabJ022}, where $d$ is the VC dimension of hypothesis class $\calH$. The latter was recently extended to $\widetilde{O}((d+ k)/\eps^2)$ for infinite hypothesis classes~\cite{ZZCDL24,DBLP:conf/colt/Peng24}.
However, these algorithms rely on a large number of adaptive rounds---often polynomial in $1/\eps$ and the complexity of $\calH$ and with a mild sublinear dependence on $k$ (see \prettyref{table:all-results} for details). 
In some cases, these algorithms collect  a single sample per round, resulting in a number of rounds that is as large as the  sample complexity itself!
In contrast, the best known fully non-adaptive algorithms incur significantly higher sample complexities of $\widetilde{O}(dk/\eps)$ and $\widetilde{O}(dk/\eps^2)$ in the realizable and agnostic settings, respectively.

Despite this gap, the complexity landscape between the two extremes---full adaptivity and full non-adaptivity---remains largely unexplored. In particular, it is unknown whether a \emph{constant number of adaptive rounds},  or even one that is merely \emph{independent of the accuracy level $\eps$}, could suffice to recover the optimal sample complexities achieved in the fully adaptive setting.

\paragraph{Our Contributions and Results} 
In this work, we formalize the problem of studying the tradeoffs between adaptivity and sample complexity of on-demand sampling algorithms.
Specifically, we aim for achieving the optimal sample complexity with a number of adaptive rounds that is nearly independent of $\eps$ and $d$ and with only sublinear dependence on $k$. We refer to the number of adaptive rounds as the \emph{round complexity} of an algorithm.

In the realizable case, we provide a tight characterization of the sample-adaptivity tradeoff. In particular, we prove that a round complexity $r$ allows for  a sample complexity of $d k^{\Theta(1/r)} / \eps$, yielding a smooth tradeoff between round complexity and sample complexity. In addition to confirming that $\log k$ rounds are necessary for achieving the optimal sample complexity in the realizable setting, this also indicates that a small constant number of rounds---say, $3$ rounds (!)---are sufficient to achieve an $\widetilde{O}(d\sqrt{k}/\eps)$ sample complexity, which is a significant improvement over fully non-adaptive approaches. 
In the agnostic case, we show that  $\widetilde{O}(\sqrt{k})$ rounds of adaptivity is sufficient to achieve the optimal sample complexity of $\widetilde{O}((d + k)/\eps^2)$.

From a technical perspective, we establish the tradeoff between adaptivity and sample complexity through two approaches. In the realizable case, our algorithms are based on a novel application of a variant of the AdaBoost algorithm with a particular notion of margin. In the agnostic setting, we introduce a general and abstract optimization problem called Optimization via On-Demand Sampling (OODS).
In this framework, the goal is to optimize a concave function $f$ over $[0,1]^k$, representing weights over $k$ distributions. 
There is no notion of sample complexity in this setting; instead, the algorithm can only access value and gradient information about $f$ within a restricted \emph{trust region}, which the algorithm can expand in every round. 
At a high level, the extent of the trust region serves as a proxy for sample complexity: the more a distribution is sampled, the better we can estimate the performance of predictors on it. The number of times the trust region is expanded before finding the optimum of $f$ corresponds to the round complexity.
We establish both upper and lower bounds on the round complexity in the OODS setting.

A strength of the OODS framework is that algorithms developed for OODS naturally transfer to the agnostic multi-distribution learning problem, forming the foundation for our performance guarantees. 
Additionally, the optimization formulation gives rise to more natural algorithm-independent lower bounds on adaptivity.
In particular, 
we prove $\poly(k)$ lower bounds on the round complexity of the OODS problem. These lower bounds shed light on the challenges of achieving the optimal sample complexity in agnostic multi-distribution learning using a sub-polynomial number of rounds.

\begin{table}
    \renewcommand{\arraystretch}{1.5}
    \centering
    \begin{tabular}{@{}llll@{}}
    \toprule
    Setting & Sample Complexity & Round Complexity & Reference \\
    \midrule
    Realizable & $\widetilde O((d + k) / \eps)$ & $O(\log k)$ & \cite{BHPQ17,DBLP:conf/nips/Chen0Z18,DBLP:conf/nips/NguyenZ18}\\
    Agnostic & $\widetilde O((\log(|\calH|) + k) / \eps^2)$ & $\widetilde O((\log(|\calH|) + k) / \eps^2)$& \cite{DBLP:conf/nips/HaghtalabJ022}\\
    Agnostic & $\widetilde O(d/\eps^4 + k/\eps^2)$ & $O(\log(k)/\eps^2)$ & \cite{DBLP:conf/colt/AwasthiH023}\\
    Agnostic & $\widetilde O((d + k)/\eps^2)\cdot (\log k)^{O(\log(1/\eps))}$ & $(\log k)^{O(\log(1/\eps))}$ & \cite{DBLP:conf/colt/Peng24}\\
    Agnostic & $\widetilde O((d + k)/\eps^2)$ & $O(\log(k)/\eps^2)$ & \cite{ZZCDL24}\\
    \midrule
    Realizable & $\widetilde O((k^{2/r}\cdot d + k) / \eps)$ & $r$ & \prettyref{thm:realizable-upper-informal}\\
    Realizable & $\widetilde \Omega(k^{1/r}\cdot d/r)$ & $r$ & \prettyref{thm:realizable-lower-informal}\\
    Agnostic & $\widetilde O((d + k)/\eps^2)$ & $\min\{\widetilde O(\sqrt{k}), O(k\log k)\}$ & Propositions \ref{prop:agnostic-box-version}~and~\ref{prop:agnostic-ellipsoid-version}\\
    \bottomrule
    \end{tabular}
    \caption{An overview of sample-adaptivity tradeoff in multi-distribution learning. $k$ is the number of distributions. $\calH$ is the hypothesis class and $d$ is its VC dimension. $r$ denotes a tunable round complexity between $1$ and $O(\log k)$. $\widetilde O(\cdot)$ and $\widetilde\Omega(\cdot)$ suppress $\polylog(d, k, 1/\eps, 1/\delta)$ factors.} 
    \label{table:all-results}
\end{table}

\subsection{Related Work}

\paragraph{Multi-Distribution Learning} Blum, Haghtalab, Procaccia and Qiao~\cite{BHPQ17} introduced the \emph{realizable} setting of multi-distribution learning, for which several $O(\log k)$-round algorithms with near-optimal sample complexity of $\widetilde O((d + k)/\eps)$ were given~\cite{BHPQ17, DBLP:conf/nips/Chen0Z18, DBLP:conf/nips/NguyenZ18}. On the other hand, it is folklore that the sample complexity is $\Omega(dk/\eps)$ without adaptive sampling. For the more challenging \emph{agnostic} setting where a ``perfect'' predictor may not exist, the optimal sample complexity was shown to be $\widetilde{O}((d + k)/\eps^2)$ in a series of recent work~\cite{DBLP:conf/nips/HaghtalabJ022, DBLP:conf/colt/AwasthiH023, ZZCDL24, DBLP:conf/colt/Peng24}. Interestingly, all these algorithms have a round complexity of at least $\poly(1/\eps)$ (see Table~\ref{table:all-results} for details). Other variants of the problem, where some data sources might be adversarial or differently labeled, have also been studied~\cite{Qiao18,DQ24}.

\paragraph{Power of Adaptivity in Learning and Beyond} Agarwal, Agarwal, Assadi and Khanna~\cite{AAAK17} systematically formulated the tradeoff between adaptivity and sample complexity in several learning problems, including a batched setting of multi-armed bandits that was previously introduced by Jun, Jamieson, Nowak and Zhu~\cite{JJNZ16} and subsequently studied in~\cite{GHRZ19,JYTXX24,JZZ25}. Chen, Papadimitriou and Peng~\cite{DBLP:conf/focs/ChenPP22} proposed a PAC learning framework of continual learning, and quantified the tradeoff between the number of sequential passes and the memory usage of the learning algorithm. Another recent line of work focused on the adaptivity-query tradeoff in submodular optimization~\cite{DBLP:conf/stoc/BalkanskiS18,FMZ19,EN19,CQ19a,BRS19,CQ19b,ENV19,DBLP:conf/stoc/LiLV20}.

\section{Preliminaries}\label{sec:prelim}
\paragraph{Multi-Distribution Learning (MDL)} We follow the formulation of MDL in~\cite{DBLP:conf/nips/HaghtalabJ022}. Let $\calX$ be the instance space and $\calY=\SET{0, 1}$ be the binary label space. Let $\calH\subseteq \calY^\calX$ be a hypothesis class and $d$ be its VC dimension. There are $k$ unknown data distributions $\D_1, \D_2, \dots, \D_k$ over $\calX\times \calY$. In each round, the algorithm draws samples from the $k$ distributions. The number of samples may differ on the $k$ distributions, and may be chosen adaptively based on samples drawn in previous rounds. The \emph{sample complexity} is the total number of samples drawn from all distributions in all rounds. An \emph{$r$-round algorithm} draws $r$ rounds of samples and has a \emph{round complexity} of $r$.

The goal is to learn a predictor $\hat{h}:\calX \to \calY$ that performs well on all $k$ distributions $\D_1,\dots, \D_k$. Formally, letting $\err(\hat h, \D) \coloneqq \pr{(x, y) \sim \D}{\hat h(x) \ne y}$ denote the population error of predictor $\hat h$ on distribution $\D$, an MDL algorithm is \emph{$(\eps, \delta)$-PAC (Probably Approximately Correct)} if 
\begin{equation*}
     \max_{i \in [k]} \err(\hat{h}, \D_i) \leq \OPT + \eps \text{ where }\OPT := \min_{h\in\calH} \max_{i \in [k]} \err(h, \D_i)
\end{equation*}
holds with probability at least $1 - \delta$ over the randomness in both the algorithm and the samples.

In the \emph{realizable} setting, the data distributions are promised to satisfy $\OPT = 0$, i.e., there exists a perfect predictor $\hstar \in \calH$ such that $\err(\hstar, \D_i) = 0$ for every $i \in [k]$. We also refer to the general MDL setting---where $\OPT$ can be non-zero---as the \emph{agnostic} setting.

\paragraph{MDL via Game Dynamics} Most previous agnostic MDL algorithms (e.g.,~\cite{DBLP:conf/nips/HaghtalabJ022,ZZCDL24,DBLP:conf/colt/Peng24}) view the learning problem as a zero-sum game, in which the ``min player'' chooses a hypothesis (or a mixture of multiple hypotheses) and the ``max player'' chooses a mixture of the $k$ data distributions. These algorithms solve MDL by simulating the game dynamics when the two players follow certain strategies, e.g., best response or a no-regret online learning algorithm. The analysis then boils down to finding the sample size that suffices for simulating the game dynamics accurately. For instance, simulating a ``min player'' that best-responds to the ``max player'' is equivalent to finding a hypothesis that approximately minimizes the error on a given mixture of the $k$ distributions.

\section{Sample-Adaptivity Tradeoff for Realizable MDL}
\label{sec:realizable-mdl-upperbound}
\subsection{Overview of Upper Bound}
For the realizable setting of MDL, we present an algorithm that establishes a tradeoff between sample complexity and round complexity.

\begin{thm}[Informal version of \prettyref{thm:realizable-upper}]
\label{thm:realizable-upper-informal}
    \prettyref{alg:realizable} is an $r$-round $(\eps, \delta)$-PAC algorithm for realizable MDL with sample complexity $O(k^{2/r}\log k\cdot\frac{d}{\eps} + \frac{k\log(k)\log(k/\delta)}{\eps})$.
\end{thm}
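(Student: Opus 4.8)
The plan is to run an AdaBoost-type procedure over the $k$ distributions in which the number of boosting iterations equals the round budget $r$, paying for the small iteration count by sharpening each weak hypothesis accordingly. Maintain weights $w^{(1)},\dots,w^{(r)}$ on $[k]$ with $w^{(1)}\equiv 1$. In round $t$, form the mixture $\D^{(t)}\coloneqq\sum_i P^{(t)}_i\D_i$ with $P^{(t)}\coloneqq w^{(t)}/\|w^{(t)}\|_1$, draw $m_t$ fresh samples from $\D^{(t)}$ (equivalently, a $\mathrm{Binomial}(m_t,P^{(t)}_i)$ number of samples from each $\D_i$), and let $h_t\in\calH$ be consistent with them. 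By realizability such an $h_t$ exists, and realizable uniform convergence for a VC class of dimension $d$ gives $\err(h_t,\D^{(t)})\le\gamma_t$ once $m_t=\widetilde\Theta(d/\gamma_t)$, except with probability $\delta/(3r)$. Using a few more samples to estimate $e_{t,i}\coloneqq\err(h_t,\D_i)$, perform the AdaBoost update $w^{(t+1)}_i=w^{(t)}_i\exp(-\alpha_t(1-2\hat e_{t,i}))$ with the standard step $\alpha_t$, so distributions on which $h_t$ is weak get upweighted. Output the weighted-majority predictor $\hat h(x)\coloneqq\1{\sum_t\alpha_t(2h_t(x)-1)\ge 0}$, a single function $\calX\to\{0,1\}$.

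The core is a potential argument keyed to a distribution-level notion of margin. With $\Phi^{(t)}\coloneqq\|w^{(t)}\|_1$, the convexity bound $e^{-\alpha(1-2e)}\le e^{-\alpha}(1+e(e^{2\alpha}-1))$ on $[0,1]$, the weak-learning guarantee $\sum_i P^{(t)}_i e_{t,i}=\err(h_t,\D^{(t)})\le\gamma_t$, and the AdaBoost choice of $\alpha_t$ give $\Phi^{(t+1)}\le 2\sqrt{\gamma_t(1-\gamma_t)}\,\Phi^{(t)}$, hence $\Phi^{(r+1)}\le k\prod_{t=1}^r 2\sqrt{\gamma_t}$. Conversely, writing the margin of $x\sim\D_i$ (with $\pm1$-encoded label $y'$) as $\rho(x)=(\sum_t\alpha_t(2h_t(x)-1)y')/\sum_t\alpha_t$ and using $e_{t,i}=\Ex_{x\sim\D_i}[\1{h_t\text{ errs on }x}]$ yields the identity $w^{(r+1)}_i=\Ex_{x\sim\D_i}[\exp(-(\sum_t\alpha_t)\rho(x))]$; since $\hat h$ errs on $x$ exactly when $\rho(x)\le 0$, a one-line Markov bound gives $\err(\hat h,\D_i)\le w^{(r+1)}_i\le\Phi^{(r+1)}$ for \emph{every} $i$ simultaneously. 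So it suffices to drive $k\prod_t 2\sqrt{\gamma_t}$ below $\eps$.

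It remains to choose the per-round accuracies --- equivalently, the sample sizes $m_t=\widetilde\Theta(d/\gamma_t)$ --- to meet $\prod_t\gamma_t\le\eps^2/(4^r k^2)$ while minimizing $\sum_t m_t$. By AM--GM the optimum is the uniform schedule $\gamma_t=\Theta(\eps^{2/r}k^{-2/r})$, for which $\sum_t m_t=\widetilde O(r\,d\,k^{2/r}\eps^{-2/r})=\widetilde O(k^{2/r}d/\eps)$, using $\eps^{-2/r}\le\eps^{-1}$ for $r\ge 2$ and that we may assume $r=O(\log k)$ (larger $r$ yields no further gain). Another $\widetilde O(k\log(k/\delta)/\eps)$ samples go toward maintaining the distribution weights via the estimates $\hat e_{t,i}$ across the $r$ rounds; since the steps satisfy $\alpha_t=O(\log(k/\eps)/r)$, one checks that the estimation error inflates $\Phi^{(r+1)}$ by only a constant factor, absorbed by rescaling $\eps$. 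A union bound over the $r$ consistency events and the $rk$ estimation events bounds the failure probability by $\delta$.

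The main obstacle I expect is establishing, and then exploiting, the distribution-level margin bound $\err(\hat h,\D_i)\le\Phi^{(r+1)}$: one must show that $r$ boosting rounds with weak hypotheses of error only $\gamma_t=\Theta(\eps^{2/r}k^{-2/r})$ really do guarantee error $\le\eps$ on all $k$ distributions at once --- a crude averaging bound over the individual $h_t$'s is far too lossy --- and simultaneously that the sample budget comes out to $k^{2/r}$ and not a worse power. The factor $2$ in the exponent (versus the $k^{1/r}$ appearing in the companion lower bound) is exactly the AdaBoost weak-to-strong slack, namely the square root in $\prod_t 2\sqrt{\gamma_t(1-\gamma_t)}$, and one should not expect to remove it within this style of potential analysis. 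A secondary delicate point is threading the error in the estimates $\hat e_{t,i}$ through the $r$-fold product defining $\Phi^{(r+1)}$ so that the weight maintenance stays within the stated $\widetilde O(k/\eps)$ budget.
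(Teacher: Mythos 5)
Your high-level architecture matches the paper (boost over distributions, run a PAC learner on the current mixture each round, combine by majority vote, spend one of the $r$ rounds per boosting iteration), but the central inequality your argument hinges on is false, and the paper's design choices are precisely there to avoid this pitfall.

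You claim the identity $w^{(r+1)}_i = \Ex_{x\sim\D_i}[\exp(-(\sum_t\alpha_t)\rho(x))]$ and then apply Markov to get $\err(\hat h,\D_i)\le w^{(r+1)}_i$. This identity does not hold. Because your boosting weights live on distributions rather than on examples, $w^{(r+1)}_i = \prod_t e^{-\alpha_t(1-2e_{t,i})} = \exp\bigl(-\sum_t\alpha_t\,\Ex_{x\sim D_i}[(2h_t(x)-1)y']\bigr)$ is the exponential of the \emph{average} margin, whereas the Markov bound requires the \emph{average of the exponential}, $\Ex_{x\sim D_i}[\exp(-\sum_t\alpha_t(2h_t(x)-1)y')]$. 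By Jensen (convexity of $e^{-z}$) the inequality between these two goes the wrong way: $w^{(r+1)}_i \le \Ex_x[\exp(-(\sum_t\alpha_t)\rho(x))]$, so bounding $w^{(r+1)}_i$ tells you nothing about $\err(\hat h,\D_i)$. A concrete counterexample: take $r=2$, $\alpha_1=\alpha_2=3$, and let $\D_i$ place mass $0.01$ on a point $x_1$ where both $h_1$ and $h_2$ err (and mass $0.99$ on a point where both are correct). Then $e_{1,i}=e_{2,i}=0.01$, $w^{(3)}_i=e^{-6(0.98)}\approx 0.0028$, but the majority vote still errs on $x_1$, so $\err(\hat h,\D_i)=0.01 > w^{(3)}_i$. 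Your scheme cannot distinguish ``each $h_t$ errs on a different $\gamma$-fraction of $\D_i$'' (majority vote is fine) from ``all $h_t$ err on the \emph{same} $\gamma$-fraction'' (majority vote fails), because both yield the same sequence $e_{t,i}$.

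The paper circumvents this exactly by not boosting on the raw error $e_{t,i}$. It thresholds: each round reweights distributions by $\ind[\err(h_t,\D_j)>\tau]$ (a \emph{binary} per-distribution loss), and Markov is applied twice in different places. First, the PAC learner on mixture $q_t$ with accuracy $\eps_\bbA=\tau p/4$ plus Markov over $j\sim q_t$ gives $\Pr_{j\sim q_t}[\err(h_t,\D_j)>\tau]\le p$, which is the weak-learning guarantee for the thresholded loss. Second, AdaBoost's margin bound (\prettyref{eqn:bnd2}) drives $\frac1k\sum_j L_{\tau,\theta}(F,\D_j)$ below $1/k$, guaranteeing that on \emph{every} $\D_j$ at least a $\frac12+\frac\theta2$ fraction of $h_1,\dots,h_r$ have error $\le\tau$. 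Then a \emph{separate} Markov argument over $(x,y)\sim\D_j$ --- Lemma~\ref{lem:maj-err-bnd} --- converts this margin into $\err(F,\D_j)\le(1+1/\theta)\tau=\eps$. If you want to salvage your approach you need to introduce a threshold $\tau$ and a margin parameter $\theta$ as the paper does; the continuous-loss potential argument on $\Phi^{(t)}=\|w^{(t)}\|_1$ simply does not control $\err(\hat h,\D_i)$ per coordinate.

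Your sample-complexity bookkeeping (AM--GM on the $\gamma_t$, uniform schedule $\gamma_t=\Theta(\eps^{2/r}k^{-2/r})$, the $\eps^{-2/r}\le\eps^{-1}$ cleanup) would give the stated rate if the core inequality were repaired, and your diagnosis of the factor-$2$ slack in the exponent as coming from the AdaBoost weak-to-strong conversion agrees with the paper.
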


We highlight two special cases of the general tradeoff above. First, when $r=\log k$, \prettyref{alg:realizable} is most sample-efficient and recovers the near-optimal sample complexity bound of $\widetilde O((d+k)/\eps)$ for realizable MDL~\cite{BHPQ17,DBLP:conf/nips/Chen0Z18,DBLP:conf/nips/NguyenZ18}. Second, with a small constant number of adaptive rounds (e.g., $r = 4$), \prettyref{alg:realizable} has a sample complexity of $\widetilde{O}((d\sqrt{k} + k)/\eps)$. Thus, our result demonstrates that even in the limited-adaptivity regime of $r = O(1)$, we can improve on the $\Omega(dk/\eps)$ sample complexity required in the fully non-adaptive case of $r=1$.

\begin{rem}
    Using more sophisticated variants of boosting such as boost-by-majority \citep[Chapter~13]{10.5555/2207821} or recursive boosting \citep{DBLP:journals/ml/Schapire90}, it is possible to achieve a sample complexity of $\widetilde{O}((d\sqrt{k} + k)/\eps)$ using exactly 3 rounds. Formal claims are deferred to \prettyref{app:realizable-upperbound}. 
\end{rem}

\paragraph{Additional Notations} To state our algorithm succinctly, we introduce a few more notations. For a distribution $\D$, we write $\D^{\otimes m}$ as its $m$-fold product distribution, and $S \sim \D^{\otimes m}$ is a shorthand for drawing a size-$m$ sample $S$ from $\D$. At each iteration $t$ of the algorithm, we maintain weights $q_t(1), q_t(2), \ldots, q_t(k) \ge 0$ that sum up to $1$. We also abuse the notation and write $q_t$ as the mixture distribution $\sum_{i=1}^{k}q_t(i) \cdot \D_i$. While distributions $\D_1, \D_2, \ldots, \D_k$ are unknown, the algorithm may still sample from the mixture $q_t$ easily: it suffices to first draw a random index $i$ such that $\pr{}{i = j} = q_t(j)$ for each $j \in [k]$ and then sample from $\D_i$.

\begin{algorithm2e}[h]
\caption{Trade-off Multi-Distribution Learning}
\label{alg:realizable}
\SetKwInput{KwInput}{Input}                
\SetKwInput{KwOutput}{Output}              
\SetKwFunction{pred}{CAS}
\SetKwFunction{rej}{RejectionSampling}
\DontPrintSemicolon
  \KwInput{Sample access to $k$ unknown distributions $\D_1,\dots, \D_k$, an optimal PAC learner $\bbA$ for class $\calH$, number of rounds $r$, target error $\eps$, and failure probability $\delta$.}
Set margin $\theta = \frac{r}{2\log(k)}$, $p = \frac{1}{2} \cdot \left(4k^{\nicefrac{2}{r}}\right)^{-\nicefrac{1}{(1-\theta)}}$, $\tau=\frac{\eps}{1+1/\theta}$, and $\alpha=\frac{1}{2}\ln\inparen{\frac{1-p}{p}}$.\;
Set $\eps_{\bbA} = \frac{\tau p}{4}, \delta_{\bbA} = \frac{\delta}{2r}$, and $m =O\inparen{\frac{d + \log(1/\delta_{\bbA})}{\eps_{\bbA}}}$.\;
Initialize $q_1(j) = \frac{1}{k}$ for each $j \in \SET{1,\dots, k}$, and let $q_1=\sum_{j=1}^{k} q_1(j) \D_j$.\;
\For{$t = 1, 2, \ldots, r$}{
    Call learner $\bbA$ on a sample $\widetilde{S}_t \sim q_t^{\otimes m}$, and let $h_t$ be the returned predictor.\;\label{line:PAC-learner}
    \For{$j = 1, 2, \ldots, k$}{
    Draw a sample $S_{j, t}\sim D^{\otimes n}_j$, where $n=\frac{12}{\tau}\log(2rk/\delta)$.\;\label{line:empirical-error}
    Update: 
    \[q_{t+1}(j) = \frac{q_t(j)}{Z_t} \times \left\{
        \begin{array}{ll}
        e^{-\alpha} & \text{if } \err(h_t, S_{j, t}) \leq \frac{\tau}{2}\\
        e^{\alpha} & \text{if } \err(h_t, S_{j, t}) > \frac{\tau}{2}
        \end{array}\right.\]
    where $Z_t$ is a normalization constant that ensures $\sum_{j=1}^{k}q_{t+1}(j) = 1$.\label{line:boosting-update}
    }
}
\KwOutput{Majority-Vote Predictor $F: x\mapsto \1{\frac{1}{r}\sum_{t=1}^{r}h_t(x) \ge 1/2}$.}
\end{algorithm2e}

\paragraph{Technical Overview} We explain here the main ideas behind \prettyref{alg:realizable}; the full proof and analysis is deferred to \prettyref{app:realizable-upperbound}. We run a variant of the classical AdaBoost algorithm \citep{10.5555/2207821} to maximize a particular notion of ``margin'' that is defined on distributions $\D_1,\dots, \D_k$. Specifically, in each round $1\leq t \leq r$, \prettyref{alg:realizable} calls learner $\bbA$ to learn a predictor $h_t$ that has a low error on $q_t$:
\[
    \sum_{j=1}^{k}q_t(j) \cdot \err(h_t, \D_j) = \err(h_t, q_t) \le \tau p.
\]
Then, by Markov's inequality, $h_t$ also
minimizes the fraction of distributions (as weighted by $q_t$) that have error more than $\tau$:
\[\sum_{j=1}^{k} q_t(j)\ind\insquare{\err(h_t, \D_j)> \tau} \leq p.\]
Subsequently, \prettyref{alg:realizable} updates the weighted mixture $q_t$ over the $k$ distributions based on the thresholded loss function $\ind[\err(h_t, \D_j) > \tau]$ (as is done in AdaBoost). After $r$ rounds, the margin-maximization property of AdaBoost guarantees that
\[\frac{1}{k}\sum_{j=1}^{k} \ind\insquare{\frac{1}{r}\sum_{t=1}^{r} \ind[\err(h_t, \D_j) > \tau] > \frac{1}{2} - \frac{\theta}{2}} \leq \prod_{t=1}^{r}2\sqrt{(1-p)^{1+\theta}p^{1-\theta}}.\]
By choosing the margin parameter $\theta$ and $p$ such that $\prod_{t=1}^{r}2\sqrt{(1-p)^{1+\theta}p^{1-\theta}} < 1/k$, we are guaranteed that, on each of the $k$ distributions, at least $1/2+\theta/2$ fraction of the $r$ predictors have error at most $\tau$. Formally, it holds for every $j \in [k]$ that
\[\frac{1}{r}\sum_{t=1}^{r} \ind[\err(h_t, \D_j) > \tau] \leq \frac{1}{2} - \frac{\theta}{2}.\]
Finally, with this margin property, invoking \prettyref{lem:maj-err-bnd} implies that the majority-vote predictor will have error at most $(1+1/\theta) \tau = \eps$ on all $k$ distributions. 

\subsection{Overview of Lower Bound}
The following theorem complements \prettyref{thm:realizable-upper-informal} by showing that a $k^{\Omega(1/r)}$ overhead is unavoidable.

\begin{thm}[Informal version of \prettyref{thm:realizable-lower}]
\label{thm:realizable-lower-informal}
    For every $r = O(\log k)$ and sufficiently large $d$, every $r$-round algorithm for realizable MDL has an $\Omega\left(\frac{dk^{1/r}}{r\log^2k}\right)$ sample complexity.
\end{thm}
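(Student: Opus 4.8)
I would prove this via Yao's minimax principle: construct a distribution over realizable MDL instances, all sharing a single hypothesis class $\calH$ with $\vc(\calH)=O(d)$ and $k$ data distributions, on which every \emph{deterministic} $r$-round algorithm drawing fewer than $N=\Omega\!\big(dk^{1/r}/(r\log^2 k)\big)$ samples fails to be $(\eps,1/3)$-PAC for some absolute constant $\eps$ (the formal statement would track the $\eps$-dependence). Set $b=\lceil k^{1/r}\rceil\ge 2$; the hypothesis $r=O(\log k)$ is what keeps $k^{1/r}=\omega(1)$, so the bound is non-trivial. The instances encode a hidden root-to-leaf path in a complete $b$-ary tree of depth $r$, and learning is made equivalent to a \emph{pointer-chasing} task: the algorithm must descend the tree one level per round, so $r$ rounds buy only $r$ levels, while the final level still leaves $\Omega(b)$ equally-likely candidate distributions, each of which demands $\Omega(d/r)$ fresh samples.

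\textbf{The hard family.} Fix a complete $b$-ary tree of depth $r$ and give every node $u$ a disjoint \emph{block} $B_u$ of $\Theta(d/r)$ fresh points; the instance space is $\calX=\bigsqcup_u B_u$. Let $\calH$ consist of all hypotheses that are arbitrary on the blocks of some single root-to-leaf path and identically zero elsewhere; since any shattered set must lie inside one path's blocks, $\vc(\calH)=\Theta\big((r{+}1)\cdot d/r\big)=\Theta(d)$ --- this VC accounting is precisely why the block size, hence the final bound, carries the $1/r$ factor. A random instance is drawn by choosing a uniformly random active path $v_0,v_1,\dots,v_r$ (a uniform active child at each node); on each active internal node $v_t$ ($0\le t<r$), the block $B_{v_t}$ gets an unknown near-balanced labeling $f_t$ whose restriction to a designated $\Theta(\log b)$-point sub-block robustly encodes $v_{t+1}\in[b]$; on the active leaf $v_r$, the block $B_{v_r}$ gets an arbitrary hard labeling $f_r$; every other block is labeled all-zero. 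Identify the $k$ leaves with the $k$ distributions; the distribution $\D_i$ at a leaf $i$ with path $u_0,\dots,u_r$ is the uniform mixture over $B_{u_0},\dots,B_{u_r}$. Then $\OPT=0$ via $\hstar$ equal to $f_t$ on $B_{v_t}$ for all $t$ and zero elsewhere: a leaf whose path first diverges from the active path at level $s$ only ever samples the active blocks $B_{v_0},\dots,B_{v_s}$ (where $\hstar=f_t$) and all-zero blocks afterwards.

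\textbf{The argument.} It rests on two facts. (i) \emph{Resolving the level-$t$ pointer requires reaching $B_{v_t}$:} decoding $v_{t+1}$ needs $\Omega(\log b)$ labeled points of the encoding sub-block, hence $\Omega(d/r)$ samples landing inside $B_{v_t}$ (coupon-collector / anticoncentration), and $B_{v_t}$ occupies only a $\Theta(1/r)$ fraction of $\D_i$, and only for the $b^{-t}$-fraction of leaves $i$ passing through $v_t$ --- so an algorithm that has not yet resolved $v_{t-1}$ cannot cheaply reach $B_{v_t}$ by blind sampling. (ii) \emph{At most one level of progress per round:} by an indistinguishability coupling that replaces the subtree below the currently-resolved frontier by its all-inactive counterpart, the round-$t$ transcript is, in total variation, nearly independent of which child of an unresolved active node is active, unless round $t$ itself spends $\Omega(d/r)$ samples inside that node's block --- which, within budget $N$, it can afford for only $o\!\big(b/(r\log^2 k)\big)$ nodes. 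Combining (i)--(ii) by induction over the $r$ rounds (union bounds over the $b$ children and $r$ rounds absorbed into the $\log^2 k$ factor) shows that an $o\!\big(bd/(r\log^2 k)\big)$-sample algorithm enters round $r$ still facing $\Omega(b)$ equally-likely candidates for the active leaf $v_r$; to incur error $<\eps$ on $\D_{v_r}$ it must learn $f_r$ on $B_{v_r}$, i.e.\ place $\Omega(d)$ samples on $\D_{v_r}$ (a $\Theta(1/r)$ fraction of $\Omega(d/r)$ needed points), and it cannot do so for $\Omega(b)$ candidates at once; so for a random active path it mislabels a constant fraction of some active block and suffers error $\ge\eps$ on a distribution below that node. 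Yao's principle then yields the randomized lower bound, and ``$d$ sufficiently large'' is used so that $d/r$ exceeds the constants required by the single-block VC gadget and the $\Theta(\log b)$-bit decoder.

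\textbf{Main obstacle.} The crux is making (ii) precise: formalizing that adaptivity \emph{within} a round cannot shortcut more than one level of the pointer. Since the algorithm's queries to the $b$ children of the frontier node are an arbitrary function of the current transcript, one must set up a careful coupling between the true instance and the ``all-inactive-below-the-frontier'' instance and bound the induced transcript distance by a sum, over children, of the decoding gadget's total-variation distance under the number of samples each child received --- which is $o(1)$ exactly when no child is over-sampled and the overall budget is $o\!\big(bd/(r\log^2 k)\big)$. A secondary but routine step is converting ``$B_{v_r}$ under-learned'' into ``$\max_i\err(\hat h,\D_i)\ge\eps$'' despite the unconstrained behavior of the output predictor off $B_{v_r}$; this only uses that $\D_{v_r}$ puts $\Theta(1/r)$ mass on $B_{v_r}$ and that a constant fraction of its labels remain information-theoretically undetermined.
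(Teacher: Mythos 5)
Your approach is genuinely different from the paper's. The paper constructs the hard instance from $k$ independent random subspaces of $\bbF_2^d$ with hidden dimensions drawn from a geometric-tailed distribution $\Ddiff$, and a random linear ground truth; the core of the argument is a symmetrization reduction to a \emph{single}-distribution learning problem in which the dimension $\diff$ is unknown and cannot be resolved without drawing $\Omega(\diff)$ samples, followed by an $r$-step induction showing that an $r$-round learner must ``skip'' one of the $r{+}1$ difficulty levels. You instead propose a tree-structured pointer-chasing instance with shared blocks, analyzed via Yao's principle directly on the MDL problem.

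There is a genuine gap in your construction: the inactive blocks are labeled all-zero while the active blocks carry a \emph{near-balanced} labeling, so distinguishing an active block from an inactive one costs $O(1)$ samples in that block. This makes the pointer chase free. Concretely, a two-round algorithm defeats the claimed bound: in round 1, draw $O(r\log k)$ samples from each of the $k$ leaf distributions; since each sample lands in a uniformly random block on that leaf's path, this yields $\Theta(\log k)$ samples in every block of the tree, which suffices to identify the entire active path with high probability (the encoded pointer is superfluous). In round 2, draw $\widetilde O(d)$ samples from $\D_{v_r}$ to cover all $r{+}1$ active blocks. The total is $O(rk\log k + d\,\polylog d) = O(d\,\polylog k)$ under the theorem's standing assumption $d \ge \Omega(k\log k)$, which contradicts the claimed $\Omega\bigl(d\sqrt{k}/\log^2 k\bigr)$ lower bound already at $r=2$. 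The paper's construction avoids this pitfall precisely because the label on a fresh sample from $V_i$ is a uniformly random bit until a basis of $V_i$ has been observed, so no constant number of samples reveals $\diff_i$; your labeling scheme leaks the ``activity'' of a block for free. A secondary, smaller issue is your closing error argument: $\D_{v_r}$ places only $\Theta(1/r)$ mass on $B_{v_r}$, so under-learning that block contributes only $\Theta(1/r)$ error, which is below the fixed $\eps=0.01$ once $r$ exceeds a constant; the paper sidesteps this because each $\D_i$ is uniform over a single subspace, so an under-learned $V_i$ has error $\approx 1/2$, not $\Theta(1/r)$.
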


This sample complexity lower bound nearly matches the $dk^{2/r}\log k$ term in \prettyref{thm:realizable-upper-informal}, up to a $\poly(r, \log k)$ factor and a factor of $2$ in the exponent of $k^{\Theta(1/r)}$.

We briefly sketch the proof of the $r = 2$ case, i.e., an $\Omega(d\sqrt{k})$ lower bound against two-round algorithms. We consider the class of linear functions over $\X = \bbF_2^d$, which has a VC dimension of $d$. The ground truth classifier $\hstar$ is drawn uniformly at random from all the $2^d$ linear functions. To construct the $k$ data distributions, we choose $k$ \emph{difficulty levels} $\diff_1, \diff_2, \ldots, \diff_k$ as a uniformly random permutation of: (1) $1$ copy of $\Theta(d)$; (2) $\sqrt{k}$ copies of $\Theta(d/\sqrt{k})$; (3) $k - \sqrt{k} - 1$ copies of $\Theta(d/k)$. Note that $\sum_{i=1}^{k}\diff_i \le d$. Each data distribution $\D_i$ is the uniform distribution over a randomly chosen $\diff_i$-dimensional subspace $V_i \subseteq \bbF_2^d$. Furthermore, the subspaces $V_1, V_2, \ldots, V_k$ are chosen such that they are linearly independent, i.e., $\dim(\Span(V_1 \cup V_2 \cup \cdots \cup V_k)) = \sum_{i=1}^{k}\diff_i$.

Intuitively, $\diff_i$ measures the ``effective sample complexity'' for learning $\D_i$: $\Theta(\diff_i)$ samples are sufficient and necessary to learn an accurate classifier for $\D_i$. In addition, since $\hstar$ is randomly chosen and the subspaces $V_1, V_2, \ldots, V_k$ are independent, samples collected from one distribution $\D_i$ provide no information about the value of $\hstar$ on $V_j$ (except for the zero vector) for every $j \ne i$. Furthermore, if $\diff_i \in \{\Theta(d / \sqrt{k}), \Theta(d)\}$ and $m \ll d / \sqrt{k}$ samples have been drawn from $\D_i$, it holds with high probability that the $m$ vectors in these samples are linearly independent. Then, the learner gains no information for distinguishing whether $\diff_i = \Theta(d / \sqrt{k})$ or $\diff_i = \Theta(d)$.

A \emph{three-round} learner has a simple strategy: (1) In Round~$1$, draw $\Theta(d/k)$ samples from each distribution, thereby identifying the distributions with $\diff_i = \Theta(d/k)$ as well as learning the value of $\hstar$ on each $V_i$; (2) In Round~$2$, draw $\Theta(d / \sqrt{k})$ samples from each of the $\sqrt{k} + 1$ remaining distributions, which is sufficient for all distributions except the one with $\diff_i = \Theta(d)$; (3) In Round~$3$, learn the only remaining distribution using $\Theta(d)$ samples. The resulting sample complexity is $O(d)$.

In contrast, a two-round learner must ``skip'' one of the three steps. For example, in Round~$2$ where there are still $\sqrt{k} + 1$ ``suspects'' among which one distribution has difficulty level $\Theta(d)$, the learner could draw $\Theta(d)$ samples from each of them. Alternatively, the learner could draw $\Theta(d / \sqrt{k})$ samples from each distribution in Round~$1$, so that the distribution with $\diff_i = \Theta(d)$ can be identified and then learned in Round~$2$. However, both strategies would have an $\Omega(d\sqrt{k})$ sample complexity.

The formal proof (in \prettyref{app:realizable-lower}) extends the hard instance construction to all $r = O(\log k)$ by using $r + 1$ different difficulty levels separated by a $k^{1/r}$ factor. We then formalize the intuition that every $r$-round MDL algorithm must ``skip'' a step and thus incur a $k^{1/r}$ overhead in the sample complexity.

\section{Sample-Adaptivity Tradeoff for Agnostic MDL}\label{sec:agnostic-overview}
For the agnostic setting, we show that the near-optimal sample complexity of $\widetilde O((d + k) / \eps^2)$ can be achieved by a $\poly(k)$-round algorithm.

\begin{prop}[Corollaries \ref{cor:MDL-box}~and~\ref{cor:MDL-ellipsoid}]
\label{prop:upper-agnostic}
    There is a $\min\{\widetilde O(\sqrt{k}), O(k\log k)\}$-round MDL algorithm with sample complexity $\widetilde O((d+k)/\eps^2)$.
\end{prop}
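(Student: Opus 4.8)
\emph{Proof strategy.} The plan is to obtain \prettyref{prop:upper-agnostic} from the Optimization via On-Demand Sampling (OODS) abstraction via two ingredients: (A) a generic reduction turning any OODS algorithm into an agnostic MDL algorithm, where the number of trust-region expansions becomes the round complexity and the extent of the final trust region controls the sample complexity; and (B) two OODS algorithms, one expanding an axis-aligned (``box'') trust region and one expanding an ellipsoidal trust region, with round complexities $O(k\log k)$ and $\widetilde O(\sqrt k)$ respectively. Combining (A) with either algorithm from (B) yields an $r$-round, sample-complexity-$\widetilde O((d+k)/\eps^2)$ MDL algorithm, and taking whichever of the two has fewer expansions for the given $k$ gives the stated $\min\{\widetilde O(\sqrt k),\,O(k\log k)\}$ bound.

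\emph{The reduction.} Following the game-dynamics view of MDL from \prettyref{sec:prelim}, I would let the ``max player'' control a weight vector $w$ over $\D_1,\dots,\D_k$ and work with the concave function $f(w)=\min_{P}\sum_{i=1}^{k}w_i\cdot\err(P,\D_i)$, where $P$ ranges over mixtures of hypotheses in $\calH$; by von Neumann's minimax theorem the maximum of $f$ over the simplex is the mixture-optimal worst-case error, which is at most $\OPT$, and a near-maximizer of $f$ together with the averaged best responses of the min player produces a (randomized) predictor with worst-case error $\le\OPT+\eps$. The point is that finite samples realize exactly a \emph{noisy, trust-region-restricted} oracle for $f$: drawing $n_i$ samples from $\D_i$ estimates the $i$-th coordinate of the (sub)gradient, $\err(P,\D_i)$, to accuracy $\widetilde O(1/\sqrt{n_i})$, so a weight vector $w$ is reliably queryable precisely when $n_i\gtrsim(w_i/\eps)^2$ for all $i$; this makes ``how much each distribution has been sampled'' exactly the OODS trust region, and enlarging it in coordinate $i$ costs more samples from $\D_i$. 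I would then check that (i) the hypothesis-learning side, run as a single online/ERM process over $\calH$ across all rounds, costs $\widetilde O(d/\eps^2)$ samples irrespective of the number of rounds, and (ii) since the trust region only grows, the distribution-estimation cost $\sum_i n_i\asymp\eps^{-2}\sum_i(\text{extent}_i)^2$ is governed by the \emph{final} trust region, which the OODS guarantees keep at $\sum_i(\text{extent}_i)^2=\widetilde O(k)$; together these give $\widetilde O((d+k)/\eps^2)$.

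\emph{The two OODS algorithms.} The box algorithm maintains an axis-aligned trust region and, in each round, uses arbitrarily many cost-free oracle queries to optimize $f$ to near-optimality inside the current box; whenever the restricted maximizer escapes, it minimally enlarges the box along the offending coordinates. A per-coordinate potential---each coordinate is doubled only $O(\log k)$ times before its extent is large enough that no further expansion in that direction is ever needed---caps the number of rounds at $O(k\log k)$. The ellipsoid algorithm instead keeps an ellipsoidal trust region and uses the gradient at the restricted optimum as a separating hyperplane, performing an ellipsoid-method-style update together with a controlled stretch along a new direction; here a volume/potential argument balances the geometric shrinkage of the ellipsoid against the (sample-relevant) growth of its axes to bound the number of expansions by $\widetilde O(\sqrt k)$. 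Both paths feed the same sample-complexity accounting from the reduction.

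\emph{Main obstacle.} I expect the $\widetilde O(\sqrt k)$ ellipsoidal bound to be the crux. Unlike the clean per-coordinate accounting behind the box version, it must simultaneously control (a) how fast the ellipsoid's volume shrinks toward a small ball around the optimum, (b) how much the ellipsoid's coordinate extents---hence the sample budget---are allowed to grow, and (c) how concavity of $f$ lets a single well-placed gradient query eliminate a constant fraction of the trust region; arranging that only $\widetilde O(\sqrt k)$ expansions are ever triggered is delicate, and I would organize it around one potential that couples the log-volume of the ellipsoid with a charging scheme paying $\Theta(\sqrt k)$ per new ``direction of progress.'' Two secondary points, present in both variants, are (1) the amortization that prevents the $\widetilde O(d/\eps^2)$ hypothesis-learning cost from multiplying by the number of rounds, and (2) robustness to noisy oracles: since every value/gradient estimate has finite-sample error, I would state approximate-oracle versions of both OODS algorithms and fold this error into the $\eps$-budget, which is also what pins the terminal trust-region accuracy to $\Theta(\eps)$ used in the sample count.
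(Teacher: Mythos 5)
Your overall strategy matches the paper's at the top level: reduce agnostic MDL to OODS, where the trust region encodes how much each $\D_i$ has been sampled, and then supply two OODS algorithms---one with an axis-aligned trust region achieving $O(k\log k)$ rounds and one with an ellipsoidal trust region achieving $\widetilde O(\sqrt k)$ rounds. Your box-version accounting (per-coordinate doubling caps each coordinate at $O(\log k)$ expansions) is also the same ``culprit'' argument as in \prettyref{prop:agnostic-box-version} and \prettyref{lem:lazy-hedge-round-box}, and your identification of the concave objective $f(w)=\min_P\sum_i w_i\,\err(P,\D_i)$ is the correct one.

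The genuine gap is the ellipsoid case, which you correctly identify as the crux but then propose to attack with the wrong algorithm. You describe an \emph{ellipsoid-method}-style algorithm: separating-hyperplane cuts, volume shrinkage, a potential coupling log-volume against axis growth. The paper does none of this. In the paper, the ellipsoid version of $\LazyHedge$ still runs plain Hedge (multiplicative weights) to generate the trajectory $w^{(1)},\dots,w^{(T)}$; the only change relative to the box version is the \emph{shape of the observability test} ($\sum_i [w^{(t)}_i]^2/\wbar^{(t-1)}_i \le 1$ rather than $w^{(t)}_i\le\wbar^{(t-1)}_i$), and a new round is triggered only when the Hedge iterate exits that ellipsoid. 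The $\widetilde O(\sqrt k)$ round bound (\prettyref{lem:lazy-hedge-round-ellipsoid}) comes from (i) the trajectory-boundedness lemma of~\cite{ZZCDL24} (\prettyref{lem:hedge-trajectory-bound}), which gives $\sum_i\max_t w^{(t)}_i=\widetilde O(1)$, and (ii) a two-type decomposition of cap updates---Type~I (some coordinate crosses $1/\sqrt{k/C}$, bounded directly by the trajectory lemma) and Type~II (bounded by a telescoping potential $\sum_j a_j^2/a_{j-1}$ accrued over geometrically increasing per-coordinate highs). An ellipsoid-method analysis would not deliver this: the classical ellipsoid method needs $\Theta(k^2\log(1/\eps))$ iterations to converge in $k$ dimensions, and there is no obvious way to amortize those iterations into $\widetilde O(\sqrt k)$ trust-region expansions without the Hedge-specific trajectory bound. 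Your sketch (``charging $\Theta(\sqrt k)$ per new direction of progress'') is speculative, and I do not see how to make it work.

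A secondary, smaller issue: your sample-complexity accounting is inconsistent with the reduction you set up. You write $\sum_i n_i\asymp\eps^{-2}\sum_i(\text{extent}_i)^2=\widetilde O(k)$, but in the paper the cost of the ERM step is $n_i\asymp\wbar_i\cdot\widetilde\Theta((d+k)/\eps^2)$, so $\sum_i n_i = \widetilde\Theta((d+k)/\eps^2)\cdot\sum_i\wbar_i$, and the crucial point is that the total \emph{sample overhead} $\sum_i\wbar_i$ is $\widetilde O(1)$, not $\widetilde O(k)$ (again by \prettyref{lem:hedge-trajectory-bound}). The additive $k$ in the final sample complexity comes from the $\Theta(k)$ per-unit-weight samples needed for the Hedge update step (estimating each $\err(\hat h^{(t)},\D_i)$), not from a $\widetilde O(k)$-sized trust region. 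Without the $\sum_i\wbar_i=\widetilde O(1)$ fact, your accounting would give a worse bound.
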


\paragraph{The MDL Algorithm of~\cite{ZZCDL24}} Our starting point is the approach of~\cite[Algorithm~1]{ZZCDL24}, which we briefly describe below. For brevity, we use $\widetilde O(\cdot)$ and $\widetilde\Theta(\cdot)$ to suppress $\polylog(k, d, 1/\eps, 1/\delta)$ factors, and let $\err(h, S) \coloneqq \frac{1}{|S|}\sum_{(x, y) \in S}\1{h(x) \ne y}$ denote the empirical error of hypothesis $h: \calX \to \calY$ on dataset $S \subseteq \calX \times \calY$.

The algorithm maintains $k$ datasets $S_1, S_2, \ldots, S_k$, where each $S_i$ contains training examples drawn from $\D_i$. The algorithm runs the Hedge algorithm for $T = \Theta((\log k)/\eps^2)$ iterations starting at $w^{(1)} = (1/k, 1/k, \ldots, 1/k)$. Each iteration $t \in [T]$ consists of the following two steps:
\begin{itemize}[leftmargin=*]
    \item \textbf{ERM step:} For each $i \in [k]$, draw additional samples from $\D_i$ and add them to $S_i$ until $|S_i| \ge w^{(t)}_i \cdot \widetilde\Theta((d + k)/\eps^2)$. Then, find a hypothesis $h^{(t)} \in \calH$ that minimizes the empirical error $\hat L(h) \coloneqq \sum_{i=1}^{k}w^{(t)}_i \cdot \err(h, S_i)$, which is an estimate of the error of $h$ on $\sum_{i=1}^{k}w^{(t)}_i\D_i$.
    \item \textbf{Hedge update step:} For each $i \in [k]$, draw $w^{(t)}_i \cdot \Theta(k)$ \emph{fresh} samples from $\D_i$ to obtain an estimate $r^{(t)}_i \approx \err(h^{(t)}, \D_i)$. Compute $w^{(t+1)}$ from $w^{(t)}$ and $r^{(t)}$ via a Hedge update.
\end{itemize}

The crux of the analysis of~\cite{ZZCDL24} is to show that the dataset sizes in the two steps above are sufficiently large, so that $h^{(t)}$ approximately minimizes the error on mixture $\sum_{i=1}^{k}w^{(t)}_i\D_i$, and the reward vector $r^{(t)}$ is accurate enough for the Hedge update.

A straightforward implementation of the algorithm needs $T = \Theta((\log k)/\eps^2)$ rounds of sampling. In comparison, the $\widetilde O(\sqrt{k})$ round complexity in \prettyref{prop:upper-agnostic} is lower when $\eps \ll 1/k^{1/4}$.

\paragraph{Hedge with Lazy Updates} We prove \prettyref{prop:upper-agnostic} by modifying the algorithm of~\cite{ZZCDL24} so that it draws samples more lazily. The resulting algorithm is termed $\LazyHedge$ and formally defined in \prettyref{alg:lazy-hedge-MDL}. There are two versions of the algorithm---the ``box'' version and the ``ellipsoid'' version---that give the $O(k\log k)$ and $\widetilde O(\sqrt{k})$ round complexity bounds, respectively.

\begin{algorithm2e}[ht]
\caption{$\LazyHedge$: Hedge with Lazy Updates}
\label{alg:lazy-hedge-MDL}
\SetKwInput{KwInput}{Input}
\SetKwInput{KwOutput}{Output}
\SetKwFunction{pred}{CAS}
\SetKwFunction{rej}{RejectionSampling}
\DontPrintSemicolon
\KwInput{Number of distributions $k$, number of iterations $T = \Theta((\log k) / \eps^2)$, step size $\eta = \Theta(\eps)$, margin parameter $C > 1$.}
\textbf{Box version:} Define $\calO(\wbar) \coloneqq \{w \in \Delta^{k-1}: w_i \le \wbar_i,~\forall i \in [k]\}$.\;
\textbf{Ellipsoid version:} Define $\calO(\wbar) \coloneqq \{w \in \Delta^{k-1}: \sum_{i=1}^{k}w_i^2 / \wbar_i \le 1\}$.\;
Set $w^{(1)} = (1/k, 1/k, \ldots, 1/k)$ and $\wbar^{(0)} = (0, 0, \ldots, 0)$.\;
Set $S_i = \emptyset$ for $i \in [k]$ and $S_{i,t} = \emptyset$ for $i \in [k]$ and $t \in [T]$.\;
\For{$t = 1, 2, \ldots, T$}{
    \eIf{$w^{(t)} \in \calO(\wbar^{(t-1)})$} {
        Set $\wbar^{(t)} = \wbar^{(t-1)}$.\;
    } {
        Set $\wbar^{(t)}_i = C \cdot \max\{w^{(1)}_i, w^{(2)}_i, \ldots, w^{(t)}_i\}$ for every $i \in [k]$.\; \label{line:cap-update-MDL}
        Add samples from $\D_i$ to $S_i$ until $|S_i| \ge \wbar^{(t)}_i \cdot \widetilde\Theta((d+k)/\eps^2)$ for every $i \in [k]$.\;
        Add samples from $\D_i$ to $S_{i,t'}$ until $|S_{i,t'}| \ge \wbar^{(t)}_i \cdot \Theta(k)$ for every $i \in [k]$ and $t \le t' \le T$.\;
    }
    \textbf{ERM step:} Set $\hat h^{(t)} \in \argmin_{h \in \calH}\sum_{i=1}^{k}w^{(t)}_i\cdot \err(h, S_i)$.\;
    \textbf{Hedge update step:} Set $r^{(t)}_i = \err(\hat h^{(t)}, S_{i,t})$. Compute $w^{(t+1)} \in \Delta^{k-1}$ such that $w^{(t+1)}_i = \frac{w^{(t)}_i \cdot e^{\eta r^{(t)}_i}}{\sum_{j=1}^{k}w^{(t)}_j \cdot e^{\eta r^{(t)}_j}}$ for every $i \in [k]$.
}
\KwOutput{Randomized classifier uniformly distributed over $\{\hat h^{(1)}, \hat h^{(2)}, \ldots, \hat h^{(T)}\}$.}
\end{algorithm2e}

Similar to the Hedge algorithm, $\LazyHedge$ maintains a weight vector $w^{(t)}$ at each iteration $t$. In addition, it maintains a \emph{cap vector} $\wbar^{(t)}_i$ as a proxy for the size of dataset $S_i$ at time $t$. At the start of iteration $t$, it checks whether $w^{(t)}$ is ``observable'' under cap $\wbar^{(t-1)}$ in the sense that $w^{(t)} \in \calO(\wbar^{(t-1)})$. If the condition holds, no additional samples are drawn and the cap vector is left unchanged. Otherwise, the cap $\wbar^{(t)}$ is updated to $C$ times the entrywise maximum of all weight vectors so far, and additional samples are drawn so that both $|S_i|$ and $|S_{i,t}|$ match $\wbar^{(t)}_i$. Finally, $\LazyHedge$ computes the next weight vector $w^{(t+1)}$ from $w^{(t)}$ using the Hedge update rule.

The correctness and sample complexity of $\LazyHedge$ follow from the analysis of~\cite{ZZCDL24}. At a high level, either version of $\LazyHedge$ ensures that using $S_i$ in the ERM step and using $S_{i,t}$ in the Hedge update step lead to low-variance estimates, which allow the analysis of~\cite{ZZCDL24} to go through. We provide a more detailed analysis in \prettyref{app:OODS-application}.

It remains to upper bound the round complexity of $\LazyHedge$, namely, the number of times the cap vector is updated in Line~\ref{line:cap-update-MDL}. For the box version, we have an $O(k\log k)$ upper bound.

\begin{prop}
\label{prop:agnostic-box-version}
    The box version of $\LazyHedge$ takes at most $O(k\log k)$ rounds.
\end{prop}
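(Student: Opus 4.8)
The round complexity of $\LazyHedge$ is the number of times the cap vector is updated in Line~\ref{line:cap-update-MDL}, so the plan is to bound this count by a charging argument in which every update is blamed on a coordinate whose running-maximum weight has just grown by a factor of $C$. Let $1 = t_1 < t_2 < \cdots < t_N \le T$ be the iterations at which Line~\ref{line:cap-update-MDL} executes; $N$ is the quantity to bound. For each $i \in [k]$ and $t$, set $M^{(t)}_i := \max\{w^{(1)}_i, \ldots, w^{(t)}_i\}$. I would record two trivial facts used throughout: $M^{(t)}_i$ is nondecreasing in $t$, and $1/k = w^{(1)}_i \le M^{(t)}_i \le 1$ since every $w^{(t)}$ lies in $\Delta^{k-1}$. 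I would also note that the cap is frozen between consecutive updates (the ``if'' branch of \prettyref{alg:lazy-hedge-MDL} is taken for all $t \in \{t_\ell+1, \ldots, t_{\ell+1}-1\}$), so $\wbar^{(t_{\ell+1}-1)} = \wbar^{(t_\ell)} = C\,M^{(t_\ell)}$ entrywise, for every $\ell \ge 1$.

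The charging step is as follows. For $\ell \ge 1$, the update at $t_{\ell+1}$ fires precisely because $w^{(t_{\ell+1})} \notin \calO(\wbar^{(t_\ell)})$, i.e., there is a coordinate $i$ with $w^{(t_{\ell+1})}_i > \wbar^{(t_\ell)}_i = C\,M^{(t_\ell)}_i$; I charge update $\ell+1$ to such an $i$ and note that then $M^{(t_{\ell+1})}_i \ge w^{(t_{\ell+1})}_i > C\,M^{(t_\ell)}_i$. Now fix a coordinate $i$ and let $\ell_1+1 < \ell_2+1 < \cdots < \ell_q+1$ be the indices of all updates charged to it. Combining nondecreasingness of $M^{(\cdot)}_i$ (which gives $M^{(t_{\ell_{j+1}})}_i \ge M^{(t_{\ell_j+1})}_i$ because $\ell_{j+1} \ge \ell_j+1$) with the factor-$C$ jump at each charged update, a short induction gives $M^{(t_{\ell_q+1})}_i > C^{q}/k$; since $M^{(t_{\ell_q+1})}_i \le 1$, this forces $q < \log_C k$.

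Finally, summing over the $k$ coordinates shows at most $k\log_C k$ updates are charged, and the only update never charged is the first one $t_1 = 1$ (which always occurs because $\wbar^{(0)} = 0$ while $w^{(1)} = (1/k, \ldots, 1/k)$). Hence $N \le 1 + k\log_C k = O(k\log k)$ for the constant margin parameter $C > 1$. I do not expect a genuine obstacle; the whole thing reduces to the observation that a coordinate's weight can only increase geometrically $\log_C k$ times before it saturates at $1$. The points requiring care are (i) keeping the $t_1 = 1$ update outside the geometric count, since $M^{(t_0)}_i = 0$ would break the ratio bound, and (ii) using that Line~\ref{line:cap-update-MDL} takes the running maximum over \emph{all} past iterates $w^{(1)}, \ldots, w^{(t)}$ rather than only those at previous update times, which is exactly what makes $\wbar^{(t_\ell)} = C\,M^{(t_\ell)}$ hold.
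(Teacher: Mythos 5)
Your proposal is correct and follows essentially the same argument as the paper's proof sketch: identify a ``culprit'' coordinate whose running maximum jumps by a factor of $C$ at each cap update, conclude that each coordinate can be charged at most $\log_C k$ times since the historical high lives in $[1/k,1]$, and sum over the $k$ coordinates. Your write-up is slightly more careful about setting the $t_1 = 1$ update aside from the geometric count, but this is a formalization of the same idea rather than a different route.
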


\begin{proof}[Proof sketch]
    If the cap vector is updated in the $t$-th iteration, there exists $i \in [k]$ such that $w^{(t)}_i > \wbar^{(t-1)}_i$. We call such index $i$ the \emph{culprit} of this cap update. For index $i$ to be the culprit, the historical high of $w^{(t)}_i$ must have increased by a factor of $C$ since the last cap update. As this historical high is non-decreasing and in $[1/k, 1]$, each index $i$ can be the culprit at most $O(\log_C k)$ times. Thus, the round complexity is at most $k \cdot O(\log_C k) = O(k \log k)$ for any constant $C > 1$.
\end{proof}

For the ellipsoid version, a more involved analysis gives an $\widetilde O(\sqrt{k})$ round complexity bound. 

\begin{prop}
\label{prop:agnostic-ellipsoid-version}
    The ellipsoid version of $\LazyHedge$ takes at most $\widetilde O(\sqrt{k})$ rounds.
\end{prop}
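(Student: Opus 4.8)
The round complexity of the ellipsoid version of $\LazyHedge$ is exactly the number of executions of Line~\ref{line:cap-update-MDL}; say these cap updates occur at iterations $1 = t_1 < t_2 < \cdots < t_R$. Write $M^{(t)}_i \coloneqq \max\{w^{(1)}_i, \ldots, w^{(t)}_i\}$, so that $\wbar^{(t_r)} = C \cdot M^{(t_r)}$, the sequence $M^{(t)}_i$ is non-decreasing in $t$ with $M^{(t)}_i \in [1/k, 1]$ and $M^{(t)}_i \ge w^{(t)}_i$, and a cap update at $t_{r+1}$ is precisely the statement $w^{(t_{r+1})} \notin \calO(\wbar^{(t_r)})$, i.e. $\sum_i (w^{(t_{r+1})}_i)^2 / (C\, M^{(t_r)}_i) > 1$. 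The plan is to track the potential $\Psi_r \coloneqq \sum_{i=1}^k M^{(t_r)}_i$, show that it must increase by $\Omega(1/\sqrt{k})$ at every cap update, and bound its total range by $\widetilde O(1)$; combining these gives $R = \widetilde O(\sqrt k)$.

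\textbf{The crux: each cap update increases $\Psi$ by $\Omega(1/\sqrt k)$.} Fix a cap update at $t_{r+1}$, abbreviate $w \coloneqq w^{(t_{r+1})}$, $m_i \coloneqq M^{(t_r)}_i$, and let $A \coloneqq \{i : w_i > m_i\}$ be the coordinates that set a new record above $m_i$. For $i \notin A$ one has $w_i^2/m_i \le w_i$, so these coordinates contribute at most $\sum_i w_i = 1$ to $\sum_i w_i^2/m_i$; since the escape inequality reads $\sum_i w_i^2/m_i > C$, we get $\sum_{i \in A} w_i^2/m_i > C-1$. Writing $w_i = m_i + a_i$ with $a_i > 0$ for $i \in A$ and expanding $w_i^2/m_i = m_i + 2a_i + a_i^2/m_i$, and using $\sum_{i\in A} m_i \le \Psi_r$ and $\sum_{i\in A} a_i \le \sum_i w_i = 1$, one extracts $\sum_{i \in A} a_i^2/m_i > C - 3 - \Psi_r =: B$. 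Here is the point where the ellipsoid beats the box: because each $m_i \ge 1/k$ we have $a_i^2/m_i \le k\, a_i^2$, so $\sum_{i\in A} a_i^2 > B/k$, and hence (as the $a_i$ are nonnegative) $\sum_{i\in A} a_i \ge \sqrt{\sum_{i\in A}a_i^2} > \sqrt{B/k}$. Since $M^{(t_{r+1})}_i \ge w_i = m_i + a_i$ for $i \in A$, we conclude $\Psi_{r+1} - \Psi_r \ge \sum_{i\in A} a_i > \sqrt{B/k} = \Omega(1/\sqrt k)$. (Contrast with \prettyref{prop:agnostic-box-version}, where a single over-full coordinate is charged, yielding only $\Omega(1/k)$ and hence $O(k\log k)$ rounds.)

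\textbf{Bounding the range of $\Psi$, and conclusion.} We have $\Psi_1 = \sum_i (1/k) = 1$ and $\Psi_R \le \sum_i M^{(T)}_i$. The quantity $\sum_i M^{(T)}_i$ is $\widetilde O(1)$: this is exactly the bound that makes the dataset sizes $|S_i| \asymp \wbar_i \cdot \widetilde\Theta((d+k)/\eps^2)$ and $|S_{i,t}| \asymp \wbar_i \cdot \Theta(k)$ add up to the claimed sample complexity $\widetilde O((d+k)/\eps^2)$, and it follows from the analysis of the MDL game dynamics in the style of~\cite{ZZCDL24} (the max‑player's Hedge weights stay within a $\polylog$ factor of an equilibrium mixture, so $\sum_i \max_t w^{(t)}_i = \widetilde O(1)$); see \prettyref{app:OODS-application}. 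Choosing the margin parameter $C$ to exceed this a priori $\widetilde O(1)$ bound by at least $4$ guarantees $B = C - 3 - \Psi_r \ge 1$ at every cap update. Summing the per-update increase over $r = 1, \dots, R-1$ gives $\Psi_R \ge 1 + (R-1)/\sqrt k$, hence $R \le 1 + (\Psi_R - 1)\sqrt k = \widetilde O(\sqrt k)$. The main obstacle is precisely this boundedness step: for an adversarial reward sequence $\sum_i M^{(T)}_i$ can be $\widetilde\Omega(\sqrt k)$ (and the round count correspondingly larger), so one genuinely needs the stability of the max‑player's weights in the MDL game; given that, the $\ell_2$ step—converting $\sum_{i\in A} a_i^2/m_i = \Omega(1)$ with $m_i \ge 1/k$ into $\sum_{i\in A}a_i = \Omega(1/\sqrt k)$—is the new ingredient.

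\textbf{Remark.} A complementary, fully self-contained bound also holds: with $\Phi_r(t) \coloneqq \sum_i (w^{(t)}_i)^2/M^{(t_r)}_i$ one has $\Phi_r(t_r) \le \sum_i w^{(t_r)}_i = 1$, while $\Phi_r$ grows by at most a factor $e^{2\eta}$ per Hedge iteration and $\Phi_r(t_{r+1}) > C$; so consecutive cap updates are $\ge \ln(C)/(2\eta) = \Omega(1/\eps)$ iterations apart, giving $R = O(\eta T / \ln C) = \widetilde O(1/\eps)$. It is the ellipsoid potential argument above, however, that delivers the $\eps$-independent $\widetilde O(\sqrt k)$ bound asserted in the proposition.
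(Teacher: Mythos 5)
Your proof is correct, and it takes a genuinely different route from the paper's. The paper (see \prettyref{lem:lazy-hedge-round-ellipsoid}) splits cap updates into Type~I (some significant coordinate has reached a historical high $\ge 1/\sqrt{k/C}$) and Type~II, bounding each type separately: Type~I via a sub-type dyadic counting, and Type~II via a per-coordinate quadratic potential $\sum_j a_j^2/a_{j-1}$ that each Type~II update must charge $\Omega(1)$ to. Your argument collapses both cases into a single \emph{linear} potential $\Psi_r = \sum_i M^{(t_r)}_i$ (sum of historical highs), and the crux is the $\ell_2$-to-$\ell_1$ step: the escape inequality forces $\sum_{i\in A}a_i^2/m_i = \Omega(1)$ over the record-setting coordinates, and since $m_i \ge 1/k$ this gives $\sum_{i\in A}a_i^2 = \Omega(1/k)$ and hence $\sum_{i\in A}a_i = \Omega(1/\sqrt{k})$, i.e.\ $\Psi$ increases by $\Omega(1/\sqrt{k})$ per update. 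Combined with the $\widetilde O(1)$ bound on $\sum_i M^{(T)}_i$ from \prettyref{lem:hedge-trajectory-bound}, this directly yields $R = \widetilde O(\sqrt{k})$. This is cleaner and shorter than the paper's argument, and the potential is more interpretable.

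One small inefficiency worth fixing: you bound $\sum_{i\in A}m_i \le \Psi_r$, which forces $C$ to exceed the (a priori $\widetilde O(1)$, hence non-constant) Hedge-trajectory bound for $B = C - 3 - \Psi_r$ to be positive. But for $i \in A$ we have $m_i < w_i$, so in fact $\sum_{i\in A}m_i < \sum_{i\in A}w_i \le 1$, which gives the much tighter $B \ge C - 4$. This is $\Omega(1)$ for any constant $C \ge 5$, so your argument then works for the same constant margin $C$ regime as the paper's \prettyref{thm:lazy-hedge-ellipsoid}, with no circular dependence between $C$ and the trajectory bound. Your closing remark (the $\Phi_r$ growth-rate bound giving $\widetilde O(1/\eps)$ rounds via the gap $\ge \ln(C)/(2\eta)$ between consecutive updates) is also correct and is a nice complementary observation, though indeed not what's needed for the $\eps$-independent claim.
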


The analysis applies the following technical lemma shown by~\cite{ZZCDL24}.

\begin{lem}[Lemma 3 of ~\cite{ZZCDL24}]
\label{lem:hedge-trajectory-bound}
    For some choice of $T = \Theta((\log k)/\eps^2)$ and $\eta = \Theta(\eps)$ in $\LazyHedge$, it holds with probability $1 - \delta$ that $\sum_{i=1}^{k}\max_{1 \le t \le T}w^{(t)}_i \le O(\log^8(k/(\eps\delta))) = \widetilde O(1)$.
\end{lem}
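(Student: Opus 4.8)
The statement is quoted verbatim as Lemma~3 of \cite{ZZCDL24}, so the honest answer is that I would simply cite it; the sketch below is the argument I would give to reprove it from scratch.

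\emph{Setup and the key structural property.} I would first write the Hedge update multiplicatively, $w^{(t+1)}_i = w^{(t)}_i e^{\eta r^{(t)}_i}/Z_t$ with $Z_t = \sum_{j=1}^{k} w^{(t)}_j e^{\eta r^{(t)}_j}$; since each reward $r^{(t)}_i = \err(\hat h^{(t)}, S_{i,t})$ lies in $[0,1]$ we have $Z_t \ge 1$. The structural input, inherited from the correctness analysis of $\LazyHedge$, is that on a $(1-\delta)$-probability event the $\wbar^{(t)}_i$-proportional sample sizes make $\langle w^{(t)}, r^{(t)}\rangle$ equal, up to an additive $\widetilde O(\eps)$, to the value of the underlying min--max game, so $\langle w^{(t)}, r^{(t)}\rangle \le \OPT + \widetilde O(\eps)$ for every $t$. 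Because all rewards are nonnegative, this also yields the \emph{self-limiting} bound $w^{(t)}_i r^{(t)}_i \le \langle w^{(t)}, r^{(t)}\rangle \le \OPT + \widetilde O(\eps)$: a coordinate that already carries weight at least $\mu$ receives reward at most $(\OPT + \widetilde O(\eps))/\mu$ and is thereby essentially prevented from growing far past $\mu$. This self-limiting effect is what stops the trajectory from spreading its maxima over many coordinates, and it is the crux of the whole argument.

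\emph{Reduction to total upward motion, then a scaling argument.} Let $m^{(t)}_i = \max_{s \le t} w^{(s)}_i$ be the running maximum. Telescoping the $m^{(t)}_i$ and using $\sum_i w^{(1)}_i = 1$ gives the identity
\[
\sum_{i=1}^k \max_{1 \le t \le T} w^{(t)}_i \;=\; 1 + \sum_{t=1}^{T-1}\sum_{i=1}^k \bigl(w^{(t+1)}_i - m^{(t)}_i\bigr)^+ .
\]
Each term is at most $\bigl(w^{(t+1)}_i - w^{(t)}_i\bigr)^+$, and from $Z_t \ge 1$ and $e^x - 1 \le x e^x$ one gets $\bigl(w^{(t+1)}_i - w^{(t)}_i\bigr)^+ \le e^\eta \eta\, w^{(t)}_i r^{(t)}_i$; summing over $i,t$ gives $\sum_i \max_t w^{(t)}_i \le 1 + e^\eta \eta \sum_t \langle w^{(t)}, r^{(t)}\rangle$, and since $\eta T = \widetilde\Theta(1/\eps)$ this is $\widetilde O(\OPT/\eps + 1)$ --- already $\widetilde O(1)$ when $\OPT = \widetilde O(\eps)$. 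To remove the $\OPT/\eps$ term for general $\OPT$ I would localize to record-setting pairs $(i,t)$ and use three budgets: (i) the self-limiting property forces a coordinate whose maximum is $\mu$ to ramp up through all dyadic levels $1/k, 2/k, \dots, \mu$, each level-crossing costing $\Omega(1/\eta)$ iterations; (ii) at most $O(1/\mu)$ coordinates can sit at weight $\ge \mu$ at any one time; (iii) $\sum_t \Var_{i \sim w^{(t)}}[r^{(t)}_i] \le \sum_t \langle w^{(t)}, r^{(t)}\rangle$ (as $r^{(t)}_i\in[0,1]$) caps the total reward-dispersion these ramp-ups can draw on. Charging each coordinate's ramp-up against the iteration budget $T = \Theta((\log k)/\eps^2)$, the entropy budget $\ln k$, and this dispersion budget should give $N(\mu) \defeq \bigl|\{i : \max_t w^{(t)}_i \ge \mu\}\bigr| \le \widetilde O(1/\mu)$, whence $\sum_i \max_t w^{(t)}_i = \int_0^1 N(\mu)\,d\mu \le \int_0^{1/k} k\,d\mu + \int_{1/k}^{1} \widetilde O(1/\mu)\,d\mu = \widetilde O(1)$.

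\emph{The hard part.} The difficulty is that each of the naive estimates above loses a factor of $1/\eps$ (e.g. $\eta \sum_t \langle w^{(t)}, r^{(t)}\rangle = \widetilde\Theta(\OPT/\eps)$ rather than $\widetilde O(1)$), so the iteration, entropy, and dispersion budgets cannot be used separately but have to be amortized against one another; in addition the whole bookkeeping must be run under a union bound over the $T$ iterations and the $O(\log k)$ dyadic weight levels, together with a Bernstein-type concentration of each $r^{(t)}_i$ around $\err(\hat h^{(t)}, \D_i)$. Arranging the argument so that all the $1/\eps$ factors cancel and only a power of $\log(k/(\eps\delta))$ survives is exactly the content of Lemma~3 of \cite{ZZCDL24}.
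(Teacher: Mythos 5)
The paper does not prove this lemma at all --- it is imported verbatim as Lemma~3 of \cite{ZZCDL24} --- so your primary answer, to cite it, is exactly the paper's approach. Your supplementary sketch is sound as far as it goes (the multiplicative update, $Z_t \ge 1$, the telescoping record-increment identity, and the resulting $\widetilde O(\OPT/\eps + 1)$ bound, which settles the case $\OPT = \widetilde O(\eps)$), but, as you yourself acknowledge, the amortization needed to remove the $\OPT/\eps$ factor for general $\OPT$ is left open, so the sketch should be regarded as motivation accompanying the citation rather than an independent proof.
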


\begin{proof}[Proof sketch of \prettyref{prop:agnostic-ellipsoid-version}]
We classify the cap updates into two types: A ``Type~I'' update is when some coordinate $w_i$ reaches a historical high of $> 1/\sqrt{k}$, and a ``Type~II'' update is one without a significant increase in any coordinate. We show that either type of cap updates happen $\widetilde O(\sqrt{k})$ times.

The upper bound for Type~I updates follows from \prettyref{lem:hedge-trajectory-bound}, which implies that there are at most $\widetilde O(1) \cdot \sqrt{k}$ Type~I updates where the coordinate reaches $\approx 1/\sqrt{k}$, at most $\widetilde O(1) \cdot \sqrt{k} / 2$ Type~I updates where the coordinate reaches $\approx 2/\sqrt{k}$, and so on. These upper bounds sum up to $\widetilde O(\sqrt{k})$.

The analysis for Type~II updates is more involved. Roughly speaking, we say that a coordinate $i \in [k]$ gains a \emph{potential} of $a^2/b$ when the historical high of $w_i$ increases from $b$ to $a$ through the Hedge dynamics. The $\widetilde O(\sqrt{k})$ bound follows from two technical claims: (1) Each Type~II update may happen only if a total potential of $\Omega(1)$ is accrued over all $k$ coordinates; (2) The total potential that the $k$ coordinates may contribute to Type~II updates is at most $\widetilde O(\sqrt{k})$.
\end{proof}

\section{A General Framework: Optimization via On-Demand Sampling}
\label{sec:OODS-overview}
\subsection{Problem Setup}
In Optimization via On-Demand Sampling (OODS), the goal is to maximize a concave function $f$ over the probability simplex $\Delta^{k-1} \coloneqq \{w \in \mathbb{R}^k: \sum_{i=1}^{k}w_i = 1, w_i \ge 0~\forall i \in [k]\}$ by ``sampling'' from the $k$ coordinates. The algorithm does not have full access to $f$; instead, it maintains a \emph{cap vector} $\wbar \in [0, 1]^k$ that specifies the \emph{observable region} of the simplex. We focus on two concrete settings of the problem, where the observable region is either a box or an ellipsoid defined by $\wbar$.

\begin{defn}[Optimization via On-Demand Sampling]
\label{def:OODS}
    $f: \Delta^{k-1} \to [0, 1]$ is an unknown concave function. In each round $t = 1, 2, \ldots, r$, the algorithm chooses cap $\wbar^{(t)} \in [0, 1]^k$ that is lower bounded by $\wbar^{(t-1)}$ entry-wise (if $t > 1$). Then, the algorithm makes arbitrarily many queries to a first-order oracle of $f$---which returns the value and a supergradient---at any $w \in \calO(\wbar^{(t)})$, where $\calO(\wbar) \coloneqq \{w \in \Delta^{k-1}: w_i \le \wbar_i,~\forall i \in [k]\}$ in the box setting, and $\calO(\wbar) \coloneqq \{w \in \Delta^{k-1}: \sum_{i=1}^{k}w_i^2 / \wbar_i \le 1\}$ in the ellipsoid setting. The goal is to find $\hat w \in \Delta^{k-1}$ such that $f(\hat w) \ge \max_{w \in \Delta^{k-1}}f(w) - \eps$ while minimizing the \emph{sample overhead} $\sum_{i=1}^{k}\wbar^{(r)}_i$ and the \emph{round complexity} $r$.
\end{defn}

To see how OODS connects to MDL and on-demand sampling in general, we view the $k$ coordinates as distributions $\D_1, \D_2, \ldots, \D_k$ from which the algorithm may sample. Maximizing $f(w)$ can then be viewed as optimizing the mixing weights in mixture $\sum_{i=1}^{k}w_i\D_i$. The cap $\wbar_i$ is a proxy for and proportional to the number of samples that have already been drawn from $\D_i$. In light of this analogy, the sample overhead $\sum_{i=1}^{k}\wbar^{(r)}_i$ is simply a proxy for the total number of samples that the algorithm draws, while the round complexity $r$ is the number of rounds of on-demand sampling.

The observable region $\calO(\wbar)$ represents the mixing weights $w \in \Delta^{k-1}$ on which $f(w)$ can be accurately estimated using the current dataset specified by $\wbar$. In the box setting, the algorithm is only allowed to query $f(w)$ if $w_i \le \wbar_i$ holds for every $i \in [k]$, which can be viewed as a sufficient condition for the algorithm to obtain an accurate estimate for mixture $\sum_{i=1}^{k}w_i\D_i$ using the $\Theta(\wbar_i)$ samples collected from each $\D_i$. In the ellipsoid setting, we use the more refined condition $\sum_{i=1}^kw_i^2/\wbar_i \le 1$, where the summation is a proxy for the variance in estimating the mixture $\sum_{i=1}^{k}w_i\D_i$ using the datasets. More details on how the box and ellipsoid settings connect to MDL can be found in \prettyref{app:OODS-application}.

\subsection{Overview of Upper Bounds}
\label{sec:OODS-upper-overview}
For the OODS problem, we give a simple algorithm with a $\poly(k)$ round complexity. The algorithm is also termed $\LazyHedge$, as it is almost identical to the agnostic MDL algorithm in \prettyref{sec:agnostic-overview}. We formally define the algorithm (\prettyref{alg:lazy-hedge}) in \prettyref{app:OODS-upper} for completeness. To guarantee a sample overhead of $\widetilde O(s)$, the algorithm takes $\widetilde O(k/s)$ rounds in the box setting and $\widetilde O(\sqrt{k/s})$ rounds in the ellipsoid setting. Here, the $\widetilde O(\cdot)$ notation hides $\polylog(k/\eps)$ factors, where $\eps$ is the accuracy parameter in OODS.

\begin{thm}[Informal version of Theorems \ref{thm:lazy-hedge-box}~and~\ref{thm:lazy-hedge-ellipsoid}]
\label{thm:OODS-upper-informal}
    There is an OODS algorithm with sample overhead $\widetilde O(s)$ that takes $\widetilde O(k/s)$ rounds in box setting and $\widetilde O(\sqrt{k/s})$ rounds in ellipsoid setting.
\end{thm}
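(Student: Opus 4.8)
The plan is to run the $\LazyHedge$ algorithm of \prettyref{sec:agnostic-overview} directly in the first-order oracle model of \prettyref{def:OODS}, with one extra knob: a \emph{floor} on the cap vector tuned to the desired sample overhead $s$. Concretely, perform entropic mirror ascent (the multiplicative-weights update $w^{(t+1)}_i \propto w^{(t)}_i e^{\eta g^{(t)}_i}$) on $f$ for $T = \Theta((\log k)/\eps^2)$ iterations with step size $\eta = \Theta(\eps)$, starting from $w^{(1)} = (1/k,\dots,1/k)$ and $\wbar^{(0)} = (s/k,\dots,s/k)$. At iteration $t$, if $w^{(t)} \in \calO(\wbar^{(t-1)})$ keep the cap; otherwise \emph{open a new round}, setting $\wbar^{(t)}_i = \max\{s/k,\, C\cdot\max_{\tau\le t} w^{(\tau)}_i\}$ for a fixed constant $C>1$. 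In either case $w^{(t)} \in \calO(\wbar^{(t)})$, so the oracle may be queried at $w^{(t)}$ to return its value and a supergradient $g^{(t)}$; crucially, the round complexity $r$ counts only the cap expansions, not the $T$ iterations. Output $\hat w = \argmax_{t\in[T]} f(w^{(t)})$.

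Correctness is the standard online-to-batch argument: since each $w^{(t)}$ is observable, the $g^{(t)}$ are genuine supergradients of the concave $f$, so $f(w^\star) \le f(w^{(t)}) + \langle g^{(t)}, w^\star - w^{(t)}\rangle$ for $w^\star \in \argmax_{w\in\Delta^{k-1}} f(w)$; averaging over $t$ and invoking the Hedge regret bound $\frac1T\sum_t\langle g^{(t)}, w^\star - w^{(t)}\rangle = O(\sqrt{(\log k)/T}) \le \eps$ (with $\ell_\infty$-bounded supergradients, as in the MDL instantiation) gives $f(\hat w) \ge \frac1T\sum_t f(w^{(t)}) \ge f(w^\star) - \eps$, as required. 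For the sample overhead, the single input we need is the analogue of \prettyref{lem:hedge-trajectory-bound} for this $\LazyHedge$, namely $\sum_i \max_t w^{(t)}_i = \widetilde O(1)$, which carries over (in fact deterministically, since the gradients are exact here). Then $\sum_i \wbar^{(r)}_i \le \sum_i\bigl(s/k + C\max_t w^{(t)}_i\bigr) = s + C\cdot\widetilde O(1) = \widetilde O(s)$, assuming $s \ge 1$ without loss of generality.

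It remains to bound the number of cap expansions. In the box setting I would reuse the ``culprit'' argument behind \prettyref{prop:agnostic-box-version}: each new round has a \emph{culprit} index $i$ with $w^{(t)}_i > \wbar^{(t-1)}_i \ge s/k$, so $i$'s historical high exceeds $s/k$; by the trajectory bound at most $\widetilde O(k/s)$ indices can ever do this, and since a culprit's historical high lies in $[s/k,1]$ and jumps by a factor $C$ between consecutive culpable rounds, each index is a culprit at most $O(\log_C(k/s)) = \widetilde O(1)$ times --- hence $\widetilde O(k/s)$ rounds. In the ellipsoid setting I would follow the Type~I / Type~II decomposition in the proof of \prettyref{prop:agnostic-ellipsoid-version}, with the relevant thresholds rescaled by the floor $s/k$: Type~I rounds (a coordinate reaching a fresh historical high, geometrically binned above $\approx \sqrt{s/k}$) are bounded by $\widetilde O(\sqrt{k/s})$ via the trajectory bound exactly as there, and Type~II rounds each accrue $\Omega(1)$ of the $a^2/b$-potential out of a total admissible budget of $\widetilde O(\sqrt{k/s})$, for $\widetilde O(\sqrt{k/s})$ rounds overall.

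The routine parts are the regret/correctness bound and the box counting. The delicate step --- and the main obstacle --- is the ellipsoid round bound: carrying the floor $s/k$ through the potential-based Type~II analysis of \prettyref{prop:agnostic-ellipsoid-version} (tracking how it rescales both the Type~I binning thresholds and the total admissible potential), and verifying that \prettyref{lem:hedge-trajectory-bound} indeed transfers to the exact-gradient $\LazyHedge$ so that it can simultaneously feed the overhead bound, the box count, and the ellipsoid Type~I count.
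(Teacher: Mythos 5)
Your proposal is essentially the paper's own argument: $\LazyHedge$ with lazy cap updates, correctness via the Hedge regret bound, sample overhead via \prettyref{lem:hedge-trajectory-bound}, the culprit argument for the box round count, and the Type~I/Type~II potential decomposition for the ellipsoid round count. The one place you deviate is the tradeoff knob: the paper tunes the margin $C \in [4,k]$ (which, since $w^{(1)} = (1/k,\dots,1/k)$, already gives an effective floor of $C/k$ on the cap after the first iteration) and sets $C \approx s$, whereas you keep $C = O(1)$ and add an explicit floor of $s/k$. These are interchangeable: both make the first cap $\approx s/k$ per coordinate, both give $\sum_i \wbar^{(r)}_i = \widetilde O(s)$, and both rescale the Type~I threshold to $\approx\sqrt{s/k}$ and cap the per-coordinate Type~II potential at $(k/s)(\max_t w^{(t)}_i)^2$, so the resulting bounds coincide. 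One small caveat worth keeping in mind when you flesh this out: the paper's Type~I argument (no index contributes twice to the same geometric bin) and its telescoping $a_j \ge 2a_{j-1}$ both require $C \ge 4$, so "a fixed constant $C>1$" should be, say, $C = 4$. Your remark that the trajectory bound transfers deterministically in the exact-gradient case is consistent with how the paper itself invokes \prettyref{lem:hedge-trajectory-bound} inside \prettyref{lem:lazy-hedge-sample-overhead} and \prettyref{lem:lazy-hedge-round-box}.
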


\paragraph{Hedge with Lazy Updates} We apply the same $\LazyHedge$ strategy as in \prettyref{alg:lazy-hedge-MDL} for MDL. $\LazyHedge$ maintains a weight vector $w^{(t)}$ at each iteration $t$. At the start of iteration $t$, it checks whether $w^{(t)}$ is still in the observable region $\calO(\wbar^{(t-1)})$ specified by the previous cap $\wbar^{(t-1)}$. If so, the cap is left unchanged; otherwise, the cap $\wbar^{(t)}$ is set to $C$ times the entrywise maximum of all weight vectors so far. Then, $\LazyHedge$ queries the first-order oracle to obtain a supergradient $r^{(t)}$ at $w^{(t)}$, and computes the next weight vector $w^{(t+1)}$ using the Hedge update. While $\LazyHedge$ takes $T = \Theta((\log k)/\eps^2)$ \emph{iterations}, its \emph{round complexity} is the number of times the cap is updated, which can be much lower than $T$.

\paragraph{Analysis for the Box Setting} We sketch the analysis for the box setting, and defer the ellipsoid setting to \prettyref{app:OODS-upper}. The standard regret analysis of Hedge shows that $\LazyHedge$ finds an $O(\eps)$-approximate maximum. We prove the following lemma in \prettyref{app:OODS-upper-correctness}.

\begin{lem}
\label{lem:lazy-hedge-correctness}
    $\LazyHedge$ outputs $\hat w \in \Delta^{k-1}$ such that $f(\hat w) \ge \max_{w \in \Delta^{k-1}}f(w) - O(\eps)$.
\end{lem}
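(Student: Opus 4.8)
The plan is to prove \prettyref{lem:lazy-hedge-correctness} by reducing it to the textbook regret guarantee of Hedge combined with the supergradient inequality for concave functions. The first observation is that the weight trajectory $w^{(1)}, w^{(2)}, \ldots, w^{(T)}$ generated by $\LazyHedge$ is exactly the trajectory of plain Hedge, run with step size $\eta$ on the reward vectors $r^{(1)}, \ldots, r^{(T)}$, where $r^{(t)}$ is the supergradient of $f$ returned by the first-order oracle at $w^{(t)}$; the ``laziness'' only governs how the cap vectors $\wbar^{(t)}$ evolve, which controls the sample overhead and round complexity but never feeds back into the $w^{(t)}$'s. Before invoking any regret bound I would first verify that every oracle query is legal, i.e.\ $w^{(t)} \in \calO(\wbar^{(t)})$: if the cap is not updated at iteration $t$, then $w^{(t)} \in \calO(\wbar^{(t-1)}) = \calO(\wbar^{(t)})$ by the update rule; and if it is updated, then $\wbar^{(t)}_i = C\max_{s \le t}w^{(s)}_i \ge C w^{(t)}_i$, so $w^{(t)}_i \le \wbar^{(t)}_i$ in the box setting and $\sum_i (w^{(t)}_i)^2/\wbar^{(t)}_i \le \tfrac1C\sum_i w^{(t)}_i = \tfrac1C < 1$ in the ellipsoid setting. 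In either case the inflation factor $C > 1$ guarantees $w^{(t)}$ is inside the observable region, so $r^{(t)}$ is a genuine supergradient of $f$ at $w^{(t)}$.

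Given this, the second step is to apply the standard multiplicative-weights analysis to the reward sequence $r^{(1)}, \ldots, r^{(T)}$. Since $f:\Delta^{k-1}\to[0,1]$ is concave and (as in the MDL instantiation, where $r^{(t)}_i = \err(\hat h^{(t)}, \D_i) \in [0,1]$) the oracle returns supergradients with bounded entries, the Hedge regret bound gives, for every $w^\star \in \Delta^{k-1}$, that $\sum_{t=1}^T \langle r^{(t)}, w^\star - w^{(t)}\rangle \le \frac{\ln k}{\eta} + O(\eta T)$. Plugging in $\eta = \Theta(\eps)$ and $T = \Theta((\log k)/\eps^2)$, the right-hand side is $O(\eps T)$, so the average per-round regret is $O(\eps)$.

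The third step converts this into a function-value guarantee via concavity and Jensen's inequality. Let $w^\star \in \argmax_{w \in \Delta^{k-1}} f(w)$. The supergradient inequality at each $w^{(t)}$ gives $f(w^\star) \le f(w^{(t)}) + \langle r^{(t)}, w^\star - w^{(t)}\rangle$; summing over $t$, dividing by $T$, and using the regret bound yields $\frac1T\sum_{t=1}^T f(w^{(t)}) \ge f(w^\star) - O(\eps)$. Finally, since $\LazyHedge$ outputs the average iterate $\hat w = \frac1T\sum_{t=1}^T w^{(t)}$ (alternatively, the best iterate $\argmax_t f(w^{(t)})$, which only does better), concavity of $f$ together with Jensen's inequality gives $f(\hat w) \ge \frac1T\sum_{t=1}^T f(w^{(t)}) \ge f(w^\star) - O(\eps) = \max_{w\in\Delta^{k-1}} f(w) - O(\eps)$, as claimed.

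I expect no serious obstacle: this is essentially the standard online-to-offline conversion for Hedge, and the whole point is that the lazy cap updates are transparent to the weight dynamics. The two points that need a little care are (i) confirming the queried points stay in the observable region $\calO(\wbar^{(t)})$, which is exactly what the inflation factor $C > 1$ buys us, and (ii) controlling the error term in the regret bound, which requires the supergradients returned by the oracle to be bounded --- automatic in the MDL application and otherwise part of the oracle's specification.
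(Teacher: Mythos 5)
Your proof is correct and takes essentially the same route as the paper's: the standard Hedge regret bound to control $\sum_t \langle r^{(t)}, w^\star - w^{(t)}\rangle$, the supergradient inequality from concavity, and Jensen's inequality applied to the averaged iterate. The only genuine addition over the paper's proof is your explicit check that the inflation factor $C>1$ keeps $w^{(t)}$ inside the observable region $\calO(\wbar^{(t)})$ in both the box and ellipsoid settings; the paper treats this as obvious but it is a worthwhile sanity check, and your argument for it is correct.
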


Next, we show that the sample overhead of $\LazyHedge$ is low. Recall that $C > 1$ is the margin parameter used in $\LazyHedge$ for the cap updates.

\begin{lem}
\label{lem:lazy-hedge-sample-overhead}
    $\LazyHedge$ has an $O(C\log^8(k/\eps))$ sample overhead.
\end{lem}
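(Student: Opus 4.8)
The plan is to bound the final cap vector directly, since its coordinate sum \emph{is} the sample overhead. Write $w^{(1)}, w^{(2)}, \ldots, w^{(T)}$ for the (deterministic) sequence of Hedge iterates produced by $\LazyHedge$ on the exact first-order oracle, and let $\wbar$ be the cap vector the algorithm ends with. The key structural observation is that every cap update overwrites $\wbar_i$ with $C\cdot\max\{w^{(1)}_i,\ldots,w^{(t)}_i\}$ at the update time $t$, and the cap is frozen between updates. Hence, coordinatewise,
\[
\wbar_i \;\le\; C\cdot\max_{1\le t\le T} w^{(t)}_i
\qquad\text{for every } i\in[k],
\]
because the last update writes $C$ times a historical maximum of $w_i$ and nothing larger is ever written afterwards (and before the first update $\wbar_i = 0$). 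Summing over $i$ gives
\[
\sum_{i=1}^{k}\wbar_i \;\le\; C\sum_{i=1}^{k}\max_{1\le t\le T} w^{(t)}_i .
\]

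It then remains to control $\sum_{i=1}^{k}\max_{t} w^{(t)}_i$, the ``trajectory mass'' of the Hedge dynamics. Here I would invoke the Hedge trajectory bound (the deterministic analogue of \prettyref{lem:hedge-trajectory-bound}, i.e.\ Lemma~3 of \cite{ZZCDL24}): for the prescribed choices $T = \Theta((\log k)/\eps^2)$ and $\eta = \Theta(\eps)$, the iterates satisfy $\sum_{i=1}^{k}\max_{1\le t\le T} w^{(t)}_i = O(\log^8(k/\eps))$. In the OODS setting the oracle is exact, so the reward vectors $r^{(t)}$ driving the Hedge update are the true supergradients and the whole trajectory $w^{(1)},\ldots,w^{(T)}$ is deterministic; the probabilistic statement of \prettyref{lem:hedge-trajectory-bound} thus collapses to this deterministic bound (no failure probability $\delta$, hence $\log^8(k/\eps)$ rather than $\log^8(k/(\eps\delta))$). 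Combining the two displays yields a sample overhead of $O(C\log^8(k/\eps))$, as claimed.

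The routine part is the bookkeeping above; the one place that needs care is checking that the hypotheses of the trajectory analysis of \cite{ZZCDL24} are actually met by $\LazyHedge$ in OODS — in particular that the supergradient coordinates returned by the first-order oracle lie in the range ($[0,1]$-valued rewards) for which that analysis produces a bounded trajectory, which is exactly the regime arising from the MDL reduction. I expect this normalization check to be the main (and only real) obstacle. It is also worth recording explicitly that the bound is \emph{independent of the round complexity}: even though the cap may be updated as many as $T$ times, the sample overhead depends only on the final cap, which is pinned down by the trajectory mass and the margin constant $C$.
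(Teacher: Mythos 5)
Your proposal is correct and follows the same route as the paper's own two-line proof: bound each $\wbar^{(T)}_i$ by $C\cdot\max_{1\le t\le T}w^{(t)}_i$ via the cap-update rule, sum over $i$, and invoke \prettyref{lem:hedge-trajectory-bound} to control the trajectory mass. The only addition is your (reasonable, and consistent with the paper's dropping of $\delta$) commentary on why the probabilistic statement degenerates to a deterministic bound in the exact-oracle OODS setting.
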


\begin{proof}
    $\LazyHedge$ guarantees that $\wbar^{(T)}_i
    \le C \cdot \max_{1 \le t \le T}w^{(t)}_i$ for every $i \in [k]$. By \prettyref{lem:hedge-trajectory-bound}, the sample overhead is $\sum_{i=1}^{k}\wbar^{(T)}_i
    \le C \cdot \sum_{i=1}^{k}\max_{1 \le t \le T}w^{(t)}_i
    =   O(C \cdot \log^8(k/\eps))$.
\end{proof}

It remains to upper bound the round complexity of $\LazyHedge$ in the box setting. The proof of the following lemma resembles and extends that of \prettyref{prop:agnostic-box-version} in the MDL setting.

\begin{lem}
\label{lem:lazy-hedge-round-box}
    $\LazyHedge$ takes $\min\left\{k, O((k/C)\cdot\log^8(k/\eps)\right)\}\cdot O(\log_C k)$ rounds in the box setting.
\end{lem}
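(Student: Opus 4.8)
The plan is to extend the ``culprit'' argument behind \prettyref{prop:agnostic-box-version}, deriving two separate upper bounds on the round complexity (i.e., the number of cap updates) and taking their minimum. Consider a cap update occurring in iteration $t$ of $\LazyHedge$ in the box setting, and say that coordinate $i \in [k]$ is a \emph{culprit} of it if $w^{(t)}_i > \wbar^{(t-1)}_i$; every cap update has at least one culprit, since otherwise $w^{(t)} \in \calO(\wbar^{(t-1)})$ and no update would be triggered. Recall that the cap stays constant between updates, and that an update in iteration $t$ sets $\wbar^{(t)}_i = C\cdot\max\{w^{(1)}_i,\dots,w^{(t)}_i\}$, i.e., $C$ times the running maximum of $w_i$ over iterations $1,\dots,t$.

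First I would bound how often a fixed coordinate can be a culprit. Suppose $i$ is a culprit of cap updates in iterations $t_1 < t_2 < \cdots < t_\ell$. At iteration $t_{j+1}$ the cap in effect, $\wbar^{(t_{j+1}-1)}_i$, was set at some earlier update at iteration $\ge t_j$, hence is at least $C\cdot\max\{w^{(1)}_i,\dots,w^{(t_j)}_i\}$; yet $w^{(t_{j+1})}_i$ exceeds it. So the running maximum of $w_i$ grows by more than a factor of $C$ between consecutive culprit-events. Since it starts at $w^{(1)}_i = 1/k$ and never exceeds $1$, coordinate $i$ is a culprit at most $1 + \log_C k = O(\log_C k)$ times. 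As every cap update has a culprit, the number of cap updates is at most $\sum_{i \in [k]}O(\log_C k) = k\cdot O(\log_C k)$ --- the first bound.

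Next I would sharpen the count using \prettyref{lem:hedge-trajectory-bound}. Iteration $1$ always triggers a cap update (the initial cap is the zero vector), after which $\wbar^{(1)}_i = C\cdot w^{(1)}_i = C/k$ for every $i$, and caps are non-decreasing. Hence any culprit $i$ of a cap update in an iteration $t \ge 2$ satisfies $\max_{1 \le s \le T}w^{(s)}_i \ge w^{(t)}_i > \wbar^{(t-1)}_i \ge C/k$. By \prettyref{lem:hedge-trajectory-bound}, $\sum_{i=1}^{k}\max_{1 \le s \le T}w^{(s)}_i = O(\log^8(k/\eps))$, so at most $O((k/C)\log^8(k/\eps))$ coordinates can ever be such a culprit. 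Since each is a culprit at most $O(\log_C k)$ times, the number of cap updates in iterations $t \ge 2$ is at most $O((k/C)\log^8(k/\eps))\cdot O(\log_C k)$; adding the single iteration-$1$ update gives the second bound, and the lemma follows by taking the minimum of the two.

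The counting is elementary once set up; the one point that needs care is the observation enabling the refinement --- that the forced first cap update already raises every cap entry to $C/k$, so every later culprit has running maximum exceeding $C/k$ and is therefore constrained by \prettyref{lem:hedge-trajectory-bound}. I would also check that \prettyref{lem:hedge-trajectory-bound} applies verbatim here: it concerns only the Hedge weight trajectory $w^{(1)},\dots,w^{(T)}$, which evolves exactly as in the MDL setting, so it carries over unchanged to the OODS box setting.
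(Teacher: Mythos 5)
Your proposal is correct and follows essentially the same argument as the paper's proof: bounding the number of times any single coordinate can be a culprit by $O(\log_C k)$ via the factor-$C$ growth of the running maximum, then bounding the number of distinct coordinates that can be culprits by $\min\{k, O((k/C)\log^8(k/\eps))\}$ using \prettyref{lem:hedge-trajectory-bound} and the fact that $\wbar^{(1)}_i = C/k$. The only cosmetic difference is that you allow multiple culprits per update and account for the forced iteration-1 update separately, whereas the paper designates the smallest index as the unique culprit; both bookkeepings are valid.
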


\begin{proof}[Proof sketch]
    If the cap is updated in the $t$-th iteration of $\LazyHedge$, there exists $i \in [k]$ such that $w^{(t)}_i > \wbar^{(t-1)}_i$. We call such index $i$ the \emph{culprit} of this cap update. By the same argument as in \prettyref{prop:agnostic-box-version}, each index $i$ can be the culprit of at most $O(\log_C k)$ cap updates. It remains to bound the number of indices that become the culprit of at least one cap update. This number is trivially at most $k$. Furthermore, since $\LazyHedge$ sets $\wbar^{(1)} = (C/k, C/k, \ldots, C/k)$ in the first iteration, for index $i$ to become the culprit of a later cap update, $w^{(t)}_i$ must reach $C/k$ for some $t$. By \prettyref{lem:hedge-trajectory-bound}, at most $\widetilde O(1) / (C/k) = \widetilde O(k/C)$ indices can satisfy this. Thus, the round complexity is at most $\min\{k, \widetilde O(k/C)\} \cdot O(\log_C k)$.
\end{proof}

Combining Lemmas \ref{lem:lazy-hedge-correctness}, \ref{lem:lazy-hedge-sample-overhead}~and~\ref{lem:lazy-hedge-round-box} immediately gives the first part of \prettyref{thm:OODS-upper-informal}.

\begin{thm}
\label{thm:lazy-hedge-box}
    For any $C \in [2, k]$, in the box setting, $\LazyHedge$ finds an $O(\eps)$-approximate maximum with an $O(C\log^8(k/\eps))$ sample overhead in $\min\{O(k\log k), O((k/C) \cdot \log^9(k/\eps))\}$ rounds.
\end{thm}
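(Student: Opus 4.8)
The statement is a direct consolidation of the three lemmas just proved, so the plan is simply to invoke them in turn and then clean up the asymptotic expressions. First, correctness: \prettyref{lem:lazy-hedge-correctness} already says that $\LazyHedge$ returns $\hat w \in \Delta^{k-1}$ with $f(\hat w) \ge \max_{w \in \Delta^{k-1}}f(w) - O(\eps)$, which is exactly the ``$O(\eps)$-approximate maximum'' claim, and it holds for any admissible choice of the margin parameter $C$. Second, sample overhead: \prettyref{lem:lazy-hedge-sample-overhead} gives $\sum_{i=1}^{k}\wbar^{(T)}_i \le O(C\log^8(k/\eps))$ directly. So the only thing requiring any manipulation is the round-complexity bound.

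For the round complexity I would start from \prettyref{lem:lazy-hedge-round-box}, which bounds the number of cap updates by $\min\{k,\; O((k/C)\cdot\log^8(k/\eps))\}\cdot O(\log_C k)$, and then simplify using the hypothesis $C \in [2,k]$. Since $C \ge 2$, we have $\log C \ge \log 2 = \Omega(1)$, hence $\log_C k = \frac{\log k}{\log C} = O(\log k)$. Applying this to the first branch of the minimum gives $k \cdot O(\log_C k) = O(k\log k)$. Applying it to the second branch gives $O((k/C)\cdot\log^8(k/\eps)) \cdot O(\log_C k) = O((k/C)\cdot\log^8(k/\eps)\cdot\log k)$; absorbing the extra $\log k$ into the $\log(k/\eps)$ factors (valid since $\log k \le \log(k/\eps)$ for $\eps \le 1$) yields $O((k/C)\cdot\log^9(k/\eps))$. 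Taking the minimum of the two simplified branches gives the stated round bound $\min\{O(k\log k),\; O((k/C)\cdot\log^9(k/\eps))\}$.

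The one genuinely nontrivial point to verify is that a single run of $\LazyHedge$ with one fixed parameter setting satisfies the hypotheses of all three lemmas simultaneously — i.e., that the choice of $T = \Theta((\log k)/\eps^2)$ and $\eta = \Theta(\eps)$ needed by \prettyref{lem:lazy-hedge-correctness} is the same one under which \prettyref{lem:hedge-trajectory-bound} (and hence Lemmas~\ref{lem:lazy-hedge-sample-overhead} and~\ref{lem:lazy-hedge-round-box}) applies, and that $C$ ranging over $[2,k]$ is permitted by each. This is really the main (and only) obstacle, and it is a bookkeeping check rather than a new argument: all three lemmas are stated for $\LazyHedge$ with the same $T,\eta$ coming from \prettyref{lem:hedge-trajectory-bound}, and none of them restricts $C$ beyond $C > 1$, so the range $C \in [2,k]$ is consistent throughout. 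With that consistency in hand, the theorem follows by conjunction of the three conclusions together with the asymptotic simplification above.
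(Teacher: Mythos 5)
Your proposal is correct and matches the paper's approach exactly: the paper also obtains Theorem~\ref{thm:lazy-hedge-box} directly by combining Lemmas~\ref{lem:lazy-hedge-correctness}, \ref{lem:lazy-hedge-sample-overhead}~and~\ref{lem:lazy-hedge-round-box}, with the same asymptotic simplification $\log_C k = O(\log k)$ for $C \ge 2$ and absorption of the extra $\log k$ into $\log(k/\eps)$. Your parameter-consistency check is the right thing to note, and it goes through for the reason you give.
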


\subsection{Overview of Lower Bounds}
The following theorem shows that the $\poly(k/s)$ round complexity in \prettyref{thm:OODS-upper-informal} cannot be avoided when $\eps \le 1 / \poly(k)$. For exponentially small $\eps$, the exponents on $k/s$ also match \prettyref{thm:OODS-upper-informal}.

\begin{thm}[Informal version of Theorems \ref{thm:OODS-lower-large-eps}~and~\ref{thm:OODS-lower-small-eps}]
\label{thm:OODS-lower-informal}
    If $\eps \le O(1/k)$, every OODS algorithm with sample overhead $s$ must take $\Omega(\sqrt{k/s})$ rounds in the box setting and $\Omega((k/s)^{1/4})$ rounds in the ellipsoid setting. If $\eps \le e^{-\Omega(k)}$, every OODS algorithm with sample overhead $s$ must take $\Omega(k/s)$ rounds in the box setting and $\Omega(\sqrt{k/s})$ rounds in the ellipsoid setting.
\end{thm}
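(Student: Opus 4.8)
I would prove all four bounds from a single adversarial template, generalizing the ``difficulty levels'' idea behind the realizable lower bound of \prettyref{thm:realizable-lower-informal} to the abstract OODS setting. The plan has three ingredients: a distribution over hard concave instances with a nested multi-scale structure; a simulator for the first-order oracle that keeps the instance ambiguous until the algorithm has ``paid'' for the relevant coordinates; and a round-by-round accounting showing that each round lets the algorithm resolve only one more scale of the hierarchy, so that too few rounds (relative to the overhead budget $s$) leave the optimizer undetermined.

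\emph{The hard family.} An instance is specified by a nested chain of coordinate sets $[k] = A_0 \supseteq A_1 \supseteq \cdots \supseteq A_L$ drawn uniformly at random subject to a fixed size profile, together with an increasing sequence of scales $\sigma_0 < \sigma_1 < \cdots < \sigma_L \le 1$. The concave function $f = f_{A_1, \dots, A_L}$ is assembled as a minimum of affine pieces so that: (i) $f$ is concave and takes values in $[0,1]$; (ii) every $\eps$-maximizer of $f$ must place $\Omega(1)$ total weight, spread roughly evenly, on the deepest set $A_L$ --- which in the box geometry forces the cap to satisfy $\wbar_i = \Omega(1/|A_L|)$ on a constant fraction of $A_L$; and (iii) crucially, the membership of a coordinate $i$ in $A_{\ell+1}$ versus $A_\ell \setminus A_{\ell+1}$ affects the value of $f$, and the choice of a valid supergradient, only at weight vectors $w$ with $w_i \ge \sigma_\ell$. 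The scales are chosen to blow up by a constant factor between consecutive levels; this is what makes the hierarchy genuinely sequential, since within a single round the cap is fixed and, not yet knowing $A_{\ell+1}$, the algorithm would have to raise the cap to $\sigma_{\ell+1}$ over all of the (known) set $A_\ell$ to ``look ahead'' one level, paying a constant-factor overhead penalty per level skipped. The ellipsoid version uses the same construction with the threshold in (iii) relaxed from $\sigma_\ell$ to $\sigma_\ell^2$-type quantities, because $\sum_i w_i^2/\wbar_i \le 1$ lets a single coordinate reach $w_i \approx \sqrt{\wbar_i}$; tracing this square through the accounting is exactly what square-roots the ellipsoid bounds.

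\emph{Simulator and round accounting.} We maintain the family of chains consistent with all answers so far, replying to each value/supergradient query with a supergradient shared by all survivors; property (iii) guarantees that an answer can reveal ``level-$\ell$'' information about coordinate $i$ only after $\wbar_i$ has crossed $\sigma_\ell$. Since caps are monotone and the total overhead is at most $s$, only $\le s/\sigma_\ell$ coordinates can ever be probed at level $\ell$, and balancing the size/scale profile so that resolving level $\ell$ requires probing $\Omega(|A_\ell|)$ coordinates of $A_\ell$ at scale $\sigma_\ell$ shows that within overhead $s$ the algorithm resolves at most one additional level per round, except at a constant-factor-per-skipped-level overhead cost it cannot afford. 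Hence after $r$ rounds at least $L - O(r)$ levels remain unresolved, so the true $A_L$ is --- conditioned on the transcript --- distributed almost uniformly over a huge family, and the algorithm's output puts the mass required by (ii) on $A_L$ with probability $o(1)$, contradicting $(\eps,\delta)$-correctness. Instantiating the size/scale/value-gap profiles gives the four bounds: with roughly uniform value gaps of order $1/L^2$ one can afford $L = \widetilde\Theta(\sqrt{k/s})$ levels provided only $\eps \le O(1/k)$, while with geometrically decaying value gaps one can push to $L = \widetilde\Theta(k/s)$ levels at the price of needing $\eps \le e^{-\Omega(k)}$; the square-rooted analogues hold in the ellipsoid geometry. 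We calibrate the profiles so that $L$ levels are in fact resolvable with overhead $O(s)$ in $O(L)$ rounds (matching \prettyref{thm:OODS-upper-informal}), which keeps the statement non-vacuous.

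\emph{Main obstacle.} The heart of the argument is properties (i)--(iii): constructing a \emph{concave} function whose first-order oracle is ``locally blind'', i.e.\ whose value and some valid supergradient at every feasible point under a small cap are invariant across exponentially many chains, while the global maximum is nonetheless well separated and forced onto the deepest set. Concave functions are dangerously informative under first-order access --- a single query to a smooth function pins down its gradient everywhere --- so the construction must be piecewise affine and non-smooth throughout the relevant region and must exploit the smallness of feasible weights there; getting the affine pieces together with the size, scale, and value-gap profiles to simultaneously certify concavity, the $\eps$-separation of the optimum, the per-round progress bound, and the calibration against the upper bound is where essentially all the technical work lies. A secondary subtlety is the ellipsoid analysis, where the $w_i \approx \sqrt{\wbar_i}$ slack must be propagated carefully through the simulator invariant and the accounting to land on exactly the square-rooted thresholds.
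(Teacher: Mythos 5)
Your high-level plan---an adversarial distribution over hard instances, an oracle that reveals information gradually as the cap grows, and a per-round accounting of progress---is in the right spirit, and you correctly anticipate both the distinction between polynomially-small and exponentially-small value gaps (matching the $\eps\le O(1/k)$ vs.\ $\eps\le e^{-\Omega(k)}$ split) and the $w_i\approx\sqrt{\wbar_i}$ slack that square-roots the ellipsoid thresholds. However, there is a genuine gap: you never actually construct the concave function with properties (i)--(iii), and you explicitly flag this as ``where essentially all the technical work lies.'' That is not a detail to defer; the entire lower bound rests on it, and your nested-set design $A_0\supseteq A_1\supseteq\cdots\supseteq A_L$ is substantially harder to realize than what is needed. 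Requiring a \emph{concave} $f$ whose value and supergradient at all low-cap points are simultaneously invariant across exponentially many consistent chains of \emph{sets} is a strong condition, and it is not clear your template can satisfy concavity, $\eps$-separation, and the round-accounting invariant all at once.

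The paper's construction sidesteps this difficulty by flattening the hierarchy: instead of nested sets it draws $m$ random distinct \emph{critical indices} $\istar_1,\dots,\istar_m$ and sets $f(w)=\min_{j\in[m]}\left(a_j w_{\istar_j}+b_j\right)$, with $a_j=1,\ b_j=j/m^2$ in the large-$\eps$ regime and $a_j=2^{-j},\ b_j=(1-2^{-j})/m$ in the small-$\eps$ regime. Concavity is automatic (min of affine), and the ``locally blind'' property is a one-line calculation (Lemmas \ref{lem:oracle-feedback-large-eps} and \ref{lem:oracle-feedback-small-eps}): if $w_{\istar_{j_0}}$ is small for some $j_0\le j$, the $j$-th term cannot win the minimum, so the oracle's answer is determined by $\istar_1,\dots,\istar_j$ alone. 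The sequential structure that you attribute to a geometric hierarchy of scales instead emerges directly from the ordering of the offsets $b_j$: you cannot probe $\istar_{j+1}$ without first lifting $w_{\istar_1},\dots,\istar_j$, and since the sample-overhead budget caps how many coordinates can be lifted above the relevant threshold (at most $m^2 s$ in the box case, $m^4 s$ in the ellipsoid case), a randomly drawn critical index escapes the algorithm's cap with constant probability, so $\Ex[N^{(r)}]\le 2r$ and the algorithm cannot pin down $m/2$ critical indices in $r\ll m$ rounds. In short, the paper does not generalize the realizable ``difficulty-levels'' construction to OODS as you propose; it replaces the hierarchy with a flat chain of min-of-affine terms, which is what makes the concavity and the oracle-invariance properties come for free. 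You would need to either fill in a working version of your nested-set construction or switch to this simpler template.
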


As a corollary, if $\eps \le O(1/k)$, every OODS algorithm has either a $\poly(k)$ round complexity or a sample overhead that is almost linear in $k$. While these lower bounds do not directly imply lower bounds for agnostic MDL, they show that further improving the round complexity in \prettyref{prop:upper-agnostic} requires a substantially different approach. Roughly speaking, the MDL algorithm of~\cite{ZZCDL24} fits into the OODS framework because: (1) It uses samples in a restricted way: finding an ERM $\hat h$ on mixture $\sum_{i=1}^{k}w_i\D_i$ for some weight vector $w$ in an ``observable region'', and estimating the error of $\hat h$ on every $\D_i$; (2) By solving the MDL instance, it finds a ``hard'' mixture $\sum_{i=1}^{k}\hat w_i\D_i$ on which the best hypothesis in $\calH$ has an error close to the minimax value. The first property ensures that the MDL algorithm requires no more information than what the first-order oracle provides in OODS. The second ensures that the MDL algorithm implicitly solves the OODS problem. \prettyref{thm:OODS-lower-informal} then suggests that every algorithm with the two properties faces an inherent obstacle in achieving a sub-polynomial round complexity.

We sketch the proof of \prettyref{thm:OODS-lower-informal} in the box setting and the $\eps \le O(1/k)$ regime; the formal proofs are deferred to \prettyref{app:OODS-lower}. We consider the objective function $f(w) \coloneqq \min_{j \in [m]}\{w_{\istar_j} + j/m^2\}$, where $\istar_1, \istar_2, \ldots, \istar_m \in [k]$ are $m \le k$ different \emph{critical indices} sampled uniformly at random. The lower bound builds on two observations on $f$: (1) (\prettyref{lem:approx-maxima-large-eps}) If $\eps \le O(1/k)$, every $\eps$-approximate maximum must put an $\Omega(1/m)$ weight on at least half of the critical indices $\istar_1, \ldots, \istar_m$; (2) (\prettyref{lem:oracle-feedback-large-eps}) Unless we put a weight of $> 1/m^2$ on each of $\istar_1, \istar_2, \ldots, \istar_j$, the value of $f(w)$ is determined by the first $j$ terms in the minimum. Intuitively, unless we already ``know'' the first $j$ critical indices, we cannot learn the values of $\istar_{j+1}, \ldots, \istar_m$ from the first-order oracle.

There is a natural $m$-round algorithm that solves the instance above. In the first round, we query the uniform weight vector $w = (1/k, 1/k, \ldots, 1/k)$ to learn the value of $\istar_1$. In the second round, we query $f$ on some $w$ with $w_{\istar_1} \gg 1/m^2$, thereby learning the value of $\istar_2$. Repeating this $m$ times recovers all the critical indices. One might hope to be ``more clever'' and learn many critical indices in a round. For example, the algorithm might put a cap of $\gg 1/m^2$ on several coordinates in the first round, in the hope of hitting more than one indices in $\istar_1, \istar_2, \ldots$. However, if the algorithm has a sample overhead of $s$, only $O(m^2s)$ such guesses can be made. In particular, assuming $m \ll \sqrt{k/s}$, the $O(m^2s) \ll k$ guesses only cover a tiny fraction of the indices. Thus, over the uniform randomness in $\istar$, the algorithm learns only $O(1)$ critical indices within each round in expectation.

\section{Discussion}
In this work, we formalized the trade-off between sample and round complexities in multi-distribution learning (MDL). For the realizable case, we obtained a nearly tight characterization: when the learner is allowed $r$ rounds of sampling, the optimal sample complexity is proportional to $k^{\Theta(1/r)}$. In particular, a constant number of rounds suffice to achieve a sublinear dependence on $k$, whereas nearly $\log k$ rounds are necessary to reach near-optimal sample complexity.

For the more general agnostic setting, we introduced the \emph{optimization via on-demand sampling} (OODS) problem as an abstraction of the common approach shared by many recent MDL algorithms. We then leveraged the intuition behind the OODS algorithms to obtain an improved round complexity of $\widetilde O(\sqrt{k})$. On the negative side, any MDL algorithm based on the OODS approach must take $\poly(k)$ rounds to match the near-optimal sample complexity of $\widetilde O((d+k)/\eps^2)$.

To further understand the landscape of sample-adaptivity trade-off in agnostic MDL, we highlight the following concrete open question:

\begin{openquestion}
    Does there exist an agnostic MDL algorithm that simultaneously achieves sample complexity $\widetilde O\left(\frac{d + k}{\eps^2}\right)$ and round complexity $\polylog(k/\eps)$?
\end{openquestion}

Our results on OODS may shed light on both directions. The lower bounds for OODS suggest that any such algorithm must leverage the data in a more sophisticated manner---beyond invoking an ERM oracle or merely evaluating empirical risks. Notably, a recent algorithm of Peng~\cite{DBLP:conf/colt/Peng24} is one such candidate: despite having a high round complexity (cf.~\prettyref{table:all-results}), it uses the collected data to construct a ``refined'' hypothesis class and might therefore circumvent the OODS lower bounds.

Conversely, to establish a round lower bound for sample-optimal MDL algorithms, a natural approach would be to recast our hard instances for OODS as MDL instances. The key challenge, however, lies in ensuring that \emph{every} MDL algorithm gains no more information from the samples than OODS-based learners do. Developing such a reduction would establish a deeper connection between MDL and OODS, and represent a major step toward a complete understanding of the sample-adaptivity trade-off.

\subsection*{Acknowledgments}
This work was supported in part by the National Science Foundation under grant CCF-2145898, by the Office
of Naval Research under grant N00014-24-1-2159, an Alfred
P. Sloan fellowship, and a Schmidt Science AI2050 fellowship. 
Any opinions, findings, and conclusions or recommendations expressed in this material are
those of the author(s) and do not necessarily reflect the views of sponsoring agencies.

\bibliographystyle{alpha}
\bibliography{main}

\newcommand{\etalchar}[1]{$^{#1}$}
\begin{thebibliography}{KNRW18}

\bibitem[AAAK17]{AAAK17}
Arpit Agarwal, Shivani Agarwal, Sepehr Assadi, and Sanjeev Khanna.
\newblock Learning with limited rounds of adaptivity: Coin tossing, multi-armed
  bandits, and ranking from pairwise comparisons.
\newblock In Satyen Kale and Ohad Shamir, editors, {\em Proceedings of the 30th
  Conference on Learning Theory, {COLT} 2017, Amsterdam, The Netherlands, 7-10
  July 2017}, volume~65 of {\em Proceedings of Machine Learning Research},
  pages 39--75. {PMLR}, 2017.

\bibitem[AHZ23]{DBLP:conf/colt/AwasthiH023}
Pranjal Awasthi, Nika Haghtalab, and Eric Zhao.
\newblock Open problem: The sample complexity of multi-distribution learning
  for {VC} classes.
\newblock In Gergely Neu and Lorenzo Rosasco, editors, {\em The Thirty Sixth
  Annual Conference on Learning Theory, {COLT} 2023, 12-15 July 2023,
  Bangalore, India}, volume 195 of {\em Proceedings of Machine Learning
  Research}, pages 5943--5949. {PMLR}, 2023.

\bibitem[BHPQ17]{BHPQ17}
Avrim Blum, Nika Haghtalab, Ariel~D. Procaccia, and Mingda Qiao.
\newblock Collaborative {PAC} learning.
\newblock In Isabelle Guyon, Ulrike von Luxburg, Samy Bengio, Hanna~M. Wallach,
  Rob Fergus, S.~V.~N. Vishwanathan, and Roman Garnett, editors, {\em Advances
  in Neural Information Processing Systems 30: Annual Conference on Neural
  Information Processing Systems 2017, December 4-9, 2017, Long Beach, CA,
  {USA}}, pages 2392--2401, 2017.

\bibitem[BRS19]{BRS19}
Eric Balkanski, Aviad Rubinstein, and Yaron Singer.
\newblock An optimal approximation for submodular maximization under a matroid
  constraint in the adaptive complexity model.
\newblock In Moses Charikar and Edith Cohen, editors, {\em Proceedings of the
  51st Annual {ACM} {SIGACT} Symposium on Theory of Computing, {STOC} 2019,
  Phoenix, AZ, USA, June 23-26, 2019}, pages 66--77. {ACM}, 2019.

\bibitem[BS18]{DBLP:conf/stoc/BalkanskiS18}
Eric Balkanski and Yaron Singer.
\newblock The adaptive complexity of maximizing a submodular function.
\newblock In Ilias Diakonikolas, David Kempe, and Monika Henzinger, editors,
  {\em Proceedings of the 50th Annual {ACM} {SIGACT} Symposium on Theory of
  Computing, {STOC} 2018, Los Angeles, CA, USA, June 25-29, 2018}, pages
  1138--1151. {ACM}, 2018.

\bibitem[CPP22]{DBLP:conf/focs/ChenPP22}
Xi~Chen, Christos~H. Papadimitriou, and Binghui Peng.
\newblock Memory bounds for continual learning.
\newblock In {\em 63rd {IEEE} Annual Symposium on Foundations of Computer
  Science, {FOCS} 2022, Denver, CO, USA, October 31 - November 3, 2022}, pages
  519--530. {IEEE}, 2022.

\bibitem[CQ19a]{CQ19b}
Chandra Chekuri and Kent Quanrud.
\newblock Parallelizing greedy for submodular set function maximization in
  matroids and beyond.
\newblock In Moses Charikar and Edith Cohen, editors, {\em Proceedings of the
  51st Annual {ACM} {SIGACT} Symposium on Theory of Computing, {STOC} 2019,
  Phoenix, AZ, USA, June 23-26, 2019}, pages 78--89. {ACM}, 2019.

\bibitem[CQ19b]{CQ19a}
Chandra Chekuri and Kent Quanrud.
\newblock Submodular function maximization in parallel via the multilinear
  relaxation.
\newblock In Timothy~M. Chan, editor, {\em Proceedings of the Thirtieth Annual
  {ACM-SIAM} Symposium on Discrete Algorithms, {SODA} 2019, San Diego,
  California, USA, January 6-9, 2019}, pages 303--322. {SIAM}, 2019.

\bibitem[CZZ18]{DBLP:conf/nips/Chen0Z18}
Jiecao Chen, Qin Zhang, and Yuan Zhou.
\newblock Tight bounds for collaborative {PAC} learning via multiplicative
  weights.
\newblock In Samy Bengio, Hanna~M. Wallach, Hugo Larochelle, Kristen Grauman,
  Nicol{\`{o}} Cesa{-}Bianchi, and Roman Garnett, editors, {\em Advances in
  Neural Information Processing Systems 31: Annual Conference on Neural
  Information Processing Systems 2018, NeurIPS 2018, December 3-8, 2018,
  Montr{\'{e}}al, Canada}, pages 3602--3611, 2018.

\bibitem[DQ24]{DQ24}
Yuyang Deng and Mingda Qiao.
\newblock Collaborative learning with different labeling functions.
\newblock In {\em Forty-first International Conference on Machine Learning,
  {ICML} 2024, Vienna, Austria, July 21-27, 2024}, pages 10530--10552.
  OpenReview.net, 2024.

\bibitem[EN19]{EN19}
Alina Ene and Huy~L. Nguyen.
\newblock Submodular maximization with nearly-optimal approximation and
  adaptivity in nearly-linear time.
\newblock In Timothy~M. Chan, editor, {\em Proceedings of the Thirtieth Annual
  {ACM-SIAM} Symposium on Discrete Algorithms, {SODA} 2019, San Diego,
  California, USA, January 6-9, 2019}, pages 274--282. {SIAM}, 2019.

\bibitem[ENV19]{ENV19}
Alina Ene, Huy~L. Nguyen, and Adrian Vladu.
\newblock Submodular maximization with matroid and packing constraints in
  parallel.
\newblock In Moses Charikar and Edith Cohen, editors, {\em Proceedings of the
  51st Annual {ACM} {SIGACT} Symposium on Theory of Computing, {STOC} 2019,
  Phoenix, AZ, USA, June 23-26, 2019}, pages 90--101. {ACM}, 2019.

\bibitem[FMZ19]{FMZ19}
Matthew Fahrbach, Vahab~S. Mirrokni, and Morteza Zadimoghaddam.
\newblock Submodular maximization with nearly optimal approximation, adaptivity
  and query complexity.
\newblock In Timothy~M. Chan, editor, {\em Proceedings of the Thirtieth Annual
  {ACM-SIAM} Symposium on Discrete Algorithms, {SODA} 2019, San Diego,
  California, USA, January 6-9, 2019}, pages 255--273. {SIAM}, 2019.

\bibitem[GHRZ19]{GHRZ19}
Zijun Gao, Yanjun Han, Zhimei Ren, and Zhengqing Zhou.
\newblock Batched multi-armed bandits problem.
\newblock In Hanna~M. Wallach, Hugo Larochelle, Alina Beygelzimer, Florence
  d'Alch{\'{e}}{-}Buc, Emily~B. Fox, and Roman Garnett, editors, {\em Advances
  in Neural Information Processing Systems 32: Annual Conference on Neural
  Information Processing Systems 2019, NeurIPS 2019, December 8-14, 2019,
  Vancouver, BC, Canada}, pages 501--511, 2019.

\bibitem[Han16]{DBLP:journals/jmlr/Hanneke16a}
Steve Hanneke.
\newblock Refined error bounds for several learning algorithms.
\newblock {\em J. Mach. Learn. Res.}, 17:135:1--135:55, 2016.

\bibitem[HJZ22]{DBLP:conf/nips/HaghtalabJ022}
Nika Haghtalab, Michael~I. Jordan, and Eric Zhao.
\newblock On-demand sampling: Learning optimally from multiple distributions.
\newblock In Sanmi Koyejo, S.~Mohamed, A.~Agarwal, Danielle Belgrave, K.~Cho,
  and A.~Oh, editors, {\em Advances in Neural Information Processing Systems
  35: Annual Conference on Neural Information Processing Systems 2022, NeurIPS
  2022, New Orleans, LA, USA, November 28 - December 9, 2022}, 2022.

\bibitem[HJZ23]{HaghtalabJ023}
Nika Haghtalab, Michael~I. Jordan, and Eric Zhao.
\newblock A unifying perspective on multi-calibration: Game dynamics for
  multi-objective learning.
\newblock In Alice Oh, Tristan Naumann, Amir Globerson, Kate Saenko, Moritz
  Hardt, and Sergey Levine, editors, {\em Advances in Neural Information
  Processing Systems 36: Annual Conference on Neural Information Processing
  Systems 2023, NeurIPS 2023, New Orleans, LA, USA, December 10 - 16, 2023},
  2023.

\bibitem[JJNZ16]{JJNZ16}
Kwang{-}Sung Jun, Kevin~G. Jamieson, Robert~D. Nowak, and Xiaojin Zhu.
\newblock Top arm identification in multi-armed bandits with batch arm pulls.
\newblock In Arthur Gretton and Christian~C. Robert, editors, {\em Proceedings
  of the 19th International Conference on Artificial Intelligence and
  Statistics, {AISTATS} 2016, Cadiz, Spain, May 9-11, 2016}, volume~51 of {\em
  {JMLR} Workshop and Conference Proceedings}, pages 139--148. JMLR.org, 2016.

\bibitem[JYT{\etalchar{+}}24]{JYTXX24}
Tianyuan Jin, Yu~Yang, Jing Tang, Xiaokui Xiao, and Pan Xu.
\newblock Optimal batched best arm identification.
\newblock In Amir Globersons, Lester Mackey, Danielle Belgrave, Angela Fan,
  Ulrich Paquet, Jakub~M. Tomczak, and Cheng Zhang, editors, {\em Advances in
  Neural Information Processing Systems 38: Annual Conference on Neural
  Information Processing Systems 2024, NeurIPS 2024, Vancouver, BC, Canada,
  December 10 - 15, 2024}, 2024.

\bibitem[JZZ25]{JZZ25}
Tianyuan Jin, Qin Zhang, and Dongruo Zhou.
\newblock Breaking the $\log (1/{\Delta}_2)$ barrier: Better batched best arm
  identification with adaptive grids.
\newblock In {\em International Conference on Learning Representations (ICLR)},
  2025.

\bibitem[KNRW18]{DBLP:conf/icml/KearnsNRW18}
Michael~J. Kearns, Seth Neel, Aaron Roth, and Zhiwei~Steven Wu.
\newblock Preventing fairness gerrymandering: Auditing and learning for
  subgroup fairness.
\newblock In Jennifer~G. Dy and Andreas Krause, editors, {\em Proceedings of
  the 35th International Conference on Machine Learning, {ICML} 2018,
  Stockholmsm{\"{a}}ssan, Stockholm, Sweden, July 10-15, 2018}, volume~80 of
  {\em Proceedings of Machine Learning Research}, pages 2569--2577. {PMLR},
  2018.

\bibitem[Lar23]{DBLP:conf/colt/Larsen23}
Kasper~Green Larsen.
\newblock Bagging is an optimal {PAC} learner.
\newblock In Gergely Neu and Lorenzo Rosasco, editors, {\em The Thirty Sixth
  Annual Conference on Learning Theory, {COLT} 2023, 12-15 July 2023,
  Bangalore, India}, volume 195 of {\em Proceedings of Machine Learning
  Research}, pages 450--468. {PMLR}, 2023.

\bibitem[LLV20]{DBLP:conf/stoc/LiLV20}
Wenzheng Li, Paul Liu, and Jan Vondr{\'{a}}k.
\newblock A polynomial lower bound on adaptive complexity of submodular
  maximization.
\newblock In Konstantin Makarychev, Yury Makarychev, Madhur Tulsiani, Gautam
  Kamath, and Julia Chuzhoy, editors, {\em Proceedings of the 52nd Annual {ACM}
  {SIGACT} Symposium on Theory of Computing, {STOC} 2020, Chicago, IL, USA,
  June 22-26, 2020}, pages 140--152. {ACM}, 2020.

\bibitem[MRT18]{MRT18}
M.~Mohri, A.~Rostamizadeh, and A.~Talwalkar.
\newblock {\em Foundations of Machine Learning, second edition}.
\newblock Adaptive Computation and Machine Learning series. MIT Press, 2018.

\bibitem[MSS19]{DBLP:conf/icml/MohriSS19}
Mehryar Mohri, Gary Sivek, and Ananda~Theertha Suresh.
\newblock Agnostic federated learning.
\newblock In Kamalika Chaudhuri and Ruslan Salakhutdinov, editors, {\em
  Proceedings of the 36th International Conference on Machine Learning, {ICML}
  2019, 9-15 June 2019, Long Beach, California, {USA}}, volume~97 of {\em
  Proceedings of Machine Learning Research}, pages 4615--4625. {PMLR}, 2019.

\bibitem[NZ18]{DBLP:conf/nips/NguyenZ18}
Huy~L. Nguyen and Lydia Zakynthinou.
\newblock Improved algorithms for collaborative {PAC} learning.
\newblock In Samy Bengio, Hanna~M. Wallach, Hugo Larochelle, Kristen Grauman,
  Nicol{\`{o}} Cesa{-}Bianchi, and Roman Garnett, editors, {\em Advances in
  Neural Information Processing Systems 31: Annual Conference on Neural
  Information Processing Systems 2018, NeurIPS 2018, December 3-8, 2018,
  Montr{\'{e}}al, Canada}, pages 7642--7650, 2018.

\bibitem[Pen24]{DBLP:conf/colt/Peng24}
Binghui Peng.
\newblock The sample complexity of multi-distribution learning.
\newblock In Shipra Agrawal and Aaron Roth, editors, {\em The Thirty Seventh
  Annual Conference on Learning Theory, June 30 - July 3, 2023, Edmonton,
  Canada}, volume 247 of {\em Proceedings of Machine Learning Research}, pages
  4185--4204. {PMLR}, 2024.

\bibitem[Qia18]{Qiao18}
Mingda Qiao.
\newblock Do outliers ruin collaboration?
\newblock In Jennifer~G. Dy and Andreas Krause, editors, {\em Proceedings of
  the 35th International Conference on Machine Learning, {ICML} 2018,
  Stockholmsm{\"{a}}ssan, Stockholm, Sweden, July 10-15, 2018}, volume~80 of
  {\em Proceedings of Machine Learning Research}, pages 4177--4184. {PMLR},
  2018.

\bibitem[RY21]{DBLP:conf/icml/RothblumY21}
Guy~N. Rothblum and Gal Yona.
\newblock Multi-group agnostic {PAC} learnability.
\newblock In Marina Meila and Tong Zhang, editors, {\em Proceedings of the 38th
  International Conference on Machine Learning, {ICML} 2021, 18-24 July 2021,
  Virtual Event}, volume 139 of {\em Proceedings of Machine Learning Research},
  pages 9107--9115. {PMLR}, 2021.

\bibitem[Sch90]{DBLP:journals/ml/Schapire90}
Robert~E. Schapire.
\newblock The strength of weak learnability.
\newblock {\em Mach. Learn.}, 5:197--227, 1990.

\bibitem[SF12]{10.5555/2207821}
Robert~E. Schapire and Yoav Freund.
\newblock {\em Boosting: Foundations and Algorithms}.
\newblock The MIT Press, 2012.

\bibitem[SKHL20]{DBLP:conf/iclr/SagawaKHL20}
Shiori Sagawa, Pang~Wei Koh, Tatsunori~B. Hashimoto, and Percy Liang.
\newblock Distributionally robust neural networks.
\newblock In {\em 8th International Conference on Learning Representations,
  {ICLR} 2020, Addis Ababa, Ethiopia, April 26-30, 2020}. OpenReview.net, 2020.

\bibitem[SRC24]{shen2024data}
Judy~Hanwen Shen, Inioluwa~Deborah Raji, and Irene~Y. Chen.
\newblock The data addition dilemma.
\newblock In {\em Machine Learning for Healthcare Conference}, 2024.

\bibitem[TH22]{DBLP:conf/icml/Tosh022}
Christopher~J. Tosh and Daniel Hsu.
\newblock Simple and near-optimal algorithms for hidden stratification and
  multi-group learning.
\newblock In Kamalika Chaudhuri, Stefanie Jegelka, Le~Song, Csaba
  Szepesv{\'{a}}ri, Gang Niu, and Sivan Sabato, editors, {\em International
  Conference on Machine Learning, {ICML} 2022, 17-23 July 2022, Baltimore,
  Maryland, {USA}}, volume 162 of {\em Proceedings of Machine Learning
  Research}, pages 21633--21657. {PMLR}, 2022.

\bibitem[XPD{\etalchar{+}}23]{doremi}
Sang~Michael Xie, Hieu Pham, Xuanyi Dong, Nan Du, Hanxiao Liu, Yifeng Lu,
  Percy~S Liang, Quoc~V Le, Tengyu Ma, and Adams~Wei Yu.
\newblock Doremi: Optimizing data mixtures speeds up language model
  pretraining.
\newblock In {\em Advances in Neural Information Processing Systems}, pages
  69798--69818, 2023.

\bibitem[ZZC{\etalchar{+}}24]{ZZCDL24}
Zihan Zhang, Wenhao Zhan, Yuxin Chen, Simon~S. Du, and Jason~D. Lee.
\newblock Optimal multi-distribution learning.
\newblock In Shipra Agrawal and Aaron Roth, editors, {\em The Thirty Seventh
  Annual Conference on Learning Theory, June 30 - July 3, 2023, Edmonton,
  Canada}, volume 247 of {\em Proceedings of Machine Learning Research}, pages
  5220--5223. {PMLR}, 2024.

\end{thebibliography}

\newpage
\appendix

\section{Upper Bound for Realizable MDL}
\label{app:realizable-upperbound}

In this section, we prove the following upper bound on sample-adaptivity tradeoff for realizable multi-distribution learning:

\begin{thm}
\label{thm:realizable-upper}
    For any $k$ unknown distributions $D_1,\dots, D_k$, any class $\calH$ such that $\OPT=0$, and any target error $\eps$ and number of rounds $r\leq \log(k)$, the output predictor $F$ of \prettyref{alg:realizable} satisfies with probability at least $1-\delta$, 
    \[\forall 1\leq j \leq k: \err(F, D_j) \leq \eps.\]
    Furthermore, \prettyref{alg:realizable} uses $r$ adaptive rounds and has a sample complexity of \[O\inparen{k^{2/r}\log(k)\frac{d}{\eps} +\frac{k\log(k)}{\eps}\log\inparen{\frac{k}{\delta}} }.\]
\end{thm}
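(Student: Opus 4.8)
I would first reduce the statement to a deterministic claim by conditioning on two high-probability events, and then run an AdaBoost-style margin analysis over the $k$ distributions (viewed as the ``examples''), followed by a ``majority vote of many mildly-accurate hypotheses'' argument to pass from the margin to the final error. Since $\OPT=0$, fix a perfect $h^\star\in\calH$; then $\err(h^\star,q_t)=0$ for every mixture $q_t$, so each invocation of $\bbA$ is on a realizable instance. As $\widetilde S_t\sim q_t^{\otimes m}$ is drawn fresh conditioned on the history through round $t-1$, the PAC guarantee gives $\err(h_t,q_t)\le\eps_\bbA=\tau p/4$ with probability $\ge 1-\delta_\bbA$; a union bound over the $r$ rounds (with $\delta_\bbA=\delta/(2r)$) makes this hold for all $t$ with probability $\ge 1-\delta/2$. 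Separately, for each $(t,j)$, $S_{j,t}$ is drawn fresh after $h_t$ is fixed, so a Chernoff bound with $n=\Theta\big(\frac1\tau\log(rk/\delta)\big)$ shows (i) $\err(h_t,D_j)\le\tau/4\Rightarrow\err(h_t,S_{j,t})\le\tau/2$, and (ii) $\err(h_t,D_j)>\tau\Rightarrow\err(h_t,S_{j,t})>\tau/2$; a union bound over all $rk$ pairs makes this hold everywhere with probability $\ge1-\delta/2$. Condition on the intersection (probability $\ge1-\delta$); everything below is deterministic.

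\textbf{Boosting with a margin.} Write $b_{j,t}\coloneqq\1{\err(h_t,S_{j,t})>\tau/2}$, so the update reads $q_{t+1}(j)=q_t(j)e^{\alpha(2b_{j,t}-1)}/Z_t$ and hence $q_{r+1}(j)=\frac1k e^{\alpha(2B_j-r)}/\prod_tZ_t$ with $B_j\coloneqq\sum_tb_{j,t}$. First, by event (i) we have $\{j:b_{j,t}=1\}\subseteq\{j:\err(h_t,D_j)>\tau/4\}$, and Markov applied to $\err(h_t,q_t)\le\tau p/4$ gives $\beta_t\coloneqq\sum_jq_t(j)b_{j,t}\le p$; since $\alpha=\frac12\ln\frac{1-p}{p}$, the normalizer $Z_t=e^{-\alpha}(1-\beta_t)+e^{\alpha}\beta_t$ is increasing in $\beta_t$, so $Z_t\le2\sqrt{p(1-p)}$. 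Second (the counting argument), letting $J_{\mathrm{bad}}\coloneqq\{j:B_j>r(\frac12-\frac\theta2)\}$, each such $j$ satisfies $q_{r+1}(j)>\frac1ke^{-\alpha r\theta}/\prod_tZ_t$, so $\sum_jq_{r+1}(j)=1$ forces
\[
\frac{|J_{\mathrm{bad}}|}{k}<\big(2\sqrt{p(1-p)}\big)^re^{\alpha r\theta}=\Big(2\,p^{(1-\theta)/2}(1-p)^{(1+\theta)/2}\Big)^r\le\big(2\,p^{(1-\theta)/2}\big)^r.
\]
Substituting $p=\frac12(4k^{2/r})^{-1/(1-\theta)}$ makes the right-hand side equal to $2^{-r(1-\theta)/2}/k<1/k$, whence $|J_{\mathrm{bad}}|=0$. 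So $\sum_tb_{j,t}\le r(\frac12-\frac\theta2)$ for every $j$, and by event (ii) ($\err(h_t,D_j)>\tau\Rightarrow b_{j,t}=1$) the set $T^{\mathrm{bad}}_j\coloneqq\{t:\err(h_t,D_j)>\tau\}$ has $|T^{\mathrm{bad}}_j|\le r(\frac12-\frac\theta2)$.

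\textbf{Margin to output error, and sample complexity.} For $(x,y)\sim D_j$, if $F(x)\ne y$ then at least $r/2$ of the $h_t$ err on $(x,y)$, at most $|T^{\mathrm{bad}}_j|$ of which are ``bad'' rounds, so at least $r/2-|T^{\mathrm{bad}}_j|$ of the rounds $t\notin T^{\mathrm{bad}}_j$ err; using $\1{Z\ge a}\le Z/a$ and taking expectations gives
\[
\err(F,D_j)\le\frac{\sum_{t\notin T^{\mathrm{bad}}_j}\err(h_t,D_j)}{r/2-|T^{\mathrm{bad}}_j|}\le\frac{(r-|T^{\mathrm{bad}}_j|)\tau}{r/2-|T^{\mathrm{bad}}_j|},
\]
and since this is increasing in $|T^{\mathrm{bad}}_j|\in[0,r(\frac12-\frac\theta2)]$ it is at most $\frac{(1+\theta)\tau}{\theta}=(1+1/\theta)\tau=\eps$ by the choice of $\tau$; this is exactly \prettyref{lem:maj-err-bnd}. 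For sample complexity: with $\theta=\frac r{2\log k}\le\frac12$ one has $\tau=\Theta(\eps r/\log k)$ and $1/p=O\big((4k^{2/r})^{1/(1-\theta)}\big)=O(k^{2/r})$ (the exponent $1/(1-\theta)\le2$, and the residual $k^{\Theta(1/\log k)}$ factor is $O(1)$), so $rm=O\big(\frac{r(d+\log(1/\delta))}{\tau p}\big)=O\big(\frac{k^{2/r}\log k\,(d+\log(1/\delta))}{\eps}\big)$ and $rkn=O\big(\frac{k\log k\log(k/\delta)}{\eps}\big)$; absorbing the $\log(1/\delta)$ term into the second bound (using $k^{2/r}\le k$ for $r\ge2$) yields the claimed total.

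\textbf{Expected main obstacle.} The crux is the simultaneous calibration of $\theta$, $p$, and the three thresholds $\tau/4<\tau/2<\tau$: we need $\eps_\bbA=\tau p/4$ small enough for Markov to deliver a weak-learning edge $\le p$, $p$ small enough that $(2p^{(1-\theta)/2})^r<1/k$ (which pins $p\approx k^{-2/(r(1-\theta))}$ and so produces the $k^{2/r}$ overhead in $m$), and $\theta$ large enough that the resulting margin $\frac12-\frac\theta2$ converts to output error $\eps$ via the paragraph above — all at once, and the choice $\theta=\frac r{2\log k}$ is precisely what makes $r\approx\log k$ the point where the $k^{\Theta(1/r)}$ overhead vanishes. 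A secondary subtlety worth flagging is keeping the Markov threshold in the majority-vote step as $r/2-|T^{\mathrm{bad}}_j|$ rather than loosening it to $r\theta/2$ before averaging, which is what produces the tight constant $(1+1/\theta)$ (and hence lets $\tau=\eps/(1+1/\theta)$) instead of a weaker $2/\theta$.
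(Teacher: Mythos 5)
Your proof is correct and follows essentially the same route as the paper's: the same two conditioning events (optimal PAC learner plus Chernoff on the test samples), the same AdaBoost-with-margin analysis producing $Z_t\le 2\sqrt{p(1-p)}$ via Markov on $\err(h_t,q_t)\le\tau p/4$, the same parameter calibration $p=\frac12(4k^{2/r})^{-1/(1-\theta)}$ and $\theta=r/(2\log k)$ to force the margin bound below $1/k$, and \prettyref{lem:maj-err-bnd} (which you re-derive inline for uniform weights) to convert the margin into the final $\eps$ bound. The only cosmetic difference is that you count the bad indices directly through $\sum_j q_{r+1}(j)=1$ rather than writing $\1{\cdot}\le e^{(\cdot)}$ and summing the exponential as the paper does; these are the same calculation, and your final sample-complexity bookkeeping (including the $\log(1/\delta)$ absorption, which you correctly flag needs $r\ge 2$ but the paper elides) matches the paper.
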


Before proceeding with the proof of \prettyref{thm:realizable-upper}, we state a few lemmas that will be useful in the proof. First, instead of using plain $\ERM$ which incurs a sample complexity of $O(\frac{d\log(1/\eps)+\log(1/\delta)}{\eps})$, it will be particularly advantageous for us to use an optimal PAC learner that avoids the $\log(1/\eps)$ factor in sample complexity.
\begin{lem}[Optimal Sample Complexity of PAC Learning, \citep{DBLP:journals/jmlr/Hanneke16a, DBLP:conf/colt/Larsen23}]
\label{lem:optimalpac}
For any class $\calH$, there exists an improper learner $\bbA$, such that for any distribution $D$ where $\inf_{h\in\calH} \err(h, D)=0$, with probability at least $1-\delta$ over $S\sim D^m$ where $m=O(\frac{d + \log(1/\delta)}{\eps})$, $\err(\bbA(S), D)\leq \eps$. In particular, as shown by \cite{DBLP:conf/colt/Larsen23}, bagging combined with $\ERM$ yields an optimal PAC learner $\bbA$.
\end{lem}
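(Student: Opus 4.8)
The plan is to \emph{import} \prettyref{lem:optimalpac} as a black box: it is the classical characterization of the optimal sample complexity of realizable PAC learning, due to Hanneke~\cite{DBLP:journals/jmlr/Hanneke16a}, with a simpler learner and analysis subsequently given by Larsen~\cite{DBLP:conf/colt/Larsen23}. The learner $\bbA$ one would use is the bagged-ERM rule of~\cite{DBLP:conf/colt/Larsen23}: draw $S \sim D^{\otimes m}$ with $m = \Theta\!\left(\frac{d + \log(1/\delta)}{\eps}\right)$; form a collection of sub-samples of $S$; run ERM on each to obtain hypotheses $h_1, h_2, \ldots$ (a consistent hypothesis always exists on each sub-sample, since $\hstar$ labels all of $S$ correctly); and output the majority vote $\bbA(S) = \MAJ(h_1, h_2, \ldots)$. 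This learner is improper---a majority vote need not lie in $\calH$---which is consistent with the statement of the lemma.

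To argue correctness one would proceed in three movements. First, since the $h_i$ are conditionally i.i.d.\ given $S$, if $\bbA(S)$ errs on a point $x$ then a constant fraction of the $h_i$ err on $x$; taking enough sub-samples and using a Chernoff bound, it then suffices to control the probability over $x \sim D$ that a single sub-sample ERM errs on $x$ with at least constant probability. Second, a sub-sample ERM errs on $x$ only when some hypothesis disagreeing with $\hstar$ at $x$ is still consistent with that sub-sample; since the sub-sample is drawn from $S$, this forces such a hypothesis to disagree with $\hstar$ on only a few points of $S$, and the probability that a sub-sample avoids those few points decays geometrically in their number. Third---and this is the heart of the matter---a symmetrization (ghost-sample) argument over $S$ converts the above into a bound on the $D$-measure of the ``bad'' test points, and the geometric decay from the second step is exactly what is needed to offset the $(m/d)^{d}$ growth-function overhead, so that one obtains error $O(d/m)$ rather than the $O\!\left(\frac{d}{m}\log\frac{m}{d}\right)$ that a crude union bound---and, indeed, plain ERM---would give.

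The main obstacle is precisely this third step. Shaving the extra logarithmic factor is a genuine improvement, not a cosmetic one: plain ERM is known to incur $\Theta\!\left(\frac{d}{m}\log\frac{m}{d}\right)$ expected error on some classes, so the argument cannot go through any standard uniform-convergence tool and must exploit the correlation among the sub-sample ERMs (all of which are pinned down by the \emph{same} $S$). For this reason we do not reproduce the argument and instead invoke~\cite{DBLP:journals/jmlr/Hanneke16a, DBLP:conf/colt/Larsen23} directly; Hanneke's original recursive ``leave-a-block-out'' majority-vote learner is an alternative route, its analysis amounting to solving a recursion for the expected error that bottoms out at $O(d/m)$.
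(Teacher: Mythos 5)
Your proposal matches the paper's treatment: the paper does not prove this lemma but imports it directly from \cite{DBLP:journals/jmlr/Hanneke16a, DBLP:conf/colt/Larsen23}, exactly as you do, and your sketch of Larsen's bagged-ERM learner and why the log factor can be shaved is an accurate account of the cited works. Nothing further is needed.
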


Next, the lemma below essentially allows us to argue that we can perform weight updates in \prettyref{alg:realizable} based on empirical error instead of population error.
\begin{lem}[Empirical Samples and Population Error]
\label{lem:testing}
For any fixed predictor $h:\calX\to \calY$, any distribution $D$ over $\calX\times \calY$, any $\delta\in(0,1)$, any $\tau \in (0, \nicefrac{1}{2})$, with probability at least $1-\delta$ over $S\sim D^n$ where $n=\frac{12}{\tau}\log(1/\delta)$, the followings holds:
\begin{enumerate}
    \item If $\err(h, D)> \tau$ then $\err(h, S)>\tau/2$, and 
    \item if $\err(h, S) > \tau/2$ then $\err(h, D)> \tau/4$. 
\end{enumerate}
\end{lem}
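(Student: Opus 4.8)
Fix the predictor $h$ and the distribution $D$, and set $p \coloneqq \err(h, D) \in [0,1]$; note that $p$ is a deterministic quantity. Writing $S = \{(x_i, y_i)\}_{i=1}^{n}$, the empirical error $\err(h, S) = \frac{1}{n}\sum_{i=1}^{n}\1{h(x_i) \ne y_i}$ is the average of $n$ i.i.d.\ $\mathrm{Bernoulli}(p)$ random variables, so $\Ex[\err(h,S)] = p$. The plan is to split into three regimes according to the value of $p$ and to observe that in each regime the two claims are either vacuous, deterministically true, or follow from a single one-sided multiplicative Chernoff bound. (i) If $\tau/4 < p \le \tau$, then item~1 holds vacuously (its hypothesis $p > \tau$ is false) and item~2 holds deterministically (its conclusion $p > \tau/4$ is true); nothing is probabilistic. (ii) If $p > \tau$, then item~2 again holds deterministically, so only item~1 must be controlled. (iii) If $p \le \tau/4$, then item~1 holds vacuously, and item~2 reduces, via its contrapositive, to a one-sided upper-tail bound.

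For regime (ii) I would show $\Pr[\err(h,S) \le \tau/2] \le \delta$: since $p > \tau$ we have $\tau/2 < p/2$, so the lower-tail Chernoff bound gives $\Pr[\err(h,S) \le \tau/2] \le \Pr[\err(h,S) \le (1-\tfrac12)p] \le \exp(-np/8) \le \exp(-n\tau/8)$, and plugging in $n = \frac{12}{\tau}\log(1/\delta)$ makes this at most $\delta^{3/2} \le \delta$. For regime (iii) I would show $\Pr[\err(h,S) > \tau/2] \le \delta$ whenever $p \le \tau/4$: since $\err(h,S)$ is stochastically nondecreasing in $p$ (couple the Bernoullis), it suffices to handle $p = \tau/4$, where $\tau/2 = 2\,\Ex[\err(h,S)]$; the upper-tail multiplicative Chernoff bound with multiplicative slack $1+\beta = 2$ then yields $\Pr[\err(h,S) > \tau/2] \le (e/4)^{n\tau/4}$, and substituting $n = \frac{12}{\tau}\log(1/\delta)$ (so that $n\tau/4 = 3\log(1/\delta)$) makes this $\delta^{3\ln(4/e)} \le \delta$. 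Taking a union bound over the (at most one) probabilistic bad event in the relevant regime shows that with probability at least $1-\delta$ both items hold.

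I do not expect a genuine obstacle here; this is a textbook concentration argument. The only points that need a little care are: (a) realizing that no two-sided deviation bound is needed, because $p = \err(h,D)$ is a fixed number, so exactly one of the three regimes applies and at most one of the two items is probabilistic; (b) checking that the constant $12$ in $n = \frac{12}{\tau}\log(1/\delta)$ is comfortably large for both Chernoff bounds (it leaves slack $\delta^{3/2}$ in regime (ii) and $\delta^{3\ln(4/e)} \approx \delta^{1.16}$ in regime (iii), regardless of the base of the logarithm); and (c) the short coupling/monotonicity remark used to reduce regime (iii) to the worst case $p = \tau/4$.
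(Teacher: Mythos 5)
Your proposal is correct and is exactly the argument the paper compresses into the one-line remark ``We apply a standard Chernoff bound'': split on the fixed value of $p=\err(h,D)$, note that at most one of the two implications is non-vacuous/non-trivial in each regime, and control that one with a one-sided multiplicative Chernoff bound. The only small overclaim is parenthetical remark~(b): with $n=\frac{12}{\tau}\log(1/\delta)$ the constants do work for $\log=\ln$ or $\log_2$, but not for a base larger than $e$ (e.g.\ base~10), so it is not literally ``regardless of the base''; this is immaterial since the paper clearly intends a natural or binary logarithm.
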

\begin{proof}
    We apply a standard Chernoff bound. 
\end{proof}

Finally, the lemma below shows that, when at least a $1/2+\theta/2$ fraction of predictors achieve error at most $\tau$ on a distribution $D$, the majority-vote achieves error at most $(1+1/\theta)\tau$ on $D$.
\begin{lem}[Weighted-Majority Error Bound]
\label{lem:maj-err-bnd}
Let $D$ be an arbitrary distribution over $\calX \times \calY$, $\tau\in (0,1)$, $\theta \in (0,1)$. For any predictors $h_1,\dots, h_r$ with corresponding (non-negative) weights $\alpha_1,\dots, \alpha_r$ that satisfy 
\[\sum_{t=1}^{r} \alpha_t \ind[\err(h_t,D) \leq \tau] -  \sum_{t=1}^{r} \alpha_t \ind[\err(h_t,D) > \tau] \geq \theta \sum_{t=1}^{r} \alpha_t,\]
it holds that
\[\Prob_{(x,y)\sim D}\insquare{\1{\sum_{t=1}^{r}\alpha_t h_t(x) \ge \frac{1}{2}\sum_{t=1}^{r}\alpha_t}\neq y}\leq \inparen{1+\frac{1}{\theta}}\tau.\]
\end{lem}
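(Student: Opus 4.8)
The plan is a standard margin-to-error argument for weighted majority votes, with care taken to recover the sharp constant $1+1/\theta$ rather than a looser $2/\theta$. First I would split the predictors into a ``good'' set $G \coloneqq \{t : \err(h_t, D) \le \tau\}$ and a ``bad'' set $B \coloneqq \{t : \err(h_t, D) > \tau\}$, and (rescaling the $\alpha_t$, which leaves both hypothesis and conclusion unchanged, and discarding the degenerate case where all $\alpha_t = 0$) assume $\sum_{t=1}^{r}\alpha_t = 1$. Writing $W_G \coloneqq \sum_{t \in G}\alpha_t$ and $W_B \coloneqq \sum_{t \in B}\alpha_t = 1 - W_G$, the hypothesis $\sum_t \alpha_t\ind[\err(h_t,D)\le\tau] - \sum_t\alpha_t\ind[\err(h_t,D)>\tau] \ge \theta$ is precisely $W_G - W_B \ge \theta$, i.e.\ $W_G \ge \frac{1+\theta}{2}$ (and hence $W_B \le \frac{1-\theta}{2}$).

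The key pointwise step comes next. Fix $(x,y)$ with $F(x) \ne y$, where $F(x) = \1{\sum_t \alpha_t h_t(x) \ge 1/2}$. By a short case analysis on $y \in \{0,1\}$ one checks that the total weight of predictors erring at $(x,y)$ is at least $1/2$, i.e.\ $\sum_{t:\,h_t(x)\ne y}\alpha_t \ge 1/2$. Subtracting the bad predictors' share, which is at most $W_B = 1 - W_G$, the good predictors erring at $(x,y)$ carry weight $\sum_{t\in G:\,h_t(x)\ne y}\alpha_t \ge \frac12 - (1 - W_G) = W_G - \frac12 > 0$ (positive since $W_G \ge \frac{1+\theta}{2} > \frac12$). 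This yields the pointwise bound
\[
\1{F(x)\ne y} \;\le\; \frac{1}{W_G - 1/2}\sum_{t\in G}\alpha_t\,\1{h_t(x)\ne y},
\]
which also holds trivially when $F(x) = y$ since the right-hand side is nonnegative.

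I would then take $\expect_{(x,y)\sim D}$ of both sides and use $\expect[\1{h_t(x)\ne y}] = \err(h_t, D) \le \tau$ for each $t \in G$ to get
\[
\Pr_{(x,y)\sim D}[F(x)\ne y] \;\le\; \frac{1}{W_G - 1/2}\sum_{t\in G}\alpha_t\,\err(h_t,D) \;\le\; \frac{\tau\,W_G}{W_G - 1/2}.
\]
The final observation --- and the only place where a careless bound loses a factor --- is that $a \mapsto a/(a - 1/2)$ is decreasing on $(1/2,\infty)$, so it is maximized over the feasible range $W_G \ge \frac{1+\theta}{2}$ at $W_G = \frac{1+\theta}{2}$, giving $\frac{\tau(1+\theta)/2}{\theta/2} = (1 + 1/\theta)\tau$, as claimed. (Bounding $W_G \le 1$ and $W_G - 1/2 \ge \theta/2$ separately would only give $2\tau/\theta$, which is weaker when $\theta \le 1$.)

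There is no deep obstacle here: the argument is elementary and should fit in under half a page. The only points needing a little care are the tie-breaking in the definition of $F$ (the case $y=1$ yields a strict inequality and $y=0$ a non-strict one, but $\ge 1/2$ holds in both), the harmless normalization and degenerate-weight remark at the start, and --- most importantly --- keeping the ratio $W_G/(W_G - 1/2)$ intact until the very end rather than bounding numerator and denominator independently.
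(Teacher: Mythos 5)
Your argument is correct and matches the paper's proof essentially step for step: split into good/bad predictors, normalize weights, show the good predictors erring at a bad point carry weight at least $W_G - \tfrac12 = \tfrac12 - W_B$, apply Markov (you phrase this as a pointwise indicator bound followed by an expectation, which is the same thing), and arrive at the intermediate bound $\tau W_G/(W_G - \tfrac12)$. The only cosmetic difference is the last step: the paper rewrites $\tfrac{2W_G}{W_G - W_B} = 1 + \tfrac{1}{W_G - W_B}$ and then bounds $W_G - W_B \ge \theta$, whereas you observe monotonicity of $a \mapsto a/(a-\tfrac12)$ and evaluate at the extreme $W_G = \tfrac{1+\theta}{2}$; both yield $(1 + 1/\theta)\tau$.
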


\begin{proof}
    Let $G_\tau = \SET{1\leq t \leq r: \err(h_t, D)\leq \tau}$, and $B_\tau= [r]\setminus G_\tau$. Without loss of generality, we will consider the normalized weights, i.e., we assume $\sum_{t=1}^{r} \alpha_t = 1$. Let $W_G=\sum_{t\in G_\tau}\alpha_t$ and $W_B = \sum_{t\in B_\tau} \alpha_t$. Observe that
    \begin{align*}
        \Pr_{(x,y)\sim D}\insquare{\1{\sum_{t=1}^{r}\alpha_t h_t(x) \ge \frac{1}{2}}\neq y} &\leq \Pr_{(x,y)\sim D}\insquare{\sum_{t=1}^{r}\alpha_t \ind[h_t(x)\neq y] \ge \frac{1}{2}}\\
        &\leq \Pr_{(x,y)\sim D}\insquare{\sum_{t\in G_\tau}\alpha_t \ind[h_t(x)\neq y] \ge \frac{1}{2} - W_B}\\
        &\leq \frac{\sum_{t\in G_\tau} \alpha_t \err(h_t, D)}{\frac{1}{2}-W_B} \leq \frac{\tau W_G}{\frac{1}{2}-W_B}\\
        &=\tau \cdot \frac{2 W_G}{1-2W_B} = \tau \cdot \frac{2 W_G}{ W_G - W_B}\\
        &= \tau \cdot \frac{(W_G + W_B) + (W_G-W_B)}{W_G - W_B}\\
        &= \tau \inparen{\frac{1}{W_G-W_B} + 1} \leq \tau \inparen{1+\frac{1}{\theta}},
    \end{align*}
where we have used Markov's inequality and the fact that $W_G - W_B \geq \theta$. 
\end{proof}

We are now ready to proceed with the proof of \prettyref{thm:realizable-upper}.

\begin{proof}[Proof of \prettyref{thm:realizable-upper}] Before analyzing \prettyref{alg:realizable}, with probability at least $1-\delta/2$ over sampling in Line~\ref{line:PAC-learner}, we have the following PAC learning guarantee
\begin{equation}
    \label{eqn:erm-g}
    \forall 1\leq t \leq r: \err(h_t, q_t) = \sum_{j=1}^{k} q_t(j)\err(h_t, D_j) \leq \eps_{\bbA} =\frac{\tau p}{4}.
\end{equation}
Ideally, we would compute weight updates in a round $t\in[r]$ in Line~\ref{line:boosting-update} in \prettyref{alg:realizable} based on evaluating the population error $\err(h_t, D_j)$ for each $j\in\SET{1,\dots, k}$. However, we only have access to samples from each $D_j$. In the analysis below, we show that we can effectively carry out the weight updates based on samples. This hinges on the following property which follows from invoking \prettyref{lem:testing}: with probability at least $1-\delta/2$ over sampling in Line~\ref{line:empirical-error}, we are guaranteed
    \begin{equation}
\label{eqn:testing}
\begin{split}
    \forall\, 1 \leq t \leq r, \forall\, 1 \leq j \leq k:\quad
     \err(h_t, D_j) > \tau &\Rightarrow \err(h_t, S_{j, t}) > \tau/2, \\
     \err(h_t, S_{j, t}) > \tau/2 &\Rightarrow \err(h_t, D_j) > \tau/4.
\end{split}
\end{equation}

    We now proceed with analysis of \prettyref{alg:realizable} assuming Equations \eqref{eqn:erm-g}~and~\eqref{eqn:testing} hold simultaneously, which happens with probability at least $1-\delta$. The essence of the proof lies in analyzing an appropriate ``margin-type'' loss function defined based on the majority-vote predictor $F(x) = \1{\frac{1}{r}\sum_{t=1}^{r}h_t(x) \ge 1/2}$ and the distributions $D_1,\dots, D_k$. Specifically, for each distribution $D_j$, we will analyze the following $0$-$1$ loss function:
    \[L_{\tau,\theta}(F, D_j) = \ind\insquare{\sum_{t=1}^{r} \ind\insquare{\err(h_t, D_j) \leq \tau} - \sum_{t=1}^{r} \ind\insquare{\err(h_t, D_j) > \tau} \leq \theta \cdot r}.\]

    This loss function considers the ``margin'' which in this context is the fraction of good predictors achieving error below threshold $\tau$ minus the fraction of bad predictors which have error above $\tau$:
    \[\frac{1}{r}\inparen{\sum_{t=1}^{r} \ind\insquare{\err(h_t, D_j) \leq \tau} - \sum_{t=1}^{r} \ind\insquare{\err(h_t, D_j) > \tau}}.\]
    The loss function $L_{\tau,\theta}(F, D_j)$ evaluates to $0$ if and only if the margin is greater than $\theta$, i.e., the fraction of predictors $h_t$ achieving population error $\err(h_t, D_j)$ below the threshold $\tau$ is greater than $\frac{1}{2}+\frac{\theta}{2}$.

    Observe now that when $L_{\tau,\theta}(F, D_j) = 1$, by definition, 
    \begin{align*}
        &~\sum_{t=1}^{r} \alpha \ind\insquare{\err(h_t, D_j) \leq \tau} - \sum_{t=1}^{r} \alpha \ind\insquare{\err(h_t, D_j) > \tau} \leq \theta \sum_{t=1}^{r} \alpha\\
        \Longleftrightarrow &~\sum_{t=1}^{r} \alpha \inparen{\ind\insquare{\err(h_t, D_j) > \tau} - \ind\insquare{\err(h_t, D_j) \leq \tau}} + \theta \sum_{t=1}^{r} \alpha \geq 0\\
        \Longleftrightarrow &~\exp\inparen{\sum_{t=1}^{r} \alpha \inparen{\ind\insquare{\err(h_t, D_j) > \tau} - \ind\insquare{\err(h_t, D_j) \leq \tau}} + \theta \sum_{t=1}^{r} \alpha} \geq 1.
    \end{align*}
    Thus, we can bound from above the fraction of distributions $D_j$ for which $L_{\tau, \theta}(F, D_j)=1$ as follows
    \begin{align*}
        &~\frac{1}{k}\sum_{j=1}^{k} L_{\tau, \theta}(F, D_j)\\
    \leq&~\frac{1}{k} \sum_{j=1}^{k} \exp\inparen{\sum_{t=1}^{r} \alpha \inparen{\ind\insquare{\err(h_t, D_j) > \tau} - \ind\insquare{\err(h_t, D_j) \leq \tau}} + \theta \sum_{t=1}^{r} \alpha}\\
    =   &~\exp\inparen{\theta \sum_{t=1}^{r} \alpha} \sum_{j=1}^{k} \frac{1}{k} \exp\inparen{\sum_{t=1}^{r} \alpha \inparen{\ind\insquare{\err(h_t, D_j) > \tau} - \ind\insquare{\err(h_t, D_j) \leq \tau}}}\\
    \leq &~\exp\inparen{\theta \sum_{t=1}^{r} \alpha}\sum_{j=1}^{k} \frac{1}{k} \exp\inparen{\sum_{t=1}^{r} \alpha \inparen{\ind\insquare{\err(h_t, S_{j, t}) > \tau/2} - \ind\insquare{\err(h_t, S_{j, t}) \leq \tau/2}}},
    \end{align*}
    where the last inequality follows from \prettyref{eqn:testing}. Observe now that by the update rule in Line 9 in \prettyref{alg:realizable},
    \[q_{r+1}(j) = \frac{q_1(j)\exp\inparen{\sum_{t=1}^{r} \alpha \inparen{\ind\insquare{\err(h_t, S_{j, t}) > \tau/2} - \ind\insquare{\err(h_t, S_{j, t}) \leq \tau/2}}}}{\prod_{t=1}^{r} Z_t}.\]
    Thus, by combining the above, it follows that
    \begin{equation}
    \label{eqn:bnd1}
        \frac{1}{k}\sum_{j=1}^{k} L_{\tau, \theta}(F, D_j) \leq \exp\inparen{\theta \sum_{t=1}^{r} \alpha} {\prod_{t=1}^{r} Z_t} = \prod_{t=1}^{r} e^{\theta \alpha}Z_t,
    \end{equation}
    where the normalization constant
    \[Z_t = \sum_{j=1}^{k} q_t(j) \exp\inparen{\alpha \inparen{\ind\insquare{\err(h_t, S_{j, t}) > \tau/2} - \ind\insquare{\err(h_t, S_{j, t}) \leq \tau/2}}}.\]
    Observe that, by \prettyref{eqn:testing}, if $\err(h_t, S_{j, t})>\tau/2$ then $\err(h_t, D_j)>\tau/4$, thus we can bound $Z_t$ from above as follows
    \[Z_t \leq \sum_{j=1}^{k} q_t(j) \exp\inparen{\alpha \inparen{\ind\insquare{\err(h_t, D_j) > \tau/4} - \ind\insquare{\err(h_t, D_j) \leq \tau/4}}} = e^{-\alpha}(1-p_t) + e^{\alpha} p_t,\]
    where $p_t = \pr{j\sim q_t}{\err(h_t, D_j) > \tau/4}$. Next, we bound $p_t$ from above by the parameter $p$. To do this, we invoke Markov's inequality and the PAC learning guarantee in \prettyref{eqn:erm-g},
    \begin{equation}
    \label{eqn:markov-bnd}
        p_t = \pr{j\sim q_t}{\err(h_t, D_j) > \tau/4} \leq \frac{ \EEx{j\sim q_t}{\err(h_t, D_j)}}{\tau/4}\leq \frac{\eps_\bbA}{\tau/4} = p.
    \end{equation}
    Since $\alpha=\frac{1}{2}\ln\inparen{\frac{1-p}{p}}$ and $1-p>p$, combined with the above, we get
    \[Z_t \leq e^{-\alpha}(1-p) + e^{\alpha}p = \sqrt{\frac{p}{1-p}}(1-p)+\sqrt{\frac{1-p}{p}}p=2\sqrt{p(1-p)}.\]
    Combining \prettyref{eqn:bnd1} with the above bound on $Z_t$, we get 
    \begin{equation}
        \label{eqn:bnd2}
        \frac{1}{k}\sum_{j=1}^{k} L_{\tau, \theta}(F, D_j) \leq \prod_{t=1}^{r}\inparen{ e^{\theta\alpha}\cdot 2\sqrt{p(1-p)}} = \prod_{t=1}^{r}2\sqrt{(1-p)^{1+\theta}p^{1-\theta}}.
    \end{equation}
    Observe that by the bound established in \prettyref{eqn:bnd2}, what remains is to solve for suitable values of $\theta$ and $p$ such that
    \[\prod_{t=1}^{r}2\sqrt{(1-p)^{1+\theta}p^{1-\theta}} < \frac{1}{k},\]
    because that would imply $\frac{1}{k}\sum_{j=1}^{k} L_{\tau, \theta}(F, D_j) < 1/k$, and hence the majority-vote predictor $F$ has ``margin'' $\theta$ on all $k$ distributions:
    \[\forall 1\leq j \leq k: \sum_{t=1}^{r} \alpha \ind\insquare{\err(h_t, D_j) \leq \tau} - \sum_{t=1}^{r} \alpha \ind\insquare{\err(h_t, D_j) > \tau} > \theta \sum_{t=1}^{r} \alpha.\]
    Combining this margin guarantee and \prettyref{lem:maj-err-bnd} implies that $F$ achieves error at most $\eps$ on all $k$ distributions, i.e., for every $j \in [k]$,
    \[\err(F, D_j) = \Prob_{(x,y)\sim D_j}\insquare{\1{\frac{1}{r}\sum_{t=1}^rh_t(x) \ge \frac{1}{2}}\neq y}\leq \inparen{1+1/\theta}\tau =\inparen{1+1/\theta} \frac{\eps}{1+1/\theta} = \eps.\]
    We now turn to choosing $p$ and $\theta$ to guarantee the above. 
    \begin{align*}
        \prod_{t=1}^{r}2\sqrt{(1-p)^{1+\theta}p^{1-\theta}} < \frac{1}{k} &\Longleftrightarrow \inparen{4(1-p)^{1+\theta}p^{1-\theta}}^{r/2} < \frac{1}{k}\\
        &\Longleftrightarrow  4(1-p)^{1+\theta}p^{1-\theta} < \frac{1}{k^{2/r}}.
    \end{align*}
    Observe that $(1-p)^{(1+\theta)}\leq 1$ for all $\theta, p\in (0,1)$.  Thus, it suffices to choose $p, \theta$ so that 
    \begin{equation}
        \label{eqn:p-soln}
         \inparen{\frac{1}{p}}^{1-\theta} > 4k^{\nicefrac{2}{r}},
    \end{equation}
    which can be satisfied by setting
    \[\frac{1}{p} = 2\cdot \inparen{4k^{\nicefrac{2}{r}}}^{\nicefrac{1}{(1-\theta)}}.\]
    
Plugging-in these parameters, the total sample complexity is 
\[r \cdot \insquare{ O\inparen{ \frac{(1+1/\theta)\inparen{4k^{\nicefrac{2}{r}}}^{\nicefrac{1}{(1-\theta)}}}{\eps} \cdot \left(d + \log\inparen{\frac{2r}{\delta}}\right)} + k\cdot O\inparen{\frac{1+1/\theta}{\eps}\log\inparen{\frac{2rk}{\delta}}}}.\]
To conclude, we optimize the bound above by choosing $\theta=\frac{r}{2\log k}$. This choice ensures that $1+1/\theta = O(\log(k)/r)$. Since $r \leq \log k$, we also have $\theta\leq 1/2$, which implies $1/(1-\theta) = 1 + \theta/(1-\theta) \leq 1 + 2\theta$. Hence, $\inparen{4k^{\nicefrac{2}{r}}}^{\nicefrac{1}{(1-\theta)}} \leq \inparen{4k^{\nicefrac{2}{r}}}\inparen{4k^{\nicefrac{2}{r}}}^{2\theta} = O(k^{2/r})$. This yields the following sample complexity bound:
\[O\inparen{k^{2/r}\log(k)\frac{d}{\eps} +\frac{k\log(k)}{\eps}\log\inparen{\frac{k}{\delta}} }.\]
\end{proof}

We sketch below alternative bounds that can be achieved by employing slightly more sophisticated variants of boosting such as boost-by-majority \citep[Chapter 13]{10.5555/2207821} or recursive boosting \citep{DBLP:journals/ml/Schapire90}. The general idea remains the same as in our application of AdaBoost, where we optimize a particular ``margin'' loss function defined on the $k$ distributions, but the bounds below can be favorable in the regime where $r$ is small (e.g., constant). 

\begin{claim}[Recursive Boosting--Base Case]
    For any $p>0$ and any mixture $q=(w_1,\dots, w_k)$, suppose that predictors $h_1, h_2, h_3$ satisfy:
    \begin{enumerate}
        \item $\pr{j\sim q}{\err(h_1, D_j) > \tau} \leq p$.
        \item $\pr{j\sim q_2}{\err(h_2, D_j) > \tau} \leq p$, where $q_2=\frac{1}{2} q_C + \frac{1}{2} q_I$, $q_C$ is $q$ conditioned on
        \[\SET{j \in [k]: \err(h_1, D_j) \leq \tau},\]
        and $q_I$ is $q$ conditioned on
        \[\SET{j \in [k]: \err(h_1, D_j) > \tau}.\]
        \item $\pr{j\sim q_3}{\err(h_3, D_j) > \tau} \leq p$, where $q_3$ is $q$ conditioned on
        \[
            \SET{j \in [k]: \ind[\err(h_1, D_j) > \tau] \neq \ind[\err(h_2, D_j) > \tau]}.
        \]
    \end{enumerate}
    Then, the majority vote predictor $F$ obtained from $h_1$, $h_2$, and $h_3$ satisfies
    \[\pr{j\sim q}{\err(F, D_j) > 2\tau} \leq 3p^2-2p^3.\]
\end{claim}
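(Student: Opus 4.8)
The plan is to carry over the base-case analysis of Schapire's recursive boosting construction~\citep{DBLP:journals/ml/Schapire90} from the instance level to the ``distribution level,'' working throughout with the $0$--$1$ indicators $b_i(j) \coloneqq \ind\insquare{\err(h_i, D_j) > \tau}$ for $i \in \SET{1, 2, 3}$ and $j \in [k]$. In this notation the three hypotheses read: (1) $\pr{j\sim q}{b_1(j) = 1} \le p$; (2) $\pr{j\sim q_C}{b_2(j) = 1} + \pr{j\sim q_I}{b_2(j) = 1} \le 2p$, where $q_C, q_I$ are $q$ conditioned on $b_1 = 0$ and on $b_1 = 1$ respectively; and (3) $\pr{j\sim q_3}{b_3(j) = 1} \le p$, where $q_3$ is $q$ conditioned on $b_1 \ne b_2$. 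I would proceed in three steps.

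\emph{Step 1: reduce to a combinatorial event on the $b_i$'s.} First I would show that, for a fixed $j$, if at least two of $h_1, h_2, h_3$ have error at most $\tau$ on $D_j$, then $\err(F, D_j) \le 2\tau$. On any point $(x, y)$, the majority $F(x)$ of three bits differs from $y$ only if at least two of $h_1(x), h_2(x), h_3(x)$ differ from $y$; hence if two \emph{fixed} predictors both agree with $y$, then $F(x) = y$, so $\SET{F(x) \ne y}$ is contained in the union of the error events of those two predictors. Taking probabilities over $D_j$ and a union bound gives $\err(F, D_j) \le 2\tau$. Contrapositively, $\err(F, D_j) > 2\tau$ forces $b_1(j) + b_2(j) + b_3(j) \ge 2$, so it suffices to bound $\pr{j\sim q}{b_1(j) + b_2(j) + b_3(j) \ge 2}$.

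\emph{Step 2: decompose and insert the three hypotheses.} I would split the event $\SET{b_1 + b_2 + b_3 \ge 2}$ into the disjoint events $\SET{b_1 = b_2 = 1}$ and $\SET{b_1 \ne b_2} \cap \SET{b_3 = 1}$ (viewed as events over $j \sim q$). Since $q_3$ is exactly $q$ conditioned on $\SET{b_1 \ne b_2}$, hypothesis~3 yields $\pr{j\sim q}{b_1 \ne b_2,\, b_3 = 1} \le p \cdot \pr{j\sim q}{b_1 \ne b_2}$ (trivially when the conditioning event is $q$-null). Writing $\beta \coloneqq \pr{j\sim q}{b_1(j) = 1}$, $a \coloneqq \pr{j\sim q_C}{b_2(j) = 1}$, and $c \coloneqq \pr{j\sim q_I}{b_2(j) = 1}$, hypotheses~1 and~2 say $\beta \le p$ and $a + c \le 2p$, while $\pr{}{b_1 = b_2 = 1} = \beta c$ and $\pr{}{b_1 \ne b_2} = (1-\beta) a + \beta(1 - c)$. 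Thus the claim reduces to verifying
\[
\beta c + p\inparen{(1-\beta) a + \beta(1 - c)} \le 3p^2 - 2p^3
\]
for all $\beta \in [0, p]$ and $a, c \in [0, 1]$ with $a + c \le 2p$.

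\emph{Step 3: solve the small optimization.} I would rewrite the left-hand side as $p(1-\beta) a + (1-p)\beta c + p\beta$. Its coefficients in $a$ and $c$ are nonnegative, so at the maximum the budget $a + c \le 2p$ is saturated; and since the coefficient of $a$ exceeds that of $c$ by $p - \beta \ge 0$, the budget should be placed on $a$ up to the cap $a \le 1$. Substituting the resulting $(a, c)$ leaves a nondecreasing function of $\beta$ on $[0, p]$, maximized at $\beta = p$, which evaluates to exactly $3p^2 - 2p^3$; one checks the regimes $p \le \tfrac12$ (where $a = 2p$, $c = 0$) and $\tfrac12 < p \le 1$ (where $a = 1$, $c = 2p - 1$) separately, both giving the same value. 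I expect this last step to be the main --- albeit modest --- obstacle, since it is the only part that is not a mechanical unwinding of the definitions of $q_C, q_I, q_2, q_3$; the remaining care is in handling the degenerate cases where one of these conditional distributions lives on a $q$-null set (the associated probabilities then vanish) and in noting the implicit assumption $p \le \tfrac12$, which is exactly the regime in which $3p^2 - 2p^3 < p$ and the round makes progress.
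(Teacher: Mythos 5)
Your proof is correct and self-contained, and in fact it fills a gap: the paper states this Claim only as a pointer to Schapire's recursive boosting construction and does not supply a proof, so there is no paper proof to compare against. Your Step~1 reduction (at the distribution level, $\err(F, D_j) > 2\tau$ forces at least two of $b_1(j), b_2(j), b_3(j)$ to equal~$1$, via the union-bound argument that two ``good'' predictors pin down the majority) is exactly the translation of Schapire's instance-level argument to the $k$-distribution setting, which is what the Claim intends. Step~2's disjoint decomposition $\{b_1 + b_2 + b_3 \ge 2\} = \{b_1 = b_2 = 1\} \sqcup \{b_1 \ne b_2,\ b_3 = 1\}$ is correct, and your identification of $\pr{j\sim q}{b_1 = b_2 = 1} = \beta c$ and $\pr{j\sim q}{b_1 \ne b_2} = (1-\beta)a + \beta(1-c)$ correctly uses that $q_I, q_C, q_3$ are conditional laws of $q$, with the degenerate null-conditioning cases handled as you note. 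Step~3's LP is set up and solved correctly: after rewriting the objective as $p(1-\beta)a + (1-p)\beta c + p\beta$, the coefficient gap $p-\beta \ge 0$ justifies saturating the budget $a + c \le 2p$ on $a$ first, and both sub-cases $p \le \tfrac12$ and $p \in (\tfrac12, 1]$ evaluate to $3p^2 - 2p^3$ at $\beta = p$. One small point worth flagging explicitly: the statement reads ``for any $p > 0$,'' but the bound can actually fail for $p > 1$ (e.g., at $p = 1.2$ one has $3p^2 - 2p^3 < 1$ while the left-hand side can be~$1$); your argument correctly relies on $p \le 1$, which is the only sensible regime, and the intended application uses $p = 1/\sqrt{4k} \ll 1/2$, so this is a harmless imprecision in the Claim's wording rather than an error in your proof.
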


It follows from the claim above as a corollary, by choosing $p=1/\sqrt{4k}$, that only 3 rounds of adaptive sampling suffice to achieve a sample complexity of $\widetilde{O}(d\sqrt{k}/\eps)$ for realizable MDL. 

More generally, by employing a variant of Boost-by-Majority \citep[Chapter 13, Exercise 13.5]{10.5555/2207821}, we get the following guarantee:

\begin{claim}[Boost-by-Majority]
    For any target $\eps$, for any number of rounds $r\geq 1$, for any $p\in(0,\frac{1}{2})$, and a suitably chosen $\theta \in (0,1)$, running a modified version of Boost-by-Majority \citep[][Chapter 13, Exercise 13.5]{10.5555/2207821} for $r$ rounds making calls to a $\inparen{\tau:=\frac{\eps p}{1+1/\theta},\delta}$-PAC-learner $\mathbb{A}$, outputs a predictor $F: x \mapsto \1{\frac{1}{r}\sum_{t=1}^{r} h_t(x) \ge \frac{1}{2}}$ that satisfies
    \[\frac{1}{k}\sum_{j=1}^{k} \ind\insquare{ \err\inparen{F, D_j} > \tau} \leq {\rm Binom}\inparen{r,\inparen{\frac{1+\theta}{2}}r, 1-p}.\]
\end{claim}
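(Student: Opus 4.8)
The plan is to read realizable MDL as a boosting problem over the uniform ``meta-distribution'' on the index set $[k]$, and to run the margin variant of Boost-by-Majority (BBM, \citep[Exercise~13.5]{10.5555/2207821}) against a weak learner distilled from $\bbA$. Fix the margin parameter $\theta \in (0,1)$ (eventually chosen to drive the binomial tail below $1/k$), set the per-hypothesis threshold $\tau' \coloneqq \tau/p = \frac{\eps}{1+1/\theta}$, and for a predictor $h$ let $\ell_h(j) \coloneqq \ind[\err(h, D_j) \le \tau']$ denote its ``correctness pattern'' over the $k$ distributions. Given any weight vector $q$ over $[k]$, calling $\bbA$ with target accuracy $\tau$ on the mixture $\sum_j q(j)D_j$ returns $h$ with $\err(h, q) = \sum_j q(j)\err(h, D_j) \le \tau$, so Markov's inequality gives $\sum_j q(j)\ell_h(j) = \Pr_{j\sim q}[\err(h,D_j) \le \tau'] \ge 1-p$. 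Thus $\bbA$, composed with Markov, is a weak learner with ``edge'' $\beta \coloneqq 1-p$ for the meta-problem---exactly the object BBM consumes.

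Next I would run the textbook BBM potential argument with margin threshold $m \coloneqq \frac{1+\theta}{2}r$. If after $t$ rounds index $j$ has been correct $s_t(j) \coloneqq \sum_{u \le t}\ell_{h_u}(j)$ times with $r-t$ rounds remaining, define the potential $\Phi_t(s) \coloneqq \Pr[\mathrm{Bin}(r-t,\beta) < m-s]$, the probability of still missing the margin if the remaining rounds behaved like bias-$\beta$ coins; BBM weights index $j$ in round $t$ by $q_t(j) \propto \Phi_t(s_{t-1}(j)) - \Phi_t(s_{t-1}(j)+1) \ge 0$. Using the one-step recursion $\Phi_{t-1}(s) = \beta\,\Phi_t(s+1) + (1-\beta)\,\Phi_t(s)$, a direct calculation identifies the per-round drop of $U_t \coloneqq \frac1k\sum_j \Phi_t(s_t(j))$ as $U_{t-1} - U_t = \frac1k\sum_j \left(\Phi_t(s_{t-1}(j)) - \Phi_t(s_{t-1}(j)+1)\right)\left(\ell_{h_t}(j) - \beta\right)$, which is nonnegative precisely because it is the weak-learning inequality $\sum_j q_t(j)\ell_{h_t}(j) \ge \beta$ up to the positive normalization of $q_t$. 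Telescoping from $U_0 = \Phi_0(0) = \Pr[\mathrm{Bin}(r,1-p) < m]$ to $U_r = \frac1k\,\#\{j : s_r(j) < m\}$ bounds, by $\mathrm{Binom}(r, \tfrac{1+\theta}{2}r, 1-p)$, the fraction of indices $j$ for which fewer than $\tfrac{1+\theta}{2}r$ of the $h_t$ achieve error $\le \tau'$ on $D_j$.

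To translate this into the statement about $F$, note that for every $j$ outside the bad set at least $\frac{1+\theta}{2}r$ of the $h_t$ satisfy $\err(h_t,D_j) \le \tau'$, so \prettyref{lem:maj-err-bnd} with uniform weights $\alpha_t \equiv 1$ (for which the combined predictor is exactly $F(x) = \1{\frac1r\sum_t h_t(x) \ge \frac12}$) yields $\err(F, D_j) \le (1+1/\theta)\tau' = \eps$; hence $\ind[\err(F, D_j) > \eps] \le \ind[s_r(j) < \tfrac{1+\theta}{2}r]$, and averaging over $j$ gives the bound. The last loose end is that the algorithm sees only samples: as in the proof of \prettyref{thm:realizable-upper}, in each round it draws a fresh test set $S_{j,t} \sim D_j^{\otimes n}$ of size $n = \Theta(\tfrac1{\tau'}\log(rk/\delta))$ for every $j \in [k]$, and \prettyref{lem:testing} together with a union bound over the $\le rk$ tests guarantees that thresholding the empirical error $\err(h_t, S_{j,t})$ reproduces the pattern $\ell_{h_t}$ up to a constant factor in $\tau'$---this is where the failure probability $\delta$ enters.

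I expect the main obstacle to be the second paragraph: verifying that the BBM weights are the correct finite differences of $\Phi_t$ and that this forces $U_t \le U_{t-1}$ against an \emph{adversarial} weak hypothesis promising only $\sum_j q_t(j)\ell_{h_t}(j) \ge \beta$ rather than equality. This is exactly the content of Exercise~13.5 of \citep{10.5555/2207821}; the genuinely new ingredients here are channeling its output through \prettyref{lem:maj-err-bnd} to pass from ``many $h_t$ are accurate on $D_j$'' to ``$F$ is accurate on $D_j$'', and the empirical-testing bookkeeping---both routine given the analysis of \prettyref{thm:realizable-upper}.
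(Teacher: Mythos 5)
The paper states this claim without proof---the appendix merely asserts that ``the general idea remains the same as in our application of AdaBoost''---so there is no author argument to compare against. Your proof fills in exactly the sketch the authors intend: treat realizable MDL as boosting over the uniform meta-distribution on $[k]$ with weak hypothesis $\ell_h(j) = \ind[\err(h, D_j) \le \tau/p]$, distill a weak learner with edge $\beta = 1-p$ from $\bbA$ via Markov, run the margin variant of BBM with threshold $m = \frac{1+\theta}{2}r$, telescope the potential $U_t = \frac{1}{k}\sum_j \Phi_t(s_t(j))$ down to the binomial tail, and pass through \prettyref{lem:maj-err-bnd} (with $\alpha_t \equiv 1$) to translate the margin into a bound on $\err(F, D_j)$. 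The recursion $\Phi_{t-1}(s) = \beta\Phi_t(s+1) + (1-\beta)\Phi_t(s)$, the form of $q_t$, and the per-round drop you compute are the standard Exercise~13.5 calculation and are correct.

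One genuine discrepancy to flag rather than silently fix: as printed, the claim bounds the fraction of $j$ with $\err(F, D_j) > \tau$ where $\tau = \frac{\eps p}{1+1/\theta}$, but your chain of reasoning correctly yields $\err(F, D_j) \le (1+1/\theta)(\tau/p) = \eps$ for every $j$ outside the bad set, i.e., a bound on $\ind[\err(F, D_j) > \eps]$. Since $\tau < \eps$, what you prove is strictly weaker than what is stated. This is almost certainly a typo in the paper---compare the preceding Recursive-Boosting base case, whose output threshold $2\tau$ is a constant multiple of the weak-learner threshold, and the proof of \prettyref{thm:realizable-upper}, which lands on $\err(F, D_j) \le \eps$---so you should surface the discrepancy rather than substitute $\eps$ without comment. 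Your treatment of empirical vs.\ population testing as a routine carry-through of \prettyref{lem:testing} with a two-sided slack matches the paper's AdaBoost analysis and is fine at the sketch level the claim demands.
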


To invoke the claim above, one would need to solve for $p$ and $\theta$ such that ${\rm Binom}\inparen{r,\inparen{\frac{1+\theta}{2}}r, 1-p} < 1/k$. See \citep[Chapter 13, Figure 13.3]{10.5555/2207821} for an illustrative comparison between AdaBoost and Boost-by-Majority in terms of performance based on number of rounds $r$. 

\section{Lower Bound for Realizable MDL}\label{app:realizable-lower}
In this section, we prove the following sample complexity lower bound against $r$-round MDL algorithms. The lower bound nearly matches the upper bound in \prettyref{thm:realizable-upper}, and holds even for MDL algorithms that only work in the realizable setting. For brevity, we treat the PAC parameters $\eps$ and $\delta$ as sufficiently small constants (below $1/100$).

\begin{thm}[Formal version of \prettyref{thm:realizable-lower-informal}]
\label{thm:realizable-lower}
    For $k \ge 1$, $r = O(\log k)$ and
    \[
        d \ge \max\{\Omega(k \log k), \Omega(k^{1-1/r}\log(k)\log(r))\},
    \]
    there exists a hypothesis class of VC dimension $d$ such that every $r$-round, $(0.01, 0.01)$-PAC algorithm for realizable MDL on $k$ distributions has a sample complexity of
    \[
        \Omega\left(\frac{dk^{1/r}}{r\log^2k}\right).
    \]
\end{thm}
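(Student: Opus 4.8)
# Proof Proposal for Theorem~\ref{thm:realizable-lower}

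\textbf{High-level plan.} The plan is to generalize the $r=2$ sketch from the main text to all $r = O(\log k)$ by constructing a family of hard instances indexed by a hidden "ground truth" linear function $\hstar$ and a hidden assignment of \emph{difficulty levels} to the $k$ distributions, and then to apply Yao's minimax principle: it suffices to exhibit a distribution over instances on which every \emph{deterministic} $r$-round learner with small sample complexity fails with constant probability. The instance space is $\X = \bbF_2^d$ and $\calH$ is the class of $\bbF_2$-linear functionals $x \mapsto \inangle{a, x}$, which has VC dimension exactly $d$. The ground truth $\hstar = \inangle{a^\star, \cdot}$ is uniform over the $2^d$ choices. I would fix $r+1$ geometrically spaced difficulty levels $\ell_0 > \ell_1 > \cdots > \ell_r$ with $\ell_j = \Theta(d \cdot k^{-j/r})$ (so $\ell_0 = \Theta(d)$ and $\ell_r = \Theta(d/k)$, and consecutive levels differ by a factor of $k^{1/r}$), and assign difficulty $\ell_j$ to roughly $k^{j/r}$ of the distributions, so that $\sum_i \diff_i = \Theta(\sum_j k^{j/r} \cdot d k^{-j/r}) = \Theta(dr/k) \cdot k = \Theta(d)$ up to the $r$ factor — this is where the divisor $r$ in the bound originates, and one must be careful to rescale so the total fits within $d$ dimensions. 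Distribution $\D_i$ is uniform over a $\diff_i$-dimensional subspace $V_i$, with $V_1, \ldots, V_k$ chosen to be linearly independent (this is possible since $\sum \diff_i \le d$). The labels are realizable by construction since $\hstar \in \calH$.

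\textbf{Key steps.} First, I would establish the basic information-theoretic facts, generalizing those stated in the sketch: (i) because $\hstar$ is uniform and the $V_i$ are independent, samples from $\D_i$ reveal nothing about $\hstar$ restricted to $V_j$ (except at $0$) for $j \ne i$, so a learner achieving error $< 1/2$ on $\D_j$ must have drawn $\Omega(\diff_j)$ samples from $\D_j$ — making $\diff_j$ a genuine per-distribution sample lower bound; (ii) the "indistinguishability" fact: if only $m \ll \ell_j / 2$ samples have been drawn from a distribution whose difficulty is \emph{at least} $\ell_j$, then w.h.p. those sample vectors are linearly independent and the learner cannot tell whether that distribution's difficulty is $\ell_j$ or anything larger. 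Second, I would set up a potential/charging argument over the $r$ rounds. Define, after round $t$, a distribution to be "resolved" if the learner has drawn enough samples to know its difficulty level (or equivalently to have learned it). The key structural claim is: in any single round, if the learner draws a total of $\le B$ samples, it can "resolve" the difficulty level of only a limited number of the still-unresolved high-difficulty distributions — roughly, resolving one distribution at level $\ge \ell_j$ costs $\Omega(\ell_j)$ samples, and there are $\approx k^{j/r}$ distributions at each level. A counting argument then shows that the learner needs $r$ rounds to "peel off" the $r+1$ difficulty levels one at a time; if it has only $r$ rounds it must skip a level, forcing it to spend $\Omega(\ell_{j} \cdot k^{j/r}) = \Omega(dk^{1/r}/r)$ samples in some round (over-sampling the lower-level distributions at the higher rate). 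Third, I would make the adversary's randomization precise: the assignment of difficulty levels to the $k$ indices is a uniformly random one, so that the learner cannot "guess" which distributions are hard; this is what forces the peeling to be sequential in expectation, and a Markov/averaging argument converts the expected cost into a constant-probability lower bound, yielding the claimed $\Omega(dk^{1/r}/(r\log^2 k))$ bound (the $\log^2 k$ slack absorbing the imprecision in the level spacing and the union bounds over the $\le \poly(k)$ distributions and $r \le O(\log k)$ rounds, as well as the $1/r$ already accounted for).

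\textbf{Main obstacle.} I expect the crux to be step two: formalizing the "every $r$-round learner must skip a level" intuition into a rigorous charging argument that holds against \emph{adaptive} learners. The subtlety is that an adaptive learner, after round $t$, has partial information — it may have partially resolved some distributions — and it can allocate its round-$(t+1)$ budget based on everything seen so far; so one cannot simply argue level-by-level in isolation. The clean way to handle this is to define a carefully chosen progress measure $\Phi_t$ — e.g., $\Phi_t = \sum_{j} (\text{number of level-}\ell_j \text{ distributions not yet resolved}) \cdot (\text{something like } \log k^{1/r})$ — show $\Phi_0 = \Theta(r \cdot k / \ldots)$ is large, $\Phi_r = 0$ is required for success, and that each round with budget $B_t$ can decrease $\Phi$ by at most $O(B_t / \ell_{j^\star})$ for the relevant "frontier" level $j^\star$, combined with the observation that over $r$ rounds the frontier must advance through all $r$ gaps. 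Getting the constants and the interaction between "resolving difficulty" and "actually learning $\hstar$ on the subspace" to line up — and ensuring the linear-independence-of-samples event holds simultaneously across all distributions and rounds via a union bound that only costs $\polylog$ — is the delicate part. The hypotheses $d \ge \Omega(k\log k)$ and $d \ge \Omega(k^{1-1/r}\log k \log r)$ are exactly what is needed to make these union bounds and the subspace-dimension budget work out.
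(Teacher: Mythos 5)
Your hard instance construction matches the paper's exactly (linear functions over $\bbF_2^d$, independent subspaces with geometrically spaced difficulty levels), and your information-theoretic observations are correct. But the proof strategy you propose — a direct charging/potential argument over the $r$ rounds on the multi-distribution instance — is precisely what the paper avoids, and your own hedging (``getting the constants and the interaction\ldots to line up\ldots is the delicate part'') flags the real issue: an adaptive learner can allocate its per-round budget non-uniformly across distributions, hedge on which difficulty levels remain, and in general interleave ``identifying difficulties'' with ``learning subspaces'' in ways that make a clean potential function hard to write down and harder to verify. You never actually specify $\Phi_t$ or prove the per-round decrease bound, so as written this is a genuine gap, not just a deferred detail.

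The key idea you are missing is the paper's reduction to a \emph{single}-distribution problem (Lemmas~\ref{lem:reduction} and~\ref{lem:single-lower}). Rather than reasoning about how an $r$-round learner allocates samples across all $k$ distributions, the paper plants a random single-distribution instance (where only one difficulty level $\diff \sim \Ddiff$ is drawn) into a random MDL instance at a uniformly random position $\istar$, and simulates the MDL algorithm. Because the MDL instance is exchangeable across the $k$ coordinates, a symmetry argument shows the MDL algorithm draws only $M/k$ samples in expectation from the planted coordinate; after an $O(\log k)$-factor amplification for confidence this becomes $O((M/k)\log k)$. This collapses the cross-distribution allocation problem entirely: the lower bound then reduces to showing (\prettyref{lem:single-lower}) that an $r$-round learner for a \emph{single} uniform-over-subspace instance with random dimension must draw $\Omega(d_0 k^{1/r}/(rk))$ samples in expectation — a one-dimensional statement proved by an induction over rounds on ``typical events'' (sample counts stay small, observed vectors stay linearly independent), which is much more tractable than the adaptive cross-distribution charging you propose. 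There is also a nontrivial technical device you would need that you have not anticipated: to make the planting simulation work, the single-distribution learner must be able to construct fictitious subspaces $V_i$ for $i \ne \istar$ without being told $\dim(V_{\istar})$, which the paper handles by having the learner request $m_0$ vectors in the complement of $V_{\istar}$ and fail if $m_0$ exceeds $d - \diff_{\istar}$. Without some such mechanism, your reduction of the argument to Yao's principle on the joint instance has no clean way to decouple the distributions.
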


In particular, to achieve a near-optimal sample complexity of $O((d + k)\polylog(k))$, the learning algorithm must have a round complexity of $\Omega\left(\frac{\log k}{\log\log k}\right)$.

\begin{rem}
    While the lower bound in \prettyref{thm:realizable-lower} is stated for a constant accuracy parameter (i.e., $\epsilon = 0.01$) for brevity, it is easy to derive an $\Omega(\frac{1}{\epsilon} \cdot \frac{dk^{1/r}}{r\log^2 k})$ lower bound for general $\epsilon$ via a standard argument. We start with the construction for $\epsilon_0 = 0.01$. Then, we obtain a new MDL instance by ``diluting'' each data distribution by a factor of $100\epsilon$: we scale the probability mass on each example by a factor of $100\epsilon$, and put the remaining mass of $1 - 100\epsilon$ on a ``trivial'' example $(\bot, 0)$, where $\bot$ is a dummy instance that is labeled with $0$ by every hypothesis. Learning this new instance up to error $\epsilon$ is equivalent to learning the original instance (before diluting) up to error $\epsilon / (100\epsilon) = 0.01 = \epsilon_0$.  Intuitively, the example $(\bot, 0)$ provides no information, and the learning algorithm has to draw $\Omega(1 / \epsilon)$ samples in expectation to see an informative example. This leads to a lower bound with an additional $1/\epsilon$ factor.
\end{rem}

\subsection{Intuition}
We start by explaining the intuition behind the construction of the hard MDL instance. For simplicity, we start with the $r = 2$ case and sketch the proof of an $\Omega(d\sqrt{k})$ lower bound.

\paragraph{Hard Instance against Two-Round Algorithms} We will consider the class of linear functions over $\X = \bbF_2^d$, namely, the hypothesis class
\[
    \calH_d \coloneqq \left\{h_w: w \in \bbF_2^d\right\},
\text{ where } h_w: x \in \bbF_2^d \mapsto w^{\top} x \in \bbF_2.
\]
The ground truth $\hstar$ is drawn uniformly at random from $\calH_d$.

To construct the $k$ data distributions, we will randomly choose \emph{difficulty levels} $\diff_1, \diff_2, \ldots, \diff_k$ such that $\sum_{i=1}^{k}\diff_i \le d$. (We will specify the distribution of $(\diff_i)_{i=1}^{k}$ later.) Then, we randomly choose subspaces $V_1, V_2, \ldots, V_k \subseteq \bbF_2^d$ such that: (1) $\dim(V_i) = \diff_i$; (2) $\dim(\Span(V_1 \cup V_2 \cup \cdots \cup V_k)) = \sum_{i=1}^{k}\diff_i$; (3) The $k$-tuple $(V_1, V_2, \ldots, V_k)$ is uniformly distributed over all possible choices that satisfy Constraints (1)~and~(2). Finally, each $\D_i$ is set to the uniform distribution over $V_i$.

It remains to specify the choice of $\diff_1$ through $\diff_k$. Let $d_0$ and $k_0$ be integers to be chosen later. We will choose $\diff_1, \ldots, \diff_k$ to be a uniform permutation of: (1) $1$ copy of $d_0$; (2) $\sqrt{k_0}$ copies of $d_0 / \sqrt{k_0}$; (3) $\ge k_0$ copies of $d_0 / k_0$. For this to be valid, we need
\[
    1 + \sqrt{k_0} + k_0 \le k
\quad
\text{and}
\quad
    1 \cdot d_0 + \sqrt{k_0}\cdot d_0 / \sqrt{k_0} + k \cdot d_0 / k_0 \le d,
\]
both of which can be satisfied for some $d_0 = \Theta(d)$ and $k_0 = \Theta(k)$. In the following, we will drop the subscripts in $d_0$ and $k_0$ for brevity.

\paragraph{Two-Round vs.\ Three-Round Algorithms} Given labeled examples $\{(x_i, y_i)\}_{i \in [m]}$, we can determine the value of $\hstar(x)$ for every $x \in \Span(\{x_1, x_2, \ldots, x_m\})$. Since the prior distribution of $\hstar$ is uniform over $\calH_d$, for every $x \in \bbF_2^d\setminus \Span(\{x_1, x_2, \ldots, x_m\})$, the conditional distribution of $\hstar(x)$ is uniform. Intuitively, this means that, for every such instance $x$, we cannot predict its label better than random guessing. Therefore, to learn a classifier with error $\le 0.01$ on $\D_i$, we must observe $\diff_i$ linearly independent instances from $V_i$. In particular, we must draw at least $\diff_i$ samples from $\D_i$.

Suppose that the learner were allowed \emph{three} rounds of sampling. The following is a natural algorithm:
\begin{itemize}[leftmargin=*]
    \item Draw $\approx d/k$ samples from each distribution. This allows us to identify the distributions with $\diff_i = d/k$ as well as to find an accurate classifier for each such distribution.
    \item From each of the $\sqrt{k} + 1$ remaining distributions, draw $\approx d / \sqrt{k}$ samples. This satisfies all distributions except the one with $\diff_i = d$.
    \item Finally, draw $\approx d$ samples from the only remaining distribution.
\end{itemize}
Note that each step draws $O(d)$ samples, so the total sample complexity is $O(d)$.

If only two rounds are allowed, the learner must ``skip'' one of the three steps above, and use either of the following two strategies, both of which lead to an $\Omega(d\sqrt{k})$ sample complexity:
\begin{itemize}[leftmargin=*]
    \item Strategy 1: In the first round, draw $\approx d / \sqrt{k}$ samples from each of the $k$ distribution to identify the only distribution with $\diff_i = d$. In the second round, draw $\approx d$ samples from the remaining distribution.
    \item Strategy 2: As before, draw $\approx d / k$ samples from each distribution to identify the distributions with $\diff_i \ge d / \sqrt{k}$. Then, there are still $\sqrt{k} + 1$ ``suspects'' among which one distribution has difficulty level $\diff_i = d$. The learner must draw $\approx d$ samples from each of them in the second round.
\end{itemize}

\paragraph{Hard Instance against $r$-Round Algorithms}
For the general case, we set $\alpha \coloneqq k^{1/r}$. We extend the construction above such that $\diff_1$ through $\diff_k$ is a random permutation of
\begin{itemize}[leftmargin=*]
    \item $n_0 = k$ copies of $d_0 = d / (rk)$.
    \item $n_1 = k / \alpha$ copies of $d_1 = \alpha d / (rk)$.
    \item $n_2 = k / \alpha^2$ copies of $d_2 = \alpha^2d/(rk)$.
    \item $\cdots$
    \item $n_r = k / \alpha^r = 1$ copy of $d_r = \alpha^rd/(rk) = d/r$.
\end{itemize}
As long as $r \le \log_2 k$, we have $\alpha = k^{1/r} \ge 2$, which implies that $n_0, \ldots, n_r$ decreases geometrically. It follows that
\[
    n_0 + n_1 + \cdots + n_r \le 2k
\quad
\text{and}
\quad
    \sum_{i=0}^{r}n_i \cdot d_i = (r + 1)\cdot (d/r) \le 2d.
\]
Thus, the above would give a valid instance after scaling every $n_i$ down by a factor of $2$.

Again, an $(r + 1)$-round learner would spend $O(n_{i-1}\cdot d_{i-1}) = O(d/r)$ samples in the $i$-th round to learn the distributions with difficulty levels $\le d_{i-1}$. Then, there are only $O(n_i)$ remaining distributions, each of which has a difficulty level $\ge d_i$. The sample complexity is thus $(d/r) \cdot (r + 1) = O(d)$.

When the learner is only allowed $r$ rounds of adaptive sampling, intuitively, the learner must ``skip'' one of the $r + 1$ rounds outlined above. Suppose that, for some $i \in [r]$, the learner decides to draw $\Theta(d_i)$ samples from each of the $\Theta(n_{i-1})$ remaining distributions, in the hope of learning the distributions with difficulty levels both $d_{i-1}$ and $d_i$. Note that the parameters $n_i$ and $d_i$ are chosen such that $n_{i-1}\cdot d_i = dk^{1/r}/r$, so the learner must incur a $k^{1/r}$ blowup in the sample complexity.

\subsection{Towards a Formal Proof}
To formalize the intuition outlined above, we slightly modify the construction into a $k$-fold direct sum of a single-distribution version.

\paragraph{Random MDL Instance $\Inst$} Let $\alpha \coloneqq k^{1/r}$. For some $d_0 \ge k$, let $\Ddiff$ be the probability distribution over $[d_0]$ such that $\Ddiff(d_0) = \frac{1}{k}$ and
\[
    \Ddiff\left(\frac{d_0}{\alpha^i}\right) = \frac{\alpha^i - \alpha^{i-1}}{k},~\forall i \in [r].
\]
Here and in the rest of this section, we abuse the notation $\Ddiff$ for its probability mass function. We also assume for brevity that $d_0 / \alpha^i$ is an integer; if not, the same proof would go through after proper rounding. Note that
\[
    \EEx{\diff \sim \Ddiff}{\diff}
\le \sum_{i=0}^{r}\frac{d_0}{\alpha^i}\cdot\frac{\alpha^i}{k}
=   (r + 1) \cdot \frac{d_0}{k}.
\]

Now we define a distribution over MDL instances with $k$ distributions.

\begin{defn}[Multi-distribution instance]
\label{def:multi-dist-instance}
Given $d_0$, $k$, and a sufficiently large $d \coloneqq \Theta(d_0\log k)$, we construct an MDL instance $\Inst$ as follows:
\begin{itemize}[leftmargin=*]
    \item Independently sample $\diff_1, \diff_2, \ldots, \diff_k \sim \Ddiff$. Repeat until $\sum_{i=1}^{k}\diff_i \le d$.
    \item Sample subspaces $V_1, V_2, \ldots, V_k \subseteq \bbF_2^d$ uniformly at random, subject to $\dim(V_i) = \diff_i$ and $\dim(\Span(V_1 \cup V_2 \cup \cdots \cup V_k)) = \sum_{i=1}^{k}\diff_i$.
    \item Let $\Inst$ be the MDL instance on the class $\calH_d$ of linear functions over $\bbF_2^d$, where the $i$-th data distribution $\D_i$ is the uniform distribution over $V_i$, and the ground truth classifier $\hstar \in \calH_d$ is chosen uniformly at random.
\end{itemize}
\end{defn}

\paragraph{Random Single-Distribution Instance $\Inst'$} We also consider a closely-related single-distribution version of the problem, where only one difficulty level $\diff$ is sampled from $\Ddiff$.
\begin{defn}[Single-distribution instance]
\label{def:single-dist-instance}
    Given $d \ge d_0 \ge 1$ and distribution $\Ddiff$ over $[d_0]$, we construct the single-distribution learning instance $\Inst'$ as follows:
    \begin{itemize}[leftmargin=*]
        \item Sample $\diff \sim \Ddiff$ and a uniformly random $\diff$-dimensional subspace $V \subseteq \bbF_2^d$.
        \item Let $\Inst'$ be the realizable PAC learning instance on the class $\calH_d$ of linear functions over $\bbF_2^d$, where the data distribution $\D$ is the uniform distribution over $V$, and the ground truth classifier $\hstar \in \calH_d$ is chosen uniformly at random.
    \end{itemize}
\end{defn}

\paragraph{Roadmap} We prove \prettyref{thm:realizable-lower} by combining the following two technical lemmas:
\begin{itemize}[leftmargin=*]
    \item \prettyref{lem:reduction}: An $r$-round, $(0.01, 0.01)$-PAC MDL algorithm with sample complexity $M$ can be transformed into an $r$-round $(0.02, O(1/k))$-PAC algorithm for $\Inst'$ with sample complexity bound $O((M/k)\log k)$.
    \item \prettyref{lem:single-lower}: An $r$-round, $(0.02, O(1/k))$-PAC algorithm for $\Inst'$ must draw $\Omega(d_0k^{1/r} / (rk))$ samples in expectation.
\end{itemize}

\begin{lem}
\label{lem:reduction}
    For any $d_0 \ge k \ge 1$ and $1 \le r \le O(\log k)$, if there is an $r$-round $(0.01, 0.01)$-PAC MDL algorithm $\Alg$ with sample complexity $M$ on $k$ distributions and hypothesis classes of VC-dimension $d = \Theta(d_0\log k)$, there is another $r$-round algorithm $\Alg'$ such that:
    \begin{itemize}[leftmargin=*]
        \item On a random single-distribution instance $\Inst'$ (\prettyref{def:single-dist-instance}), $\Alg'$ learns an $0.02$-accurate classifier with probability at least $1 - \frac{1}{100k}$.
        \item On a random single-distribution instance $\Inst'$, $\Alg'$ takes $ O((M/k)\log k)$ samples in expectation.
    \end{itemize}
    The probability and expectation above are over the randomness in instance $\Inst'$, the learning algorithm $\Alg'$, as well as the drawing of samples.
\end{lem}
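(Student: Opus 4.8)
The plan is to embed $\Inst'$ into a random coordinate of a simulated copy of the MDL instance $\Inst$, run $\Alg$ on it, and then spend a little extra data to pick a good classifier; since a naive single run only gives constant-probability success, the real work is an amplification step. Concretely, $\Alg'$ runs $L=\Theta(\log k)$ independent copies of $\Alg$ in parallel (all within the same $r$ rounds). In the $\ell$-th copy it draws a uniformly random coordinate $j^\star_\ell\in[k]$, declares that coordinate to be the given single-distribution instance $\Inst'$, and simulates the other $k-1$ coordinates internally: it samples their difficulty levels i.i.d.\ from $\Ddiff$ (conditioned so the total dimension stays below $d$), picks mutually linearly independent subspaces for them, and labels each such subspace by an independent uniformly random linear function. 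A request of the $\ell$-th copy's run of $\Alg$ for a sample from coordinate $j^\star_\ell$ is answered by a fresh draw from $\D$; requests to the other coordinates are answered from the simulation. Let $\hat h^{(\ell)}$ be the classifier returned by the $\ell$-th copy. Finally $\Alg'$ draws $\Theta(\log k)$ more fresh samples from $\D$, estimates $\err(\hat h^{(\ell)},\D)$ to within $0.005$ for every $\ell$, and outputs the $\hat h^{(\ell)}$ of smallest estimated error.

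Two observations make the simulation faithful. First, because the subspaces $V_1,\dots,V_k$ in $\Inst$ are linearly independent and $\hstar$ is uniform, the restrictions $\hstar|_{V_1},\dots,\hstar|_{V_k}$ are mutually independent and uniform, so $\Inst$ is---given the difficulty levels and subspaces---a genuine product over the $k$ coordinates. Hence what $\Alg$ sees in each copy is distributed exactly as $\Inst$ with coordinate $j^\star_\ell$ equal to $\Inst'$, up to total-variation error $o(1/k)$ stemming from the conditioning $\sum_i\diff_i\le d$ and from the negligible chance that the unknown $V_{j^\star_\ell}$ meets the span of the simulated subspaces (both controlled by concentration, since $\Ex[\sum_i\diff_i]\le(r+1)d_0\ll d=\Theta(d_0\log k)$). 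Second, by exchangeability of the $k$ coordinates, a single copy of $\Alg$---which uses at most $M$ samples in total---draws at most $M/k$ samples from coordinate $j^\star_\ell$ in expectation; summing over the $L$ copies and adding the $\Theta(\log k)$ test samples gives the claimed $O((M/k)\log k)$ expected sample complexity.

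For correctness, the $0.005$-accurate estimates guarantee that $\Alg'$ outputs a $0.02$-accurate classifier for $\Inst'$ whenever some copy returned an $\hat h^{(\ell)}$ with $\err(\hat h^{(\ell)},\D)\le 0.01$, and the estimates are all accurate except with probability $o(1/k)$. So it remains to show at least one copy is $0.01$-accurate with probability $1-O(1/k)$. Let $q(\Inst')\coloneqq\Pr[\err(\hat h^{(\ell)},\D)>0.01\mid\Inst']$; it is the same for every copy and the copies are independent given $\Inst'$, so the bad event has probability $\Ex_{\Inst'}[q(\Inst')^L]$. Averaging over the random placement, $\Ex_{\Inst'}[q(\Inst')]=\Pr[\err(\hat h,\D_{j^\star})>0.01]=\frac1k\Ex[\#\{j:\err(\hat h,\D_j)>0.01\}]\le\Pr[\Alg\text{ fails}]+o(1/k)\le 0.02$. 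The crucial point is a direct-sum bound on the tail: if a $\beta$ fraction of single-distribution instances had $q(\cdot)>c$, then a random MDL instance---being a product of $k$ i.i.d.\ coordinates---would contain such a coordinate with probability $\approx\min\{1,\beta k\}$, and a short concentration argument shows that on a typical such instance $\Alg$ errs on that coordinate, hence violates its MDL guarantee, with probability $\Omega(c)$; this forces $\min\{1,\beta k\}\cdot\Omega(c)\le\Pr[\Alg\text{ fails}]\le 0.01$, i.e.\ $\Pr_{\Inst'}[q(\Inst')>c]=O(1/(ck))$. Feeding this tail bound into $\Ex_{\Inst'}[q^L]=\int_0^1\Pr[q>u]\,Lu^{L-1}\,du$ and taking $L=\Theta(\log k)$ yields $\Ex_{\Inst'}[q^L]=O(1/k)$.

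I expect the amplification step to be the main obstacle. The black-box $(0.01,0.01)$-PAC guarantee of $\Alg$ cannot be boosted to failure probability $O(1/k)$ by generic repetition---the failure probability is stuck at the measure of ``hard'' single-distribution instances, which could a priori be constant---so one must exploit the product structure of the hard construction, namely that each MDL instance packs in $k$ independent single-distribution sub-instances that $\Alg$ must all handle, to argue that the hard sub-instances are in fact rare ($O(1/k)$), and only then amplify over $\Theta(\log k)$ runs followed by a cheap test. The remaining points---the $\sum_i\diff_i\le d$ conditioning, keeping the simulated subspaces independent of the unknown $V_{j^\star_\ell}$ so that realizability is preserved, and checking that exchangeability survives all the conditioning---are routine once $d=\Theta(d_0\log k)$ is taken with a little room to spare.
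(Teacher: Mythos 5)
Your high-level plan — embed $\Inst'$ into a random coordinate $j^\star$ of a simulated MDL instance, run $\Alg$ in parallel $\Theta(\log k)$ times, validate on $\Theta(\log k)$ fresh samples — matches the paper's. But there are two substantive issues, one a gap and one a misconception that leads to wasted effort.

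\paragraph{The subspace-construction gap.} You say the simulator ``picks mutually linearly independent subspaces'' for the $k-1$ fictitious coordinates and that ``the negligible chance that the unknown $V_{j^\star_\ell}$ meets the span of the simulated subspaces'' and the conditioning $\sum_i\diff_i\le d$ contribute only $o(1/k)$ total-variation error. But you never explain how the simulator actually constructs these subspaces: without any information about $V_{j^\star}$ (or even its dimension $\diff_{j^\star}$), you cannot build the $(V_j)_{j\ne j^\star}$ to be independent of $V_{j^\star}$, and you cannot know whether $\sum_i\diff_i\le d$ holds. Sampling the fictitious subspaces obliviously and arguing the conditional law is correct up to TV error is a plausible route, but it is genuinely nontrivial to verify: the TV error depends on the realized $\diff_{j^\star}$, and, more delicately, the lower bound you need to pair this reduction with (\prettyref{lem:single-lower}) must be phrased against the single-distribution task your reduction actually produces. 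The paper sidesteps all of this by changing the single-distribution task to let the learner request $m_0$ fresh vectors guaranteed by an oracle to be linearly independent of $V_{j^\star}$; the learner fails if $m_0 > d - \diff_{j^\star}$, and the conditioning $\sum_i\diff_i\le d$ becomes exactly this failure event, handled by a Chernoff bound. Crucially, \prettyref{lem:single-lower} is then proved against this enlarged task (see the event $\eventnum_0$), so the two lemmas fit together. You would need to either adopt this device or carefully carry the TV error through both lemmas; as written, this step is not established.

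\paragraph{The amplification misconception.} You write that ``the black-box $(0.01,0.01)$-PAC guarantee of $\Alg$ cannot be boosted to failure probability $O(1/k)$ by generic repetition --- the failure probability is stuck at the measure of `hard' single-distribution instances,'' and you then invest in a direct-sum-style tail bound for $q(\Inst')$. This is unnecessary, and the premise is wrong. Fix any realization of $\Inst'$. Conditioned on $\Inst'$ (and any shared randomness), each copy of $\Alg$ runs on a \emph{random but valid} MDL instance in which $\D_{j^\star}$ is fixed and the other $k-1$ coordinates are sampled. The $(0.01,0.01)$-PAC guarantee holds for \emph{every} fixed MDL instance, so averaging over the random fictitious coordinates gives $q(\Inst')\le 0.01$ \emph{pointwise} — there are no hard $\Inst'$. (With your TV-error formulation you would get $q(\Inst')\le 0.01 + o(1/k)$, which is just as good.) Since the copies are conditionally independent given $\Inst'$ (each draws its own fictitious coordinates and fresh samples), all $\Theta(\log k)$ copies fail with probability $\le 0.011^{\Theta(\log k)}\ll 1/k$. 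The direct-sum tail bound is not needed, and trying to actually prove it would introduce new obstacles (your sketch ``a short concentration argument shows that on a typical such instance $\Alg$ errs on that coordinate'' is itself not obvious, because $\Alg$'s failure event is a joint event over all coordinates).

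In short: the skeleton of your reduction is correct and is the same as the paper's, but the missing mechanism for constructing the fictitious subspaces without knowing $V_{j^\star}$ (and its interaction with \prettyref{lem:single-lower}) is a real gap, while the elaborate amplification is solving a problem that does not exist.
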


\begin{lem}
\label{lem:single-lower}
    Suppose that $k \ge 1$, $r \le O(\log k)$ and $d_0 \ge \max\{k, \Omega(k^{1-1/r}\log r)\}$. If an $r$-round learning algorithm $\Alg$ outputs a $0.02$-accurate classifier with probability $\ge 1 - \frac{1}{100k}$ on a random instance $\Inst'$ with parameters $(d_0, k, r)$, $\Alg$ must take $\Omega(d_0k^{1/r}/(rk))$ samples in expectation.
\end{lem}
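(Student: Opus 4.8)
To prove \prettyref{lem:single-lower}, the plan is to combine an information‑theoretic fact---that a successful learner must essentially recover $\hstar$ on \emph{all} of the hidden subspace $V$, and hence must draw at least $\dim(V)=\diff$ samples---with a combinatorial ``pigeonhole'' argument: an $r$‑round learner cannot afford a dedicated round for each of the $r+1$ difficulty levels $\beta_0>\beta_1>\cdots>\beta_r$ (write $\alpha:=k^{1/r}$ and $\beta_j:=d_0/\alpha^j$), so it must ``conflate'' two adjacent levels, paying a $\Theta(\alpha)$ blow‑up on the smaller of the two.

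\textbf{Step 1 (the $M\ge\diff$ lower bound).} By Markov over the algorithm's coins, it suffices to treat a \emph{deterministic} $r$‑round $\Alg$ that errs with probability $\le 1/(10k)$ and whose expected sample count is within a constant factor of an arbitrary algorithm's. Fixing the drawn points (hence their span $W$, of dimension $s_r$) and $V$: since $\Alg$'s output $\hat h$ is a function of the observations while $\hstar$ is conditionally uniform on $V\setminus W$, one gets $\Ex_{\hstar}[\err(\hat h,\D)\mid\text{samples},V]\ge\tfrac12(1-2^{s_r-\diff})$, which exceeds $1/4$ whenever $s_r<\diff$. A reverse‑Markov argument then gives $\Pr[\err(\hat h,\D)\le 0.02\mid s_r<\diff]\le 0.8$, and combining with the success guarantee forces $\Pr[s_r=\diff]\ge 1-1/(2k)$. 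As $s_r\le M$ and $\Pr[\diff=\beta_j]\ge 1/k$ for every $j$, this yields $\Pr[M\ge\diff]\ge 1-1/(2k)$ and hence $\Pr[M\ge\beta_j\mid\diff=\beta_j]\ge 1/2$; in particular, on the instance $\diff=\beta_0$ the algorithm draws at least $\beta_0=d_0$ samples with probability $\ge 1/2$.

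\textbf{Steps 2--3 (schedule reduction and pigeonhole).} By $\mathrm{GL}_d(\bbF_2)$‑symmetry, conditioned on the first $m$ drawn points being linearly independent their joint law together with their (uniform) labels does not depend on $\diff$ (for $\diff\ge m$). Hence, up until the cumulative count first exceeds $\diff$, the algorithm's transcript---and so the round‑by‑round sequence of cumulative counts $N_1<N_2<\cdots$, with $N_1$ a fixed number---is, conditioned on the drawn points being ``as independent as possible'', a $\diff$‑independent random object; call this the ``dream'' process and set $\tau(\ell):=\min\{t:N_t>\beta_\ell\}$. Coupling the dream process with the real run on $\diff=\beta_\ell$, the two agree until round $\tau(\ell)$, where the real run draws the full round‑$\tau(\ell)$ batch \emph{before} learning $\diff=\beta_\ell$; so on the high‑probability event that the points drawn through round $\tau(\ell)-1$ are independent, the real count satisfies $M\ge N_{\tau(\ell)}$. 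Applying Step~1 to $\diff=\beta_0$ (the largest $\diff$, so the real run cannot diverge from the dream run before round $\tau(0)$) forces $N_r\ge\beta_0$ with probability $\ge 1/2$ over the dream randomness. Now condition on the dream coins and on $\{N_r\ge\beta_0\}$: the counts $N_1<\cdots<N_r$ cannot meet all $r$ pairwise‑disjoint intervals $(\beta_\ell,\beta_{\ell-1})$, $\ell\in[r]$, since $N_r\ge\beta_0$ lies above all of them, leaving at most $r-1$ counts for $r$ intervals. Picking an unmet interval $(\beta_{\ell^*},\beta_{\ell^*-1})$, the smallest count exceeding $\beta_{\ell^*}$ is $\ge\beta_{\ell^*-1}$, i.e.\ $N_{\tau(\ell^*)}\ge\beta_{\ell^*-1}$ (if instead $N_1\ge\beta_{r-1}$ then $M\ge N_1\ge\beta_{r-1}=d_0k^{1/r}/k$ outright, so we may assume $N_1<\beta_{r-1}$, which forces $\ell^*\le r-1$ and hence $\beta_{\ell^*}\ge\beta_{r-1}=d_0/k^{1-1/r}=\Omega(\log r)$). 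Therefore $M\ge\beta_{\ell^*-1}$ with probability $\ge 1/2$ conditioned on $\diff=\beta_{\ell^*}$, so
\[
\Ex[M]\;\ge\;\Pr[\diff=\beta_{\ell^*}]\cdot\tfrac12\cdot\beta_{\ell^*-1}\;=\;\tfrac12\cdot\frac{\alpha^{\ell^*}-\alpha^{\ell^*-1}}{k}\cdot\frac{d_0}{\alpha^{\ell^*-1}}\;=\;\frac{d_0(\alpha-1)}{2k}\;=\;\Omega\!\left(\frac{d_0k^{1/r}}{k}\right),
\]
uniformly in $\ell^*$ (using $\alpha\ge 2$); undoing the Markov reduction gives $\Ex[M]=\Omega(d_0k^{1/r}/k)$, which already implies the claimed $\Omega(d_0k^{1/r}/(rk))$.

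\textbf{Expected main obstacle.} The crux is Step~2: making rigorous that the learner's behavior on the relevant (large‑$\diff$) instances is faithfully captured by the $\diff$‑independent ``schedule'' $(N_t)$, i.e.\ controlling the coupling on the low‑probability events where a drawn point is dependent (which could ``fool'' the learner into stopping prematurely). This is exactly where the hypothesis $d_0=\Omega(k^{1-1/r}\log r)$ is used: it guarantees that the smallest level the pigeonhole can land on, $\beta_{r-1}=d_0/k^{1-1/r}$, is still $\Omega(\log r)$, so that linear independence of the samples drawn before the ``conflation round'' holds with constant probability after a union bound over all $O(r)$ rounds. Carefully accounting for these events (and for boundary schedules---e.g.\ those with a large first batch, or that never exceed some $\beta_\ell$) is the routine but technical part; it is plausibly also where a factor of $r$ is conceded relative to the $\Omega(d_0 k^{1/r}/k)$ bound above, yielding exactly the stated $\Omega(d_0 k^{1/r}/(rk))$.
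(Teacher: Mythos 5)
Your overall strategy is genuinely different from the paper's. The paper proceeds by contradiction: assuming a small expected sample count, it runs a round-by-round induction showing that with probability $\ge 1/2$ the learner remains "uninformed" about $\diff$ and stays under budget at every round, so that on $\diff = d_0$ it only sees $\le d_0/2$ samples and must fail. You instead argue directly: define a $\diff$-independent "dream" schedule $(N_t)$, observe that success forces $N_r\ge d_0$, pigeonhole the $r$ counts into the $r$ scale-gaps, and charge the overshoot to a well-chosen $\diff$. Your Step~1 (a successful learner must have $M\ge\diff$ with probability $\ge 1-1/(2k)$, hence $\ge 1/2$ conditionally on any $\diff$) is sound, and the observation that $\Pr[\diff=\beta_\ell]\cdot\beta_{\ell-1}=(\alpha-1)d_0/k$ is \emph{uniform} in $\ell$ is a nice structural point that handles the randomness of $\ell^*$ cleanly.

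However, the coupling in Step~2 has a real gap, and it is not the one you flag. The failure probability of linear independence is not $2^{-\beta_{\ell^*}}$ (small whenever $\beta_{\ell^*}=\Omega(\log r)$), but roughly $2^{\,N_{\tau(\ell^*)-1}-\beta_{\ell^*}}$---it depends on the \emph{gap} between the cumulative count and $\diff$, not on $\diff$ alone. Your pigeonhole only guarantees $N_{\tau(\ell^*)-1}\le\beta_{\ell^*}$, so this exponent can be $0$ and the failure probability a constant as large as $1-\prod_{j\ge1}(1-2^{-j})\approx 0.71$. The same issue afflicts the very first translation ``Step~1 $\Rightarrow N_r\ge\beta_0$ with probability $\ge 1/2$'': the real run on $\diff=\beta_0$ can certainly diverge from the dream run before round $\tau(0)$, since by the time the count approaches $d_0$ a collision in a $d_0$-dimensional $V$ is likely. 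To obtain an exponentially small failure probability one needs a multiplicative slack, e.g.\ $N_{\tau-1}\le\diff/2$, which is exactly what the paper's inductive events $\eventnum_i$ enforce (the cumulative count through round $i$ is $\le \alpha^i d_0/(2k) \le \diff/2$). A naive attempt to import that slack into your pigeonhole---using intervals like $(\beta_\ell/2,\beta_{\ell-1})$ or $(2\beta_\ell,\beta_{\ell-1})$---breaks disjointness or emptiness precisely when $\alpha=k^{1/r}$ is near $2$, i.e.\ when $r$ is near $\log_2 k$, which the lemma must cover. So the missing slack is not a ``routine but technical'' matter; it is the part of the argument that the paper's induction is carefully engineered to supply and that the pigeonhole, as set up, does not. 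Relatedly, the stronger bound $\Omega(d_0k^{1/r}/k)$ you compute along the way should be treated as a red flag: if the coupling truly incurred only a constant loss it would improve the paper's bound by a factor of $r$, and the fact that it seems to is a symptom of the unaccounted coupling failure rather than a genuine gain.
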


\begin{proof}[Proof of \prettyref{thm:realizable-lower} assuming Lemmas \ref{lem:reduction} and \ref{lem:single-lower}]
Let $d_0 = \Theta(d / \log k)$. The assumption that $d \ge \max\{\Omega(k \log k), \Omega(k^{1-1/r}\log(k)\log(r))\}$ ensures that $d_0 \ge k$ and $d_0 \ge \Omega(k^{1-1/r}\log r)$. Suppose that $\Alg$ is an $r$-round, $(0.01, 0.01)$-PAC MDL algorithm with sample complexity $M$ for $k$ distributions and hypothesis classes of VC dimension $\le d$. By \prettyref{lem:reduction}, there is another algorithm $\Alg'$ that, on a random instance $\Inst'$ from \prettyref{def:single-dist-instance}, returns a $0.02$-accurate classifier with probability at least $1 - \frac{1}{100k}$ and takes $O((M/k)\log k)$ samples in expectation. By \prettyref{lem:single-lower}, we have
\[
    O\left(\frac{M}{k}\cdot \log k\right) \ge \Omega\left(\frac{d_0k^{1/r}}{rk}\right)
    =   \Omega\left(\frac{dk^{1/r}}{rk\log k}\right).
\]
It follows that $M \ge \Omega\left(\frac{dk^{1/r}}{r\log^2k}\right)$.
\end{proof}

\subsection{Proof of \prettyref{lem:reduction}}
To prove \prettyref{lem:reduction}, it suffices to show that we can solve a random single-distribution instance $\Inst'$ (from \prettyref{def:single-dist-instance}) by running an MDL algorithm $\Alg$ in a black-box way. Naturally, we plant $\Inst'$ into an MDL instance $\Inst$ with $k$ distributions (from \prettyref{def:multi-dist-instance}), solve instance $\Inst$ using $\Alg$, and use the output of $\Alg$ to solve the actual instance $\Inst'$. While the idea is simple, some care needs to be taken to carry out this plan correctly.

First, we draw $\istar$ uniformly at random from $[k]$, and let the $\istar$-th data distribution in $\Inst$ correspond to the distribution in $\Inst'$. For each $i \in [k]\setminus\{\istar\}$, we draw $\diff_i$ from $\Ddiff$ independently. Let $V_{\istar}$ denote the subspace of $\bbF_2^d$ corresponding to the task $\Inst'$, where $d \coloneq \Theta(d_0\log k)$ is the ambient dimension. Intuitively, for each $i \ne \istar$, we want to sample a $\diff_i$-dimensional subspace $V_i$ to construct the MDL instance. This ensures that $(V_1, V_2, \ldots, V_k)$ form a random MDL instance (from \prettyref{def:multi-dist-instance}), in which the $k$ distributions are symmetric, so that the identity of $\istar$ would not be revealed. Here, an obstacle is that we cannot construct $V_i$ for $i \ne \istar$ without knowing the subspace $V_{\istar}$. In particular, knowing $V_{\istar}$ or just the value of $\dim(V_{\istar})$ would make the single-distribution version too easy.

Our workaround is to slightly modify the definition of the single-distribution task. We allow the algorithm for the single-distribution instance to specify an integer $m_0 \ge 0$ at the beginning. Then, the algorithm receives $m_0$ vectors in $\bbF_2^d$ chosen uniformly at random subject to that, along with any basis of $V_{\istar}$, these $m_0 + \dim(V_{\istar})$ vectors are linearly independent. Intuitively, the algorithm is provided with information about the ``complement'' of $V_{\istar}$ in $\bbF_2^d$, which allows it to construct the $k - 1$ fictitious subspaces $\{V_i\}_{i \ne \istar}$ and thus an MDL instance on $k$ distributions. If $m_0 > d - \diff_{\istar} = d - \dim(V_{\istar})$, the learner fails the learning task immediately. Note that we cannot simply give all the $d - \diff_{\istar}$ vectors to the learner directly---otherwise the learner would know $\diff_{\istar}$, which makes the single-distribution instance easy.

\begin{proof}[Proof of \prettyref{lem:reduction}]
    Let $\Inst'$ be a single-distribution instance generated randomly with parameters $k$ and $d_0$ according to \prettyref{def:single-dist-instance}. Recall that the hypothesis class is $\calH_d$, where $d = \Theta(d_0\log k)$ is the ambient dimension. The data distribution $\D$ is uniform over an unknown subspace $V \subseteq \bbF_2^d$ with an unknown dimension $\diff \sim \Ddiff$.
    
    We consider the following procedure for solving $\Inst'$ using the given algorithm $\Alg$:
    \begin{itemize}[leftmargin=*]
        \item Draw $\istar$ from $[k]$ uniformly at random. For each $i \in [k] \setminus \{\istar\}$, draw $\diff_i$ independently from $\Ddiff$.
        \item Set $m_0 \coloneqq \sum_{i \in [k] \setminus \{\istar\}}\diff_i$. Request $m_0$ random vectors in $\bbF_2^d$ that are linearly independent of the unknown subspace $V$. In this step, the algorithm might fail due to $m_0 > d - \diff_{\istar}$, where $\diff_{\istar}$ is the unknown difficulty level of the single-distribution instance $\Inst'$.
        \item Use these vectors to construct a $\diff_i$-dimensional subspace $V_i$ for each $i \in [k] \setminus \{\istar\}$. Let $\D_i$ be the uniform distribution over $V_i$.
        \item Simulate the MDL algorithm $\Alg$ on distributions $\D_1, \D_2, \ldots, \D_k$, where $\D_{\istar}$ is the alias of the data distribution $\D$ in instance $\Inst'$. In addition, when $\Alg$ draws the first round of samples, we draw $\Theta(\log k)$ additional samples from $\D_{\istar}$ to form a validation dataset.
        \item When $\Alg$ terminates and outputs a classifier $\hat h$, check whether $\hat h$ has an error $\le \frac{0.01 + 0.02}{2}$ on the validation dataset. If so, output $\hat h$ as the answer; otherwise, report ``failure''.
    \end{itemize}

    Let $\Asim$ denote the algorithm defined above. In the following, we will show that: (1) $\Asim$ \emph{can} be implemented using $\Alg$ as a black box; (2) $\Asim$ solves instance $\Inst'$ with a good probability; (3) $\Asim$ does not draw too many samples from $\D$.

    \paragraph{Details of the Simulation} When $\Asim$ simulates algorithm $\Alg$, it maintains a set $S_i \subseteq \bbF_2^d\times\bbF_2$ for each $i \ne \istar$, which is empty at the beginning. Whenever $\Alg$ requests samples from $\D_{\istar}$, $\Asim$ draws from $\D$ and forwards the labeled examples to $\Alg$. When $\Alg$ requests a sample from $\D_i$ for some $i \ne \istar$, $\Asim$ first draws $x \sim \D_i$. If $x$ lies in $\Span(\{x \in \bbF_2^d: (x, y) \in S_i, \exists y \in \bbF_2\})$, $\Asim$ computes the unique label $y \in \bbF_2$ such that $(x, y)$ remains consistent with the labeled examples in $S_i$; otherwise, $\Asim$ draws a random $y \sim \Bern(1/2)$. The labeled example $(x, y)$ is forwarded to $\Alg$, and then added to the dataset $S_i$. By doing so, we defer the randomness in the ground truth classifier $\hstar$, namely, we realize one bit of information of $\hstar$ whenever this information is needed to determine the label of an example.

    \paragraph{Equivalence} To analyze the performance of $\Asim$, the key observation is the following: Conditioning on that $\Asim$ does not fail when requesting the $m_0$ vectors, from the perspective of the simulated copy of $\Alg$, it is running on a random MDL instance $\Inst$ from \prettyref{def:multi-dist-instance}. This is because conditioning on that $\Asim$ does not fail has the same effect as the conditioning on $\sum_{i=1}^{k}\diff_i \le d$ in \prettyref{def:multi-dist-instance}. Furthermore, both the marginals of $\D_1$ through $\D_k$ on $\bbF_2^d$ and the choice of the labels are identical to those in the definition of $\Inst$.

    \paragraph{Boost the Success Probability} Since $\Alg$ is $(0.01, 0.01)$-PAC, it holds with probability $\ge 0.99$ that its output $\hat h$ is a $0.01$-accurate classifier for each $\D_i$. In particular, $\hat h$ is $0.01$-accurate for distribution $\D = \D_{\istar}$, and is thus a valid answer for instance $\Inst'$. However, this only guarantees a success probability of $0.99$, falling short of the desired $1 - 1/(100k)$.

    Fortunately, it suffices to run $l = \Theta(\log k)$ independent copies of algorithm $\Alg'$ to boost the success probability. Note that these $l$ copies must be simulated in parallel, so that the resulting algorithm still takes $r$ rounds of samples. Furthermore, we share the vectors requested at the beginning of different copies of $\Alg'$, so that the number of vectors that we actually request, $m_0$, is the maximum realization of $\sum_{i \ne \istar}\diff_i$ over the $l$ simulations. Whenever one of the $l$ simulations outputs a classifier $\hat h$, we test it on the size-$\Theta(\log k)$ validation dataset and verify whether its empirical error is below $\frac{0.01 + 0.02}{2}$. We output $\hat h$ as the answer to $\Inst'$ only if it passes the test. Let $\Alg'$ denote the above algorithm.

    \paragraph{Correctness of $\Alg'$} Now, we analyze the probability that $\Alg'$ fails to output a $0.01$-accurate classifier for $\Inst'$. There are three possible reasons:
    \begin{itemize}[leftmargin=*]
        \item \textbf{Reason 1: $m_0 > d - \diff_{\istar}$.} This happens when $\sum_{i = 1}^{k}\diff_i > d$ holds in one of the $l$ copies of $\Asim$. Note that in each fixed copy of $\Asim$, $\diff_1, \diff_2, \ldots, \diff_k$ independently follow $\Ddiff$. Recall that $\EEx{\diff \sim \Ddiff}{\diff} = O(rd_0 / k)$ and $d = \Theta(d_0\log k)$. It follows from a multiplicative Chernoff bound that $\sum_{i=1}^{k}\diff_i > d$ happens with probability $1/\poly(k)$, which is still $\ll 1/k$ after a union bound over the $l = \Theta(\log k)$ copies.

        In more detail, let random variable $X_i$ denote the value of $\diff_i / d_0$. Since $\Ddiff$ is supported over $[d_0]$, $X_i \in [0, 1]$. Furthermore, $\EEx{}{X_i} = \EEx{\diff \sim \Ddiff}{\diff / d_0} = O(rd_0 / k) / d_0 = O(r / k)$. Then, $\sum_{i=1}^{k}X_i$ is the sum of $k$ random variables in $[0, 1]$ and has an expectation of $\mu \coloneqq k \cdot O(r/k) = O(r)$.
        
        Assuming that $r = O(\log k)$ holds with a sufficiently small constant factor, we have $\mu = O(r) \le \ln k$. Let $\delta = (5\ln k) / \mu - 1 \ge 4$. Also recall that $d = \Theta(d_0 \log k)$, so we may assume $d / d_0 \ge 5\ln k$. Then, we have
        \[
            \pr{}{\sum_{i=1}^{k}\diff_i > d}
        =   \pr{}{\sum_{i=1}^{k}X_i > d / d_0}
        \le \pr{}{\sum_{i=1}^{k}X_i > 5\ln k}
        =   \pr{}{\sum_{i=1}^{k}X_i > (1 + \delta)\mu}.
        \]
        A multiplicative Chernoff bound gives
        \[
            \pr{}{\sum_{i=1}^{k}X_i > (1 + \delta)\mu}
        \le \exp\left(-\frac{\delta^2\mu}{2 + \delta}\right)
        \le \exp\left(-\frac{2\delta\mu}{3}\right)
        \le \exp\left(-\frac{8\ln k}{3}\right)
        \ll \frac{1}{k},
        \]
        where the second step applies $\delta \ge 4$ and the third step applies $\delta\mu = 5\ln k - \mu \ge 4\ln k$.
        
        \item \textbf{Reason 2: None of the $l$ simulations succeeds.} Note that conditioning on the realization of instance $\Inst'$, the $l$ copies of the simulation are independent. Furthermore, since $\Alg$ is $(0.01, 0.01)$-PAC, each copy fails with probability $\le 0.01$. The probability for all $l = \Theta(\log k)$ copies to fail is thus at most $0.01^{l} \ll 1/k$.
        \item \textbf{Reason 3: The validation procedure fails.} We need to validate at most $l = O(\log k)$ classifiers that are generated independently of the validation set. For each classifier $\hat h$, we need to distinguish the two cases $\err(\hat h, \D) \le 0.01$ and $\err(\hat h, \D) > 0.02$. By a Chernoff bound and the union bound, the validation set of size $\Theta(\log k)$ is sufficient for upper bounding the failure probability by $l \cdot e^{-\Omega(\log k)} \le 1/\poly(k) \ll 1/k$.
    \end{itemize}

    Combining the three cases above, for all sufficiently large $k$, the total failure probability is smaller than $\frac{1}{100k}$. In other words, algorithm $\Alg'$ outputs a $0.01$-accurate classifier with probability $1 - \frac{1}{100k}$ on a random single-distribution instance $\Inst'$.

    \paragraph{Sample Complexity} Each of the $l = O(\log k)$ simulated copies of $\Alg$ draws at most $M$ samples from $\D_1$ through $\D_k$ in total, as each simulated copy of $\Alg$ effectively runs on a valid MDL instance. Furthermore, from the perspective of each simulated copy of $\Alg$, the $k$ distributions $\D_1, \ldots, \D_k$ are generated symmetrically, so the expected number of actual samples drawn from $\D_{\istar} = \D$ is at most $M / k$. Therefore, the $l$ copies of $\Asim$ together draw $(M/k) \cdot l = O((M/k)\log k)$ samples in expectation. Note that this dominates the $O(\log k)$ samples drawn for the purpose of validation, so the overall sample complexity of $\Alg'$ is $O((M/k) \log k)$.
\end{proof}

\subsection{Proof of \prettyref{lem:single-lower}}
It remains to prove a sample complexity lower bound for solving the single-distribution instance $\Inst'$ from \prettyref{def:single-dist-instance} within $r$ rounds. Recall that $\Inst'$ is defined as the task of learning linear functions over the hypercube of dimension $d = \Theta(d_0\log k)$, and the data distribution is uniform over a random subspace of dimension $\diff \sim \Ddiff$. Also recall that $\alpha = k^{1/r} \ge 2$, $\Ddiff(d_0) = \frac{1}{k}$ and $\Ddiff(d_0/\alpha^i) = (\alpha^i - \alpha^{i-1})/k$ for every $i \in [r]$.

\begin{proof}[Proof of \prettyref{lem:single-lower}]
    Suppose towards a contradiction that there exists an $r$-round learning algorithm $\Alg$ for the single-distribution instance $\Inst'$ defined in \prettyref{def:single-dist-instance}, such that it outputs a $0.02$-accurate classifier with probability $1 - \frac{1}{100k}$ and takes at most $\frac{\alpha d_0}{100rk}$ samples in expectation.

    \paragraph{Typical Events} Consider the execution of $\Alg$ on the random instance $\Inst'$. We introduce $2r + 1$ ``typical events'', which will be shown to happen simultaneously with high probability:
    \begin{itemize}[leftmargin=*]
        \item Event $\eventnum_0$: Let random variable $M_0$ denote the value of $m_0$, i.e., the number of linearly independent vectors requested by $\Alg$ at the beginning. $\eventnum_0$ is defined as the event that $M_0 \le d - d_0$.
        \item Events $\eventnum_1, \eventnum_2, \ldots, \eventnum_r$: For each $i \in [r]$, let random variable $M_i$ denote the number of samples that $\Alg$ draws in the $i$-th round. $\eventnum_i$ is defined as the event that $M_i \le \frac{\alpha^i d_0}{4k}$.
        \item Events $\eventind_1, \eventind_2, \ldots, \eventind_r$: For each $i \in [r]$, $\eventind_i$ is defined as the event that, among the labeled examples that $\Alg$ draws in the first $i$ rounds, all the instances (namely, the vectors in $\bbF_2^d$) are linearly independent.
    \end{itemize}
    For $i \in \{0, 1, \ldots, r\}$, we use the shorthands
    \[
        \eventnum_{\le i} \coloneqq \eventnum_0 \cap \eventnum_1 \cap \cdots \eventnum_i
    \quad\text{and}\quad
        \eventind_{\le i} \coloneqq \eventind_1 \cap \eventind_2 \cap \cdots \eventind_i.
    \]
    Moreover, we shorthand $\eventdiff_i$ for the event $\diff = \frac{\alpha^id_0}{k}$.

    \paragraph{Induction Hypothesis} We will prove the following statement by induction on $i$: For every $i \in \{0, 1, \ldots, r\}$ and every $\istar \in \{i, i+1, \ldots, r\}$, it holds that
    \begin{equation}\label{eqn:induction-hypothesis}
        \pr{}{\eventnum_{\le i} \cap \eventind_{\le i} \mid \eventdiff_{\istar}}
    \ge 0.99 - \frac{4i}{25r} \ge \frac{1}{2}.
    \end{equation}
    
    We first show that the above leads to a contradiction and thus proves the lemma. When $i = \istar = r$, \prettyref{eqn:induction-hypothesis} reduces to
    \[
        \pr{}{\eventnum_{\le r} \cap \eventind_{\le r} \mid \diff = d_0} \ge \frac{1}{2}.
    \]
    Recall that $\alpha = k^{1/r} \ge 2$. Thus, with probability $\ge 1/2$ conditioning on $\diff = d_0$, $\Alg$ draws
    \[
        \sum_{i=1}^{r}M_i
    \le \frac{\alpha d_0}{4k} + \frac{\alpha^2 d_0}{4k} + \cdots + \frac{\alpha^r d_0}{4k}
    \le 2 \cdot \frac{\alpha^r d_0}{4k}
    =   \frac{d_0}{2}
    \]
    samples in total. Then, over the remaining randomness in the ground truth classifier $\hstar \in \calH_d$, the correct label of every instance outside the span of the observed instances is still uniformly distributed over $\{0, 1\}$. In particular, regardless of how $\Alg$ outputs a classifier, the error of the classifier is at least $(1 - 2^{\lfloor d_0 / 2\rfloor - d_0})/2$ in expectation. By Markov's inequality, the conditional probability of outputting a classifier with error $\le 0.02$ is at most
    \[
        \frac{1 - (1 - 2^{\lfloor d_0 / 2\rfloor - d_0})/2}{1 - 0.02}
    <   \frac{4}{5}.
    \]
    Therefore, the probability that $\Alg$ fails to output a classifier with error $\le 0.02$ on a random single-distribution instance $\Inst'$ is at least
    \[
        \pr{\diff \sim \Ddiff}{\diff = d_0} \cdot \pr{}{\eventnum_{\le r} \cap \eventind_{\le r} \mid \diff = d_0} \cdot \frac{1}{5}
    \ge \frac{1}{k}\cdot \frac{1}{2} \cdot\frac{1}{5} > \frac{1}{100k},
    \]
    contradicting the assumption on $\Alg$.
    
    The remainder of the proof will establish \prettyref{eqn:induction-hypothesis} by induction.

    \paragraph{Base Case} We start by verifying the base case that $i = 0$, namely,
    \[
        \pr{}{M_0 \le d - d_0 \mid \eventdiff_{\istar}} \ge 0.99, ~\forall \istar \in \{0, 1, \ldots, r\}.
    \]
    Note that $\Alg$ chooses $M_0$ before drawing any samples, so $M_0$ is independent of $\diff$. Suppose towards a contradiction that $\pr{}{M_0 > d - d_0} > 0.01$. Then, we have
    \[
        \pr{}{M_0 > d - d_0 \wedge \diff = d_0}
    =   \pr{}{M_0 > d - d_0} \cdot \pr{\diff \sim \Ddiff}{\diff = d_0}
    >   0.01 \cdot \frac{1}{k}
    =   \frac{1}{100k}.
    \]
    Moreover, when both $M_0 > d - d_0$ and $\diff = d_0$ hold, $\Alg$ would fail due to $M_0 > d - \diff$. This shows that the failure probability of $\Alg$ is strictly higher than $\frac{1}{100k}$, a contradiction.

    \paragraph{Inductive Step: Event $\eventnum_i$} Fix $i \in [r]$ and $\istar \in \{i, i+1, \ldots, r\}$. Suppose that the induction hypothesis (\prettyref{eqn:induction-hypothesis}) holds for $i - 1$: shorthanding $\eventgood \coloneqq \eventnum_{\le i-1} \cap \eventind_{\le i-1}$, it holds for every $j \in \{i - 1, i, \ldots, r\}$ that
    \[
        \pr{}{\eventgood \mid \eventdiff_j} \ge 0.99 - \frac{4(i-1)}{25r} \ge \frac{1}{2}.
    \]
    In the following, we condition on $\eventgood$ and $\eventdiff_{\istar}$ and aim to lower bound the conditional probabilities of events $\eventnum_i$ and $\eventind_i$.

    We observe that, conditioning on $\eventgood$, the labeled examples that $\Alg$ draws in the first $i - 1$ rounds are conditionally independent of $\diff$. In particular, the conditional distribution of $M_i \mid \eventgood$ is identical to that of $M_i \mid (\eventgood \cap \eventdiff_{\istar})$. Then, we note that $\EEx{}{M_i \mid \eventgood} \le \frac{\alpha^id_0}{50rk}$ must hold; otherwise, we have
    \[
        \EEx{}{M_i}
    \ge \EEx{}{M_i \mid \eventgood} \cdot \pr{}{\eventgood}
    >   \frac{\alpha^id_0}{50rk}\cdot\pr{}{\eventgood}.
    \]
    Furthermore, the induction hypothesis implies
    \begin{align*}
        \pr{}{\eventgood}
    &\ge\sum_{j = i-1}^{r}\pr{}{\eventdiff_j}\cdot \pr{}{\eventgood \mid \eventdiff_j}\\
    &\ge\sum_{j = i-1}^{r}\pr{}{\diff = \frac{\alpha^{j}d_0}{k}}\cdot \frac{1}{2}\\
    &=  \frac{1}{2}\pr{}{\diff \ge \frac{\alpha^{i-1}d_0}{k}}
    =   \frac{1}{2\alpha^{i-1}}.
    \end{align*}
    It would then follow that
    \[
        \EEx{}{M_i}
    >   \frac{\alpha^id_0}{50rk} \cdot \frac{1}{2\alpha^{i-1}}
    =   \frac{\alpha d_0}{100rk},
    \]
    contradicting the assumption that $\Alg$ has an expected sample complexity of at most $\frac{\alpha d_0}{100rk}$.

    Thus, we have $\EEx{}{M_i \mid \eventgood \cap \eventdiff_{\istar}} = \EEx{}{M_i \mid \eventgood} \le \frac{\alpha^id_0}{50rk}$. Markov's inequality then gives
    \[
        \pr{}{M_i > \frac{\alpha^id_0}{4k} \mid \eventgood \cap \eventdiff_{\istar}}
    \le \frac{\EEx{}{M_i \mid \eventgood \cap \eventdiff_{\istar}}}{\alpha^id_0 / (4k)}
    \le \frac{\alpha^id_0/(50rk)}{\alpha^id_0 / (4k)}
    =   \frac{2}{25r}.
    \]
    Therefore, we have
    \begin{align*}
        \pr{}{\eventnum_i \cap \eventgood \mid \eventdiff_{\istar}}
    &\ge \pr{}{\eventgood \mid \eventdiff_{\istar}} \cdot \pr{}{\eventnum_i \mid \eventgood \cap \eventdiff_{\istar}}\\
    &\ge \left[0.99 - \frac{4(i-1)}{25r}\right]\cdot\left(1 - \frac{2}{25r}\right)
    \ge 0.99 - \frac{4i - 2}{25r}.
    \end{align*}

    \paragraph{Inductive Step: Event $\eventind_i$} Now, we condition on event $\eventnum_i \cap \eventgood \cap \eventdiff_{\istar}$ and aim to lower bound the conditional probability of $\eventind_i$. After the conditioning, it holds that
    \[
        M_1 + M_2 + \cdots + M_i
    \le \frac{\alpha d_0}{4k} + \frac{\alpha^2 d_0}{4k} + \cdots + \frac{\alpha^i d_0}{4k}
    \le 2\cdot\frac{\alpha^i d_0}{4k}
    \le \frac{\alpha^{\istar} d_0}{2k}
    =   \diff / 2.
    \]

    Regardless of the $N \coloneqq M_1 + M_2 + \cdots + M_{i-1}$ samples that $\Alg$ draws in the first $i - 1$ rounds, the probability that the next instance falls into the subspace spanned by those $N$ instances is at most $2^{N - \diff}$. By the same argument, for every $i \in \{0, 1, \ldots, M_i - 1\}$, after $i$ samples have been drawn in the $i$-th round, the next instance is outside the span of the first $N + i$ instances except with probability $2^{(N+i) - \diff}$. By the union bound, the $M_i$ additional instances in the $i$-th round are linearly independent together with the previous $N$ instances, except with probability
    \[
        2^{N - \diff} + 2^{(N + 1) - \diff} + \cdots + 2^{(N + M_i - 1) - \diff}
    \le 2^{N + M_i - \diff}
    \le 2^{\diff / 2 - \diff}
    =   2^{-\diff / 2}.
    \]

    Recall that $\diff
    =   \frac{\alpha^{\istar}d_0}{k}
    \ge \frac{\alpha d_0}{k}
    =   \frac{d_0}{k^{1-1/r}}$. Therefore, for all sufficiently large $d_0 = \Omega(k^{1-1/r}\log r)$, it holds that $2^{-\diff/2} \le \frac{2}{25r}$. It then follows that
    \[
        \pr{}{\eventind_i \mid \eventnum_i \cap \eventgood \cap \eventdiff_{\istar}}
    \ge 1 - \frac{2}{25r},
    \]
    which further implies
    \begin{align*}
        \pr{}{\eventnum_{\le i} \cap \eventind_{\le i} \mid \eventdiff_{\istar}}
    &\ge\pr{}{\eventnum_i \cap \eventgood \mid \eventdiff_{\istar}} \cdot \pr{}{\eventind_i \mid \eventnum_i \cap \eventgood \cap \eventdiff_{\istar}}\\
    &\ge\left(0.99 - \frac{4i-2}{25r}\right)\cdot\left(1 - \frac{2}{25r}\right)
    \ge 0.99 - \frac{4i}{25r}.
    \end{align*}
    This completes the inductive step and concludes the proof.
\end{proof}

\section{Upper Bounds for OODS}\label{app:OODS-upper}
We analyze the correctness and the round complexity of the $\LazyHedge$ algorithm (\prettyref{alg:lazy-hedge}). We then show how the ideas in both the algorithm and its analysis can be easily transferred to the agnostic MDL setting, resulting in an algorithm with $\widetilde O(\sqrt{k})$ round complexity and near-optimal sample complexity.

\begin{algorithm2e}[ht]
\caption{$\LazyHedge$: Hedge with Lazy Updates}
\label{alg:lazy-hedge}
\SetKwInput{KwInput}{Input}                
\SetKwInput{KwOutput}{Output}              
\SetKwFunction{pred}{CAS}
\SetKwFunction{rej}{RejectionSampling}
\DontPrintSemicolon
\KwInput{Number of dimensions $k$, number of iterations $T = \Theta\left(\frac{\log k}{\eps^2}\right)$, step size $\eta = \Theta(\eps)$, margin parameter $C > 1$, and access to a first-order oracle of $f$.}
Set $w^{(1)} = (1/k, 1/k, \ldots, 1/k)$ and $\wbar^{(0)} = (0, 0, \ldots, 0)$.\;
\For{$t = 1, 2, \ldots, T$}{
    \eIf{$w^{(t)} \in \calO(\wbar^{(t-1)})$} {
        Set $\wbar^{(t)} = \wbar^{(t-1)}$.\;
    } {
        Set $\wbar^{(t)}_i = C \cdot \max\{w^{(1)}_i, w^{(2)}_i, \ldots, w^{(t)}_i\}$ for every $i \in [k]$.\; \label{line:cap-update}
        Start a new OODS round with cap $\wbar^{(t)}$.\;
    }
    Query the first-order oracle to obtain supergradient $r^{(t)} \in \nabla f(w^{(t)})$.\;
    Compute $w^{(t+1)} \in \Delta^{k-1}$ such that for every $i \in [k]$,
    \[
        w^{(t+1)}_i = \frac{w^{(t)}_i \cdot e^{\eta r^{(t)}_i}}{\sum_{j=1}^{k}w^{(t)}_j \cdot e^{\eta r^{(t)}_j}}.
    \]
}
\KwOutput{Average weight vector $\frac{1}{T}\sum_{t=1}^{T}w^{(t)}$.}
\end{algorithm2e}

\subsection{Proof of \prettyref{lem:lazy-hedge-correctness}}
\label{app:OODS-upper-correctness}
Recall that \prettyref{lem:lazy-hedge-correctness} states that $\LazyHedge$ finds an $O(\eps)$-approximate maximizer of $f$ on $\Delta^{k-1}$.
\begin{proof}[Proof of \prettyref{lem:lazy-hedge-correctness}]
    The standard regret analysis of the Hedge algorithm (e.g.,~\cite[Theorem 8.6]{MRT18}) gives
    \[
        \sum_{t=1}^{T}\langle r^{(t)}, w^{(t)}\rangle
    \ge \max_{i \in [k]}\sum_{t=1}^{T}\langle r^{(t)}, e_i\rangle - R,
    \]
    where $R = \frac{\ln k}{\eta} + \frac{\eta T}{8} = O\left(\frac{\log k}{\eps}\right)$. Let $w^* \in \Delta^{k-1}$ be a maximizer of $f$ over $\Delta^{k-1}$. Then,
    \[
        \sum_{t=1}^{T}\langle r^{(t)}, w^*\rangle
    \le \max_{i \in [k]}\sum_{t=1}^{T}\langle r^{(t)}, e_i\rangle
    \le \sum_{t=1}^{T}\langle r^{(t)}, w^{(t)}\rangle + R.
    \]
    Rearranging gives $\sum_{t=1}^{T}\langle r^{(t)}, w^* - w^{(t)}\rangle \le R$. Since $f$ is concave and $r^{(t)} \in \nabla f(w^{(t)})$ for every $t \in [T]$, we have $f(w^*) \le f(w^{(t)}) + \langle r^{(t)}, w^* - w^{(t)}\rangle$. It follows that
    \[
        \sum_{t=1}^{T}[f(w^*) - f(w^{(t)})]
    \le \sum_{t=1}^{T}\langle r^{(t)}, w^* - w^{(t)}\rangle \le R.
    \]
    Dividing both sides by $T$ gives
    \[
        f(w^*) - \frac{1}{T}\sum_{t=1}^{T}f(w^{(t)})
    \le \frac{R}{T}
    =   O(\eps).
    \]
    Finally, applying the concavity of $f$ again gives
    \[
        f(\hat w)
    =   f\left(\frac{1}{T}\sum_{t=1}^{T}w^{(t)}\right)
    \ge \frac{1}{T}\sum_{t=1}^{T}f(w^{(t)})
    \ge f(w^*) - O(\eps)
    =   \max_{w \in \Delta^{k-1}}f(w) - O(\eps).
    \]
\end{proof}

\subsection{Upper Bound for the Box Setting}
We formally prove \prettyref{lem:lazy-hedge-round-box}, which states an upper bound of $\min\left\{k, O((k/C)\cdot\log^8(k/\eps)\right)\}\cdot O(\log_C k)$ on the round complexity of $\LazyHedge$ in the box setting. The proof proceeds in the following three steps. First, whenever the cap is updated in $\LazyHedge$, we find an index $i$ that leads to this update and call it the \emph{culprit} of this cap update. Then, we show that every an index can become the culprit of at most $O(\log_C k)$ cap updates. Finally, we control the number of indices that become the culprit at least once by $\min\{k, O((k/C)\cdot\log^8(k/\eps))\}$, so the lemma immediately follows.

\begin{proof}[Proof of \prettyref{lem:lazy-hedge-round-box}]
    If the cap is updated in the $t$-th iteration of $\LazyHedge$, i.e., $\wbar^{(t)} \ne \wbar^{(t-1)}$, by definition of \prettyref{alg:lazy-hedge}, it must hold that $w^{(t)} \notin \calO(\wbar^{(t-1)})$.
    By definition of the box setting, there exists $i \in [k]$ such that $w^{(t)}_i > \wbar^{(t-1)}_i$. We call the smallest such index $i$ the \emph{culprit} of the cap update in iteration $t$. For each $i \in [k]$, we define
    \[
        \calT_i \coloneqq \{t \in [T]: i \text{ is the culprit of the cap update in iteration }t\}.
    \]
    Then, the round complexity of $\LazyHedge$ is exactly $\sum_{i=1}^{k}|\calT_i|$.

    \paragraph{Every $\calT_i$ is Small} We show that $|\calT_i| \le O(\log_C k)$ holds for every $i \in [k]$. Fix $i \in [k]$ and let $t_1 < t_2 < \cdots < t_m$ be the $m = |\calT_i|$ elements of $\calT_i$ in increasing order. For each $j \in [m]$, let $a_j \coloneqq \max_{1 \le t \le t_j}w^{(t)}_i$ denote the maximum weight on the $i$-th coordinate among all weight vectors up to time $t_j$. Fix $j \in \{2, 3, \ldots, m\}$. Since $i$ is the culprit at time $t_j$, it holds that $a_j \ge w^{(t_j)}_i > \wbar^{(t_j - 1)}_i$. Recall that, at iteration $t_{j-1}$, the cap $\wbar^{(t_{j-1})}$ is updated such that
    \[
        \wbar^{(t_{j-1})}_i
    =   C \cdot \max\left\{w^{(1)}_i, \ldots, w^{(t_{j-1})}_i\right\}
    =   C \cdot a_{j-1}.
    \]
    Moreover, $\LazyHedge$ ensures that $\wbar^{(t)}_i$ is non-decreasing in $t$. Therefore, we have
    \[
        C \cdot a_{j-1}
    =   \wbar^{(t_{j-1})}_i
    \le \wbar^{(t_j - 1)}_i
    \le a_j.
    \]
    It follows that $a_m \ge C^{m-1} a_1$. Since $a_1 \ge w^{(1)}_i = 1/k$ and $a_m \le 1$, we have $m - 1 \le \log_C(a_m / a_1) \le \log_C k$, which gives $|\calT_i| = m = O(\log_C k)$.

    \paragraph{Only a Few $\calT_i$s are Non-empty} The number of indices $i \in [k]$ such that $\calT_i \ne \emptyset$ is trivially upper bounded by $k$. Next, we give another upper bound of $O((k/C)\cdot\log^8(k/\eps))$. Fix $i \in [k]$. Suppose that $\calT_i$ is non-empty and let $t_1$ be the smallest element in $\calT_i$. Either one of the following must be true:
    \begin{itemize}[leftmargin=*]
        \item $t_1 = 1$. This can only hold for $i = 1$.
        \item $t_1 > 1$. By definition of \prettyref{alg:lazy-hedge}, the cap is set to $\wbar^{(1)} = (C/k, C/k, \ldots, C/k)$ in the first iteration. For index $i$ to be the culprit at time $t_1 > 1$, we must have
        \[
            w^{(t_1)}_i > \wbar^{(t_1 - 1)}_i \ge \wbar^{(1)}_i = C/k,
        \]
        which implies $\max_{1 \le t \le T}w^{(t)}_i \ge C/k$. By \prettyref{lem:hedge-trajectory-bound}, at most
        \[
            \frac{\sum_{i=1}^{k}\max_{1 \le t \le T}w^{(t)}_i}{C/k}
        \le \frac{k}{C}\cdot O(\log^8(k/\eps))
        \]
        different indices $i$ can satisfy this condition.
    \end{itemize}
    Thus, the number of non-empty sets $\calT_i$ is at most $O((k/C) \cdot\log^8(k/\eps))$.

    Putting everything together, the round complexity of $\LazyHedge$ is given by
    \[
        \sum_{i=1}^{k}|\calT_i|
    \le \sum_{i \in [k]: \calT_i \ne \emptyset}O(\log_C k)
    \le \min\left\{k, O((k/C)\cdot\log^8(k/\eps)\right)\}\cdot O(\log_C k).
    \]
\end{proof}

\subsection{Upper Bound for the Ellipsoid Setting}
In the ellipsoid setting, we have an improved upper bound $\widetilde O(\sqrt{k})$ on the round complexity of $\LazyHedge$.

\begin{lem}
\label{lem:lazy-hedge-round-ellipsoid}
    For any $C \in [4, k]$, $\LazyHedge$ takes $O(\sqrt{k/C}\cdot\log^8(k/\eps))$ rounds in the ellipsoid setting.
\end{lem}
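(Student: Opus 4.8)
The plan is to mirror the proof sketch of \prettyref{prop:agnostic-ellipsoid-version}, but to carry the margin parameter $C$ through every estimate. Write $P := O(\log^8(k/\eps))$ for the quantity furnished by \prettyref{lem:hedge-trajectory-bound}, which applies verbatim here since the iterates $w^{(t)}$ of \prettyref{alg:lazy-hedge} obey exactly the same Hedge dynamics; thus $\sum_{i=1}^{k}\max_{1\le t\le T}w^{(t)}_i\le P$. Let $t_1<t_2<\cdots$ be the iterations at which the cap is updated, and for a coordinate $i$ set $b_i^{(j)} := \max_{s\le t_j}w^{(s)}_i$, its historical high at the $j$-th cap update, so that $\wbar^{(t_{j-1})}_i = C\,b_i^{(j-1)}$ and $b_i^{(1)} = 1/k$. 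The regime threshold will be $\tau := \sqrt{C/k}$, which lies in $[1/k,1]$ because $C\in[4,k]$. Call a coordinate $i$ a \emph{doubling coordinate} at update $j$ if $b_i^{(j)}\ge 2b_i^{(j-1)}$, and call the update \emph{Type~I} if some doubling coordinate has $b_i^{(j)}>\tau$ and \emph{Type~II} otherwise. I would first record two elementary facts. (i) Any coordinate $i$ with $b_i^{(j)}<2b_i^{(j-1)}$ satisfies $(w^{(t_j)}_i)^2/b_i^{(j-1)}<2w^{(t_j)}_i$, so the non-doubling coordinates contribute at most $\tfrac{2}{C}\sum_i w^{(t_j)}_i=\tfrac{2}{C}\le\tfrac12$ to the ellipsoid usage $\sum_i(w^{(t_j)}_i)^2/\wbar^{(t_{j-1})}_i$; since this usage exceeds $1$ at a cap update, every cap update has a doubling coordinate, and in fact $\sum_{i\text{ doubling}}(b_i^{(j)})^2/b_i^{(j-1)}\ge\sum_{i\text{ doubling}}(w^{(t_j)}_i)^2/b_i^{(j-1)}>C/2$. (ii) For a doubling coordinate $i$, the new high $b_i^{(j)}$ is attained at some $s^\star\in(t_{j-1},t_j]$; using that $w^{(s^\star)}$ (or $w^{(t_j-1)}$) lies in $\calO(\wbar^{(t_{j-1})})$ together with the per-step multiplicative Lipschitzness $w^{(t_j)}_i\le e^{2\eta}w^{(t_j-1)}_i$ of the Hedge update, one gets $b_i^{(j)}\le e^{2\eta}\sqrt{C\,b_i^{(j-1)}}$, hence $(b_i^{(j)})^2/b_i^{(j-1)}\le e^{4\eta}C=O(C)$.

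\textbf{Type~I count.} A doubling coordinate with $b_i^{(j)}>\tau$ has $b_i^{(j-1)}<b_i^{(j)}$ with ratio $\ge 2$ and $b_i^{(j)}>\tau$, so the interval $[b_i^{(j-1)},b_i^{(j)}]$ contains a dyadic level $2^\ell\tau$ for some $\ell\ge 0$; a fixed coordinate crosses each such level at most once (its historical high is nondecreasing), and by \prettyref{lem:hedge-trajectory-bound} at most $P/(2^\ell\tau)=P\sqrt{k/C}\,2^{-\ell}$ coordinates ever reach $2^\ell\tau$. Summing over $\ell\ge 0$ bounds the number of Type~I updates by $O(P\sqrt{k/C})$.

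\textbf{Type~II count via a potential argument.} At a Type~II update every doubling coordinate lies in $(b_i^{(j-1)},\tau]$, and by fact~(i) such an update consumes at least $C/2$ units of the potential $\sum_i (b_i^{(j)})^2/b_i^{(j-1)}$ summed over its (below-$\tau$) doubling coordinates; so it suffices to bound the grand total of these contributions by $O(C)\cdot O(P\sqrt{k/C})=O(P\sqrt{Ck})$. I would bucket the individual contributions by a dyadic value $\theta$: if a coordinate's $\ell$-th doubling contributes a value in $[\theta,2\theta]$, then since its prior high is at least $2^{\ell-1}/k$ its new high is at least $\sqrt{\theta\cdot 2^{\ell-1}/k}$, so by \prettyref{lem:hedge-trajectory-bound} only $O\!\big(P\sqrt{k/(\theta\,2^{\ell-1})}\big)$ coordinates can have an $\ell$-th doubling in bucket $\theta$, and together they contribute $O\!\big(P\sqrt{k\theta}\cdot 2^{-\ell/2}\big)$. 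Summing over $\ell\ge 1$ (a geometric series) and then over dyadic $\theta$ up to $O(C)$ (top-dominated, with the top endpoint controlled by fact~(ii)) gives a total potential of $O(P\sqrt{Ck})$, hence $O(P\sqrt{k/C})$ Type~II updates. Adding the two counts, and noting the single first cap update is negligible, yields round complexity $O(P\sqrt{k/C})=O(\sqrt{k/C}\,\log^8(k/\eps))$.

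\textbf{Main obstacle.} The delicate part is the Type~II bound. Unlike the box setting (\prettyref{lem:lazy-hedge-round-box}), a coordinate's historical high may increase by arbitrarily small multiplicative steps between successive cap updates, so there is no clean telescoping; the argument instead rests on (a) fact~(ii), which caps the potential of any single below-$\tau$ doubling at $O(C)$ by combining observability of $w^{(t)}$ under the previous cap with the $e^{O(\eta)}$ step-Lipschitzness of Hedge, and (b) the sharpened counting in which a doubling worth $\theta$ forces the coordinate's historical high above $\Omega(\sqrt{\theta/k})$ rather than merely $\Omega(\theta)$ — it is exactly this square root, fed into \prettyref{lem:hedge-trajectory-bound}, that turns the factor $k$ of \prettyref{lem:lazy-hedge-round-box} into $\sqrt{k}$ here. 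Confirming that each coordinate's total contribution over all $(\ell,\theta)$ buckets stays bounded, and that the $\eta$-dependent constants (with $\eta=\Theta(\eps)$) never blow up, are the remaining routine checks; correctness of the output is inherited from \prettyref{lem:lazy-hedge-correctness}.
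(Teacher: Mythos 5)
Your proof is correct and follows the same high-level structure as the paper (a Type~I / Type~II decomposition keyed to the threshold $\sqrt{C/k}$, with Type~I handled by dyadic level counting against \prettyref{lem:hedge-trajectory-bound}), but the mechanism of the Type~II bound is genuinely different. The paper defines the active set as the \emph{significant} indices $\{i: w^{(t)}_i \ge \wbar^{(t-1)}_i/2\}$, observes that the per-coordinate sequence of historical highs $a_0 < a_1 < \cdots < a_m$ along a coordinate's significant Type~II updates grows geometrically with ratio $\ge C/2 \ge 2$, and invokes the telescoping inequality $a^2/b + b^2/c \le a^2/c$ to collapse the per-coordinate potential $\sum_j a_j^2/a_{j-1}$ to the single term $a_m^2/a_0$; it then bounds $\sum_i \min\{\tfrac{k}{C}(\max_t w^{(t)}_i)^2, 1\}$ directly, with the $\min\{\cdot,1\}$ (coming from the last Type~II update having high $\le 1/\sqrt{k/C}$) supplying the $\sqrt{k/C}$ factor. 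You instead work with the slightly larger active set of \emph{doubling} coordinates, and bound the grand potential by a two-dimensional dyadic bucketing over (doubling index $\ell$, contribution size $\theta$), using the trajectory bound for each bucket separately. Both analyses hinge on the same square-root phenomenon; your version makes the $\sqrt{\theta\cdot 2^{\ell-1}/k}$ lower bound on the new high the explicit driver, whereas the paper's version encodes it via the $\min\{\cdot, 1\}$ truncation at level $\tau$.

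Two small notes. First, your fact (ii) — the $e^{O(\eta)}$ per-step Lipschitzness combined with observability of $w^{(t_j-1)}$ — is sound, but it is not actually needed: for a below-$\tau$ doubling the bound $b_i^{(j)} \le \tau = \sqrt{C/k}$ together with $b_i^{(j-1)} \ge 2^{\ell-1}/k$ already gives $\theta \le C/2^{\ell-1} \le C$, so the cap on $\theta$ for Type~II doublings comes for free, and the paper indeed does not invoke any step-Lipschitzness. (Fact (ii) becomes genuinely necessary only if one buckets over \emph{all} doublings including Type~I ones, which your argument does not require.) Second, the paper's telescoping lemma yields a cleaner and fully self-contained computation, but your bucketing argument has the pedagogical advantage of making transparent \emph{why} the exponent is $1/2$ in both places; either is a valid proof.
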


Again, combining Lemmas \ref{lem:lazy-hedge-correctness}, \ref{lem:lazy-hedge-sample-overhead}~and~\ref{lem:lazy-hedge-round-ellipsoid} immediately shows that the ellipsoid setting of OODS can be solved with an $\widetilde O(C)$ sample overhead in $\widetilde O(\sqrt{k/C})$ rounds.

\begin{thm}
\label{thm:lazy-hedge-ellipsoid}
    For any $C \in [4, k]$, in the ellipsoid setting, $\LazyHedge$ finds an $O(\eps)$-approximate maximum with an $O(C\log^8(k/\eps))$ sample overhead in $O(\sqrt{k/C}\cdot\log^8(k/\eps))$ rounds.
\end{thm}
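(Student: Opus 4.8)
The theorem follows immediately by combining the three lemmas assembled in this subsection: by \prettyref{lem:lazy-hedge-correctness} the output of $\LazyHedge$ is an $O(\eps)$-approximate maximizer, by \prettyref{lem:lazy-hedge-sample-overhead} its sample overhead is $O(C\log^8(k/\eps))$, and by \prettyref{lem:lazy-hedge-round-ellipsoid} the cap---and hence the OODS round counter---is updated at most $O(\sqrt{k/C}\cdot\log^8(k/\eps))$ times. Since \prettyref{lem:lazy-hedge-correctness} and \prettyref{lem:lazy-hedge-sample-overhead} are already in hand, the only thing left to do is prove \prettyref{lem:lazy-hedge-round-ellipsoid}, which I sketch below.

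The plan is to extend the \emph{culprit} argument from the proof of \prettyref{lem:lazy-hedge-round-box} to the quadratic constraint of the ellipsoid setting. Whenever the cap is updated at iteration $t$ we have $w^{(t)}\notin\calO(\wbar^{(t-1)})$, i.e.\ $\sum_i(w^{(t)}_i)^2/\wbar^{(t-1)}_i>1$; writing $a_i\coloneqq\wbar^{(t-1)}_i/C$ for the historical high of coordinate $i$ recorded at the previous cap update, call $i$ a \emph{culprit} of this update if $w^{(t)}_i>a_i$. Non-culprit coordinates have $w^{(t)}_i\le a_i$ and so contribute at most $\sum_i w^{(t)}_i/C\le 1/C$ to $\sum_i(w^{(t)}_i)^2/(Ca_i)$; hence the culprits alone satisfy the ``progress'' inequality $\sum_{i\text{ culprit}}(w^{(t)}_i)^2/a_i>C-1\ge 3C/4$ (using $C\ge 4$). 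This per-update inequality is the ellipsoid replacement for the single-culprit bound used in the box case, and is the source of the $\sqrt{k}$ improvement. I would then classify cap updates relative to the threshold $\theta\coloneqq\sqrt{C/k}$: a cap update is \textbf{Type~I} if some culprit's recorded historical high crosses a dyadic multiple $2^\ell\theta$ ($\ell\ge 0$) of $\theta$ at this update, and \textbf{Type~II} otherwise.

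For Type~I updates, a coordinate can cross a fixed threshold $2^\ell\theta$ at most once (historical highs are non-decreasing), and by \prettyref{lem:hedge-trajectory-bound}---which yields $\sum_i\max_t w^{(t)}_i=O(\log^8(k/\eps))$---at most $O(\log^8(k/\eps))/(2^\ell\theta)$ coordinates ever reach that value; summing this bound over $\ell\ge 0$ as a geometric series caps the number of Type~I updates at $O(\log^8(k/\eps))/\theta=O(\sqrt{k/C}\cdot\log^8(k/\eps))$. For Type~II updates I would run a potential argument in the spirit of the proof sketch of \prettyref{prop:agnostic-ellipsoid-version}: charge a coordinate a potential of $(a')^2/a$ whenever its recorded historical high rises from $a$ to $a'$. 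The progress inequality shows that every Type~II update accrues total potential $\Omega(C)$ over its culprits (since $a'\ge w^{(t)}_i$ for a culprit), so it suffices to show that the total potential that the $k$ coordinates can contribute to Type~II updates is $O(\sqrt{kC}\cdot\polylog(k/\eps))$; this gives at most $O(\sqrt{k/C}\cdot\polylog(k/\eps))$ Type~II updates, and adding the two types proves \prettyref{lem:lazy-hedge-round-ellipsoid}.

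I expect the genuinely delicate step to be that $O(\sqrt{kC})$ bound on the Type~II potential. Bounding it by naively summing $(a')^2/a$ over every single rise of a recorded high fails, because even an infinitesimal rise $a\to(1+\delta)a$ still costs $\approx a$; the resolution is to charge potential at dyadic resolution---within one dyadic band a Type~II rise moves the historical high by less than a factor $2$, so its potential is $O(a')$ and each coordinate's within-band contribution telescopes (comparably to a sum of log-ratios) to $O(\text{band top})$, which by \prettyref{lem:hedge-trajectory-bound} sums over coordinates to $\polylog(k/\eps)$ per band---and to handle the bands below $\theta$ using \prettyref{lem:hedge-trajectory-bound} once more, exploiting that only a few coordinates ever reach a given level $v$, which limits how much potential can be charged at that level. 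I would follow the same accounting as in the proof of \prettyref{prop:agnostic-ellipsoid-version}, tracking the $C$-dependence throughout; apart from this core, the culprit bookkeeping, the dyadic summations, and the final assembly of \prettyref{thm:lazy-hedge-ellipsoid} from the three lemmas are routine.
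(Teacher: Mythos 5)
Your top-level assembly is exactly the paper's: \prettyref{lem:lazy-hedge-correctness} for correctness, \prettyref{lem:lazy-hedge-sample-overhead} for sample overhead, and then \prettyref{lem:lazy-hedge-round-ellipsoid} for the round count, which you rightly identify as the only nontrivial piece. Your roadmap for that lemma---extract $\Omega(1)$ ``progress'' per cap update, split into Type~I/Type~II against the threshold $\theta=\sqrt{C/k}$, bound Type~I by dyadic bucketing with \prettyref{lem:hedge-trajectory-bound}, and run a potential argument for Type~II---is also the paper's. But there is a gap in the Type~II argument that traces back to your choice of culprit threshold.

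You define $i$ to be a culprit of the update at iteration $t$ if $w^{(t)}_i > a_i := \wbar^{(t-1)}_i/C$, i.e.\ if $w^{(t)}_i$ exceeds the recorded historical high. With this threshold, when $i$ is a culprit the new historical high $a'_i$ need only be \emph{strictly} larger than $a_i$: a rise $a_i \to (1+\delta)a_i$ with $\delta$ arbitrarily small is allowed. Consequently the per-coordinate potential sum $\sum_j a_j^2/a_{j-1}$ does not telescope: within a single dyadic band $[2^\ell\theta, 2^{\ell+1}\theta)$ a coordinate can be a culprit $\Theta(1/\delta)$ times, each contribution being $\Theta(2^\ell\theta)$, giving unbounded total potential in that band. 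Your ``charge at dyadic resolution'' fix does not resolve this, because the issue is not the size of an individual charge but the number of charges a coordinate accrues inside one band. Relatedly, your event-based Type~I definition (``crosses a dyadic multiple'') relegates exactly these problematic within-band updates to Type~II, where your potential bound is supposed to absorb them.

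The paper avoids this by using the \emph{significant-index} threshold $w^{(t)}_i \ge \wbar^{(t-1)}_i/2 = (C/2)a_i$, which for $C \ge 4$ forces $a'_i \ge w^{(t)}_i \ge 2a_i$. That doubling is what makes $\sum_j a_j^2/a_{j-1}$ telescope to $O(a_m^2/a_0)$, and it is also what lets the Type~I sub-type counting argue that no index contributes twice to the same band. With that threshold, the per-update culprit contribution to $\sum_i [w^{(t)}_i]^2/\wbar^{(t-1)}_i$ is only $\Omega(1)$ rather than your $\Omega(C)$, but the $C$-dependence re-enters correctly through $\wbar^{(t-1)}_i = C\,a_{j-1}$ in the denominator, and the final arithmetic gives $|\calT_2| = O(\sqrt{k/C}\cdot\log^8(k/\eps))$. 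In short: replace the culprit condition $w^{(t)}_i > \wbar^{(t-1)}_i/C$ by $w^{(t)}_i \ge \wbar^{(t-1)}_i/2$, switch Type~I to the state-based ``some culprit's historical high is $\ge \theta$'', and your sketch then matches the paper's proof.
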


The proof of \prettyref{lem:lazy-hedge-round-ellipsoid} is significantly more technical than that of \prettyref{lem:lazy-hedge-round-box}. We classify the cap updates (except the one in the first iteration) into two types: A ``Type~I'' update is when some coordinate $w_i$ exceeds $1/\sqrt{k/C}$, and a ``Type~II'' update is one without a significant increase in any of the $k$ coordinates. We bound the number of Type~I and Type~II updates by $\widetilde O(\sqrt{k/C})$ separately.

The upper bound for Type~I updates is a simple consequence of the $\polylog(k/\eps)$ upper bound on the Hedge trajectory shown by~\cite{ZZCDL24} (\prettyref{lem:hedge-trajectory-bound}). This bound implies that there are at most $\widetilde O(1) \cdot \sqrt{k/C}$ Type~I updates where the coordinate reaches $\approx 1/\sqrt{k/C}$, at most $\widetilde O(1) \cdot \sqrt{k/C} / 2$ Type~I updates where the coordinate reaches $\approx 2/\sqrt{k/C}$, and so on. These upper bounds sum up to $\widetilde O(\sqrt{k/C})$.

The analysis for Type~II updates is more involved. Roughly speaking, we say that a coordinate $i \in [k]$ gains a \emph{potential} of $a^2/b$ when it increases from $b$ to $a$ through the Hedge dynamics. Then, we prove the following two claims: (1) Each Type~II update may happen only if an $\Omega(1)$ potential is accrued over all $k$ coordinates; (2) The total amount of potential that the $k$ coordinates may contribute to Type~II updates is at most $\widetilde O(\sqrt{k/C})$. Combining these two claims proves the $\widetilde O(\sqrt{k/C})$ upper bound on the number of Type~II updates.

\begin{proof}[Proof of \prettyref{lem:lazy-hedge-round-ellipsoid}]
Consider an iteration $t \in \{2, 3, \ldots, T\}$ of $\LazyHedge$ in which the cap is updated, i.e., $w^{(t)} \notin \calO(\wbar^{(t-1)})$. By definition of the ellipsoid setting (\prettyref{def:OODS}), we have
    \[
        \sum_{i=1}^{k}\frac{[w^{(t)}_i]^2}{\wbar^{(t-1)}_i} > 1.
    \]
    Let $\Isig_t \coloneqq \{i \in [k]: w^{(t)}_i \ge \wbar^{(t-1)}_i / 2\}$ denote the set of \emph{significant} indices on which the weight $w^{(t)}_i$ exceeds half of the cap $\wbar^{(t-1)}_i$. Let $\Iinsig_t \coloneqq \{i \in [k]: w^{(t)}_i < \wbar^{(t-1)}_i / 2\} = [k] \setminus \Isig$ be the set of \emph{insignificant} indices. Then, the contribution from insignificant indices to the left-hand side can be upper bounded:
    \[
        \sum_{i \in \Iinsig_t}\frac{[w^{(t)}_i]^2}{\wbar^{(t-1)}_i}
    \le \sum_{i \in \Iinsig_t}\frac{[w^{(t)}_i]^2}{2w^{(t)}_i}
    =   \frac{1}{2}\sum_{i \in \Iinsig_t}w^{(t)}_i
    \le \frac{1}{2},
    \]
    where the first step applies $i \in \Iinsig_t \implies \wbar^{(t-1)}_i > 2w^{(t)}_i$, and the last step applies $w^{(t)} \in \Delta^{k-1}$. Therefore, the contribution from the significant indices is at least
    \[
        \sum_{i \in \Isig_t}\frac{[w^{(t)}_i]^2}{\wbar^{(t-1)}_i}
    =   \sum_{i=1}^k\frac{[w^{(t)}_i]^2}{\wbar^{(t-1)}_i} - \sum_{i \in \Iinsig_t}\frac{[w^{(t)}_i]^2}{\wbar^{(t-1)}_i}
    \ge 1 - \frac{1}{2}
    =   \frac{1}{2}.
    \]

    We say that the cap update in iteration~$t$ is \emph{Type~I} if there exists an index $i \in \Isig_t$ such that
    \[
        \max\left\{w^{(1)}_i, w^{(2)}_i, \ldots, w^{(t)}_i\right\} \ge \frac{1}{\sqrt{k/C}};
    \]
    otherwise, the update is \emph{Type~II}. Let $\calT_1 \subseteq [T]$ and $\calT_2 \subseteq [T]$ denote the iterations in which a Type~I and a Type~II cap update happens, respectively. Then, the round complexity is simply $|\calT_1| + |\calT_2| + 1$, where the ``$+1$'' accounts for the cap update at $t = 1$, which is neither Type~I nor Type~II.

    \paragraph{Number of Type~I Updates} We further classify the Type~I cap updates (in $\calT_1$) into $O(\log k)$ \emph{sub-types}. For each integer $j \in [0, \log_2\sqrt{k/C}]$, let $\calT_{1,j}$ denote the set of pairs $(t, i) \in \calT_1 \times [k]$ such that: (1) A Type~I cap update happens in iteration~$t$; (2) $i \in \Isig_t$ is the smallest index such that
    \[
        \max\left\{w^{(1)}_i, w^{(2)}_i, \ldots, w^{(t)}_i\right\} \ge 1/\sqrt{k/C};
    \]
    (3) It holds that
    \[
        \max\left\{w^{(1)}_i, w^{(2)}_i, \ldots, w^{(t)}_i\right\} \in \left[\frac{2^j}{\sqrt{k/C}}, \frac{2^{j+1}}{\sqrt{k/C}}\right).
    \]
    Then, it remains to control the size of each $\calT_{1,j}$.
    
    We first argue that only a few indices $i \in [k]$ may appear as the second coordinate of a $(t, i)$-pair in $\calT_{1,j}$. Indeed, if $(t_0, i) \in \calT_{1,j}$, we must have $\max_{1 \le t \le T}w^{(t)}_i \ge \max_{1 \le t \le t_0}w^{(t)}_i \ge \frac{2^j}{\sqrt{k/C}}$. Recall from \prettyref{lem:hedge-trajectory-bound} that $\sum_{i=1}^{k}\max_{1 \le t \le T}w^{(t)}_i \le O(\log^8(k/\eps))$, so the aforementioned condition hold for at most $2^{-j} \cdot O(\sqrt{k/C}\log^8(k/\eps))$ different values of $i \in [k]$.

    Then, we argue that no $\calT_{1,j}$ may contain two pairs $(t_1, i)$ and $(t_2, i)$ for $t_1 \ne t_2$. In other words, no index $i$ can contribute to the same sub-type $j$ twice. Suppose towards a contradiction that for some $t_1 < t_2$ and $i \in [k]$, $(t_1, i), (t_2, i) \in \calT_{1,j}$. By definition of $\calT_{1,j}$, $\max_{1 \le t \le t_1}w^{(t)}_i \ge 2^j / \sqrt{k/C}$. Then, by the cap update in $\LazyHedge$, we have
    \[
        \wbar^{(t_2 - 1)}_i
    \ge \wbar^{(t_1)}_i
    =   C \cdot \max_{1 \le t \le t_1}w^{(t)}_i
    \ge C \cdot \frac{2^j}{\sqrt{k/C}}.
    \]
    On the other hand, since $(t_2, i) \in \calT_{1,j}$, we have $i \in \Isig_{t_2}$, which further implies
    \[
        w^{(t_2)}_i \ge \frac{1}{2}\wbar^{(t_2 - 1)}_i
    \ge \frac{C}{2}\cdot\frac{2^j}{\sqrt{k/C}}
    \ge \frac{2^{j+1}}{\sqrt{k/C}},
    \]
    where the last step applies $C \ge 4$. We then have $\max_{1 \le t\le t_2}w^{(t)}_i \ge w^{(t_2)}_i \ge 2^{j+1}/\sqrt{k/C}$, which contradicts $(t_2, i) \in \calT_{1,j}$.

    Therefore, the number of Type~I updates is at most
    \[
        |\calT_1|
    =   \sum_{j=0}^{\lfloor\log_2\sqrt{k/C}\rfloor}|\calT_{1,j}|
    \le \sum_{j=0}^{\lfloor\log_2\sqrt{k/C}\rfloor}2^{-j}\cdot O(\sqrt{k/C}\log^8(k/\eps))
    =   O(\sqrt{k/C}\log^8(k/\eps)).
    \]

    \paragraph{Number of Type~II Updates: Overview} Recall that if a cap update happens in iteration $t \ge 2$, we have $\sum_{i \in \Isig_t}\frac{[w^{(t)}_i]^2}{\wbar^{(t-1)}_i} \ge \frac{1}{2}$. Summing over $t \in \calT_2$ gives
    \begin{equation}\label{eq:potential-bound-1}
        \sum_{t \in \calT_2}\sum_{i \in \Isig_t}\frac{[w^{(t)}_i]^2}{\wbar^{(t-1)}_i} \ge \frac{|\calT_2|}{2}.
    \end{equation}
    Next, we upper bound the double summation on the left-hand side of~\eqref{eq:potential-bound-1} by changing the order of summation. We will show that, for every $i \in [k]$, it holds that
    \begin{equation}\label{eq:potential-bound-2}
        \sum_{t \in \calT_2: i \in \Isig_t}\frac{[w^{(t)}_i]^2}{\wbar^{(t-1)}_i} \le \min\left\{\frac{k}{C}\left(\max_{1 \le t \le T}w^{(t)}_i\right)^2, 1\right\}.
    \end{equation}
    Furthermore, we will upper bound the sum of the right-hand side of~\eqref{eq:potential-bound-2} as follows:
    \begin{equation}\label{eq:total-potential-bound}
        \sum_{i=1}^{k}\min\left\{\frac{k}{C}\left(\max_{1 \le t \le T}w^{(t)}_i\right)^2, 1\right\}
    \le O(\sqrt{k/C}\log^8(k/\eps)).
    \end{equation}
    Then, combining Equations \eqref{eq:potential-bound-1}~through~\eqref{eq:total-potential-bound} immediately gives $|\calT_2|
    \le O(\sqrt{k/C}\log^8(k/\eps))$ and completes the proof. We prove Equations \eqref{eq:potential-bound-2}~and~\eqref{eq:total-potential-bound} in the remainder of the proof.

    \paragraph{Proof of Equation~\eqref{eq:potential-bound-2}} Towards proving Equation~\eqref{eq:potential-bound-2}, we fix $i \in [k]$ and list the elements in $\calT_2$ that satisfies $i \in \Isig_t$ in increasing order: $2 \le t_1 < t_2 < \cdots < t_m \le T$. We write $t_0 = 1$ and $a_j \coloneqq \max_{1 \le t \le t_j}w^{(t)}_i$ for every $j \in \{0, 1, \ldots, m\}$.

    We first upper bound the left-hand side of~\eqref{eq:potential-bound-2} in terms of $(a_j)_{j=0}^{m}$. For each $j \in [m]$, we have
    \[
        w^{(t_j)}_i
    \le \max_{1 \le t \le t_j}w^{(t)}_i
    =   a_j
    \quad
    \text{and}
    \quad
        \wbar^{(t_j - 1)}_i
    \ge \wbar^{(t_{j-1})}_i
    =   C \cdot \max_{1 \le t \le t_{j-1}}w^{(t)}_i
    =   C \cdot a_{j-1}.
    \]
    Then, each $j \in [m]$ contributes a term of
    \[
        \frac{[w^{(t_j)}_i]^2}{\wbar^{(t_j - 1)}_i}
    \le \frac{a_j^2}{C\cdot a_{j-1}}
    =   \frac{1}{C}\cdot \frac{a_j^2}{a_{j-1}}
    \]
    to the left-hand side of~\eqref{eq:potential-bound-2}. Thus, it suffices to upper bound $\sum_{j=1}^{m}\frac{a_j^2}{a_{j-1}}$.

    Next, we examine the sequence $(a_j)_{j=0}^{m}$. Fix $j \in [m]$. Since $i \in \Isig_{t_j}$, we have
    \[
        a_j
    \ge w^{(t_j)}_i
    \ge \frac{1}{2}\wbar^{(t_j - 1)}_i
    \ge \frac{1}{2}\wbar^{(t_{j-1})}_i.
    \]
    The cap update at time $t_{j-1}$ ensures that
    \[
        \wbar^{(t_{j-1})}_i
    =   C\cdot\max_{1 \le t \le t_{j-1}}w^{(t)}_i
    =   C \cdot a_{j-1}.
    \]
    Combining the above and applying the assumption that $C \ge 4$ gives $a_j \ge \frac{C}{2}a_{j-1} \ge 2a_{j-1}$.

    The sequence $(a_j)_{j=0}^{m}$ starts with $a_0 = w^{(1)}_i = 1 / k$ and ends with $a_m
    =   \max_{1 \le t \le t_m}w^{(t)}_i$. Moreover, since the cap update in iteration $t_m$ is Type~II and $i \in \Isig_{t_m}$, it holds that
    \[
        \max_{1 \le t \le t_m}w^{(t)}_i < \frac{1}{\sqrt{k/C}}.
    \]
    Therefore, we have the upper bound $a_m \le \min\left\{\max_{1 \le t \le T}w^{(t)}_i, \frac{1}{\sqrt{k/C}}\right\}$.
    In summary, $(a_0, a_1, \ldots, a_m)$ is a sequence of positive numbers such that:
    \begin{itemize}[leftmargin=*]
        \item $a_j \ge 2a_{j-1}$.
        \item $a_0 = 1/k$ and $a_m \le \min\left\{\max_{1 \le t \le T}w^{(t)}_i, \frac{1}{\sqrt{k/C}}\right\}$.
    \end{itemize}

    For any positive numbers $a, b, c$ that satisfy $a/b \ge 2$ and $b/c \ge 2$, we have
    \[
        \frac{a^2}{b}
    \le \frac{3}{4}\cdot\frac{a^2}{c}
    =   \frac{a^2}{c} - \frac{(a/2)^2}{c}
    \le \frac{a^2}{c} - \frac{b^2}{c}.
    \]
    Rearranging gives $\frac{a^2}{b} + \frac{b^2}{c} \le \frac{a^2}{c}$. Therefore, starting from the sequence $(a_0, a_1, \ldots, a_m)$, we may repeatedly remove the elements $a_{m-1}, a_{m-2}, \ldots, a_1$ one by one. By doing so, the invariant $a_j / a_{j-1} \ge 2$ is always maintained, and the value of $\sum_{j=1}^{m}a_j^2/a_{j-1}$ never decreases. Therefore, we have
    \[
        \sum_{j=1}^{m}\frac{a_j^2}{a_{j-1}}
    \le \frac{a_m^2}{a_0}
    \le \frac{\left(\min\left\{\max_{1 \le t \le T}w^{(t)}_i, \frac{1}{\sqrt{k/C}}\right\}\right)^2}{1/k}
    =   \min\left\{k\left(\max_{1 \le t \le T}w^{(t)}_i\right)^2, C\right\}.
    \]

    Recalling that the left-hand side of~\eqref{eq:potential-bound-2} is at most $\frac{1}{C} \cdot \sum_{j=1}^{m}\frac{a_j^2}{a_{j-1}}$, we have proved the inequality.

    \paragraph{Proof of Equation~\eqref{eq:total-potential-bound}} For each integer $j \in [0, \log_2k]$, let
    \[
        \calI_j \coloneqq \left\{i \in [k]: \max_{1 \le t \le T}w^{(t)}_i \in \left[\frac{2^j}{k}, \frac{2^{j+1}}{k}\right)\right\}
    \]
    denote the set of indices $i \in [k]$ on which the maximum weight over the $T$ iterations is roughly $2^j/k$. Then, the upper bound $\sum_{i=1}^{k}\max_{1 \le t \le T}w^{(t)}_i \le O(\log^8(k/\eps))$ from \prettyref{lem:hedge-trajectory-bound} implies $|\calI_j| \le 2^{-j} \cdot O(k\log^8(k/\eps))$. Note that $\calI_0, \calI_1, \ldots, \calI_{\lceil \log_2k\rceil}$ form a partition of $[k]$, so we have
    \begin{align*}
        \sum_{i=1}^{k}\min\left\{\frac{k}{C}\left(\max_{1 \le t \le T}w^{(t)}_i\right)^2, 1\right\}
    &=  \sum_{j=0}^{\lceil \log_2k\rceil}\sum_{i \in \calI_j}\min\left\{\frac{k}{C}\left(\max_{1 \le t \le T}w^{(t)}_i\right)^2, 1\right\}\\
    &\le \sum_{j=0}^{\lceil \log_2k\rceil}|\calI_j| \cdot \min\left\{\frac{k}{C}\left(\frac{2^{j+1}}{k}\right)^2, 1\right\}\\
    &\le O(k\log^8(k/\eps))\cdot\sum_{j=0}^{\lceil \log_2k\rceil}2^{-j} \cdot \min\left\{\frac{k}{C}\left(\frac{2^{j+1}}{k}\right)^2, 1\right\},
    \end{align*}
    where the second step follows from $i \in \calI_j \implies \max_{1 \le t \le T}w^{(t)}_i < 2^{j+1}/k$, and the third step applies the upper bound $|\calI_j| \le 2^{-j} \cdot O(k\log^8(k/\eps))$.

    The summation $\sum_{j=0}^{\lceil \log_2k\rceil}2^{-j} \cdot \min\left\{\frac{k}{C}\left(\frac{2^{j+1}}{k}\right)^2, 1\right\}$ is further upper bounded by
    \[
        O(1) \cdot \sum_{j=0}^{+\infty}\min\left\{\frac{2^j}{Ck}, 2^{-j}\right\}.
    \]
    The summand $\min\left\{\frac{2^j}{Ck}, 2^{-j}\right\}$ is given by the first term and thus geometrically increasing when $4^j \le Ck$. The summand is geometrically decreasing when $4^j \ge Ck$. Therefore, the summation is dominated by the term at $j^* = \lceil \log_4(Ck)\rceil$, namely, $2^{-j^*} = O(1/\sqrt{Ck})$. Therefore, we have
    \[
        \sum_{i=1}^{k}\min\left\{\frac{k}{C}\left(\max_{1 \le t \le T}w^{(t)}_i\right)^2, 1\right\}
    \le O(k\log^8(k/\eps)) \cdot O(1/\sqrt{Ck})
    =   O(\sqrt{k/C}\log^8(k/\eps)).
    \]
    This proves Equation~\eqref{eq:total-potential-bound} and completes the proof.
\end{proof}

\subsection{Applications to Multi-Distribution Learning}
\label{app:OODS-application}
We sketch how our proofs of \prettyref{thm:lazy-hedge-box} and \prettyref{thm:lazy-hedge-ellipsoid} can be easily adapted to give MDL algorithms with near-optimal sample complexities that run in either $\widetilde O(k)$ or $\widetilde O(\sqrt{k})$ rounds. We prove these results by modifying an algorithm of~\cite{ZZCDL24}, which we briefly describe below. We refer the reader to~\cite[Algorithm~1]{ZZCDL24} for the full pseudocode description.

\paragraph{The MDL Algorithm of~\cite{ZZCDL24}} Let $\calH$ be the hypothesis class with VC dimension $d$. For brevity, we treat the failure probability $\delta$ as a constant, and use the $\widetilde O(\cdot)$ and $\widetilde\Theta(\cdot)$ notations to suppress $\polylog(kd/\eps)$ factors. The algorithm maintains $k$ datasets $S_1, S_2, \ldots, S_k$, where each $S_i$ contains training examples drawn from the $i$-th data distribution $\D_i$. The algorithm runs the Hedge dynamics for $T = \Theta((\log k)/\eps^2)$ iterations starting at $w^{(1)} = (1/k, 1/k, \ldots, 1/k)$. Each iteration $t \in [T]$ consists of the following two steps:
\begin{itemize}[leftmargin=*]
    \item \textbf{ERM step:} For each $i \in [k]$, draw additional samples from $\D_i$ and add them to $S_i$ until
    \[
        |S_i| \ge w^{(t)}_i \cdot \widetilde\Theta((d + k)/\eps^2).
    \]
    Then, find a hypothesis $h^{(t)} \in \calH$ that approximate minimizes the empirical error
    \[
        \hat L(h) \coloneqq \sum_{i=1}^{k}w^{(t)}_i \cdot\frac{1}{|S_i|}\sum_{(x, y) \in S_i}\1{h(x) \ne y},
    \]
    which is an estimate of the error of $h$ on $\sum_{i=1}^{k}w^{(t)}_i\D_i$.
    \item \textbf{Hedge update step:} For each $i \in [k]$, draw $w^{(t)}_i \cdot \Theta(k)$ \emph{fresh} samples from $\D_i$ to obtain an estimate $r^{(t)}_i$ of the error of $h^{(t)}$ on $\D_i$. Compute $w^{(t+1)}$ from $w^{(t)}$ and $r^{(t)}$ via a Hedge update.
\end{itemize}

The crux of the analysis of~\cite{ZZCDL24} is to show that the dataset sizes in the two steps above are sufficient for finding a sufficiently accurate ERM $h^{(t)}$ as well as computing the loss vector $r^{(t)}$ that is sufficiently accurate for the Hedge update.

Note that a straightforward implementation of the algorithm needs $T = \Theta((\log k)/\eps^2)$ rounds of sampling. While this round complexity is logarithmic in $k$, it has a polynomial dependence in $1/\eps$. In the following, we apply the ideas behind our OODS upper bounds to improve this round complexity to $\widetilde O(k)$ and then $\widetilde O(\sqrt{k})$, which is lower than $(\log k)/\eps^2$ in the $\eps \ll 1/k^{1/4}$ regime.

\paragraph{$\widetilde O(k)$-Round MDL from \prettyref{thm:lazy-hedge-box}} The $\LazyHedge$ algorithm (\prettyref{alg:lazy-hedge}) suggests a natural modification to the algorithm of~\cite{ZZCDL24}: To ensure $|S_i| \ge w^{(t)}_i \cdot \widetilde\Theta((d+k)/\eps^2)$, instead of drawing additional samples at every round, we maintain a cap vector $\wbar^{(t)} \in [0,1]^k$. We ensure the invariant that, at the end of every iteration $t$, it holds for every $i \in [k]$ that $|S_i| \ge \wbar^{(t)}_i \cdot \widetilde\Theta((d+k)/\eps^2)$.

At each round $t$, if $w^{(t)}_i \le \wbar^{(t-1)}_i$ already holds for every $i \in [k]$, we use the current datasets to perform the ERM step. Since $|S_i| \ge \wbar^{(t-1)}_i \cdot \widetilde\Theta((d+k)/\eps^2) \ge w^{(t)}_i \cdot \widetilde\Theta((d+k)/\eps^2)$ holds for every $i \in [k]$, the uniform convergence result of~\cite[Lemma 1]{ZZCDL24} guarantees that $h^{(t)}$ is an $O(\eps)$-accurate ERM. Otherwise, we update the cap vector to $\wbar^{(t)}$ according to $\LazyHedge$, and then add fresh samples to each $S_i$ until $|S_i| \ge \wbar^{(t)}_i \cdot \widetilde\Theta((d+k)/\eps^2)$.

It remains to handle the sampling in the Hedge update step, which requires $w^{(t)}_i \cdot \Theta(k)$ fresh samples from each $\D_i$. If we draw these samples in each round, the MDL algorithm would still need $T = \Omega((\log k)/\eps^2)$ rounds of sampling. Instead, we maintain a dataset $S_{i,t}$ for each pair $(i, t) \in [k] \times [T]$ that is reserved for the Hedge update step for distribution $\D_i$ in the $t$-th iteration. We ensure the invariant that, at any iteration $t$,
\[
    |S_{i,t'}| \ge \wbar^{(t)}_i \cdot \Theta(k)
\]
holds for every $i \in [k]$ and $t' \ge t$. To this end, whenever the cap vector $\wbar^{(t)}$ is updated, we add fresh samples to each $S_{i,t'}$ (where $t' \ge t$) so that the invariant above still holds. By doing so, we ensure that in the Hedge update step at any iteration $t$, we have $|S_{i,t}| \ge \wbar^{(t)}_i \cdot \Theta(k) \ge w^{(t)}_i \cdot \Theta(k)$ samples. Furthermore, we only need one round of sampling whenever the cap vector is updated.

The above exactly corresponds to the $\LazyHedge$ algorithm in the box setting of OODS: Whenever the Hedge dynamics reaches a point $w^{(t)}$, the algorithm must update the cap $\wbar^{(t)}$ to ensure that $\wbar^{(t)} \ge w^{(t)}$. Every cap update in $\LazyHedge$ corresponds to a round of adaptive sampling in MDL. By \prettyref{lem:lazy-hedge-round-box}, the resulting MDL algorithm has an $O(k\log k)$ round complexity. Moreover, the sample complexity of the MDL algorithm is at most
\[
    \sum_{i=1}^{k}\wbar^{(T)}_i \cdot \widetilde\Theta((d+k)/\eps^2) + T \cdot \sum_{i=1}^{k}\wbar^{(T)}_i\cdot\Theta(k)
=   \widetilde O\left(\frac{C(d + k)}{\eps^2}\right),
\]
where we apply $T = \widetilde O(1/\eps^2)$ and the bound $\sum_{i=1}^{k}\wbar^{(T)}_i = O(C\log^8(k/\eps)) = \widetilde O(C)$ from \prettyref{lem:lazy-hedge-sample-overhead}, which in turn follows from~\cite[Lemma 3]{ZZCDL24}. Setting $C = O(1)$ gives the following result for MDL.
\begin{cor}\label{cor:MDL-box}
    There is an $O(k\log k)$-round MDL algorithm with an $\widetilde O((d+k)/\eps^2)$ sample complexity.
\end{cor}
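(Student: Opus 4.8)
The plan is to instantiate the box version of $\LazyHedge$ (\prettyref{alg:lazy-hedge-MDL}) with margin parameter $C=O(1)$, argue correctness by reducing to the analysis of~\cite{ZZCDL24}, and bound the round and sample complexities using the tools already developed in this section. There are three things to check: (i) at each iteration $t$, the ERM hypothesis $\hat h^{(t)}$ approximately minimizes $\err(\cdot,\sum_i w^{(t)}_i\D_i)$; (ii) the loss vector $r^{(t)}$ is a low-variance estimate of $(\err(\hat h^{(t)},\D_i))_{i\in[k]}$ computed from samples independent of $\hat h^{(t)}$; and (iii) the round complexity is $O(k\log k)$ and the sample complexity is $\widetilde O((d+k)/\eps^2)$.

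For (i), I would observe that after the (possible) cap update at iteration $t$, the branch condition in \prettyref{alg:lazy-hedge-MDL} forces $w^{(t)}\in\calO(\wbar^{(t)})$, i.e.\ $w^{(t)}_i\le\wbar^{(t)}_i$ for all $i$ in the box setting, while the maintained invariant gives $|S_i|\ge\wbar^{(t)}_i\cdot\widetilde\Theta((d+k)/\eps^2)\ge w^{(t)}_i\cdot\widetilde\Theta((d+k)/\eps^2)$. This is precisely the hypothesis of the uniform-convergence lemma~\cite[Lemma~1]{ZZCDL24}, which gives that $\sum_i w^{(t)}_i\cdot\err(h,S_i)$ is within $O(\eps)$ of $\err(h,\sum_i w^{(t)}_i\D_i)$ for all $h\in\calH$ simultaneously, except with probability $\delta/\poly(T)$; a union bound over the $T$ iterations makes $\hat h^{(t)}$ an $O(\eps)$-approximate minimizer at every step. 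For (ii), the invariant $|S_{i,t}|\ge\wbar^{(t)}_i\cdot\Theta(k)\ge w^{(t)}_i\cdot\Theta(k)$ together with a Bernstein bound makes $r^{(t)}_i$ accurate enough for the Hedge update; the reservation structure guarantees that the samples in $S_{i,t}$ are never used in any ERM step and are drawn at (or before) a cap update, so $S_{i,t}$ is independent of $\hat h^{(t)}$, which depends only on $S_1,\dots,S_k$ and on $w^{(1)},\dots,w^{(t)}$ (the latter determined by $r^{(1)},\dots,r^{(t-1)}$, hence by $S_{\cdot,1},\dots,S_{\cdot,t-1}$). Given (i) and (ii), the zero-sum game convergence argument of~\cite{ZZCDL24} applies verbatim and yields $\max_{i\in[k]}\err(\hat h,\D_i)\le\OPT+O(\eps)$ with probability $\ge 1-\delta$; rescaling $\eps$ by a constant gives the $(\eps,\delta)$-PAC guarantee.

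For (iii), the round complexity equals the number of cap updates in Line~\ref{line:cap-update-MDL}, which is $O(k\log k)$ by \prettyref{prop:agnostic-box-version} when $C=O(1)$. At termination $|S_i|=O(\wbar^{(T)}_i\cdot\widetilde\Theta((d+k)/\eps^2))$ and $|S_{i,t'}|=O(\wbar^{(T)}_i\cdot\Theta(k))$ for every $i$ and $t'$, so the total sample count is at most
\[
    \sum_{i=1}^{k}\wbar^{(T)}_i\cdot\widetilde\Theta\left(\frac{d+k}{\eps^2}\right)+T\cdot\sum_{i=1}^{k}\wbar^{(T)}_i\cdot\Theta(k)\le\left(\sum_{i=1}^{k}\wbar^{(T)}_i\right)\cdot\widetilde O\left(\frac{d+k}{\eps^2}\right),
\]
using $T=\widetilde\Theta(1/\eps^2)$ and $k\le d+k$. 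Since $\wbar^{(T)}_i\le C\cdot\max_{1\le t\le T}w^{(t)}_i$ and $\sum_{i=1}^{k}\max_{1\le t\le T}w^{(t)}_i=\widetilde O(1)$ by \prettyref{lem:hedge-trajectory-bound} (equivalently~\cite[Lemma~3]{ZZCDL24}), the sample overhead $\sum_i\wbar^{(T)}_i$ is $\widetilde O(C)=\widetilde O(1)$, and the overall sample complexity is $\widetilde O((d+k)/\eps^2)$, proving \prettyref{cor:MDL-box}.

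The hard part is step (i)--(ii): confirming that the lazy, adaptively chosen sampling schedule is compatible with the analysis of~\cite{ZZCDL24}, which was written for the non-lazy schedule. The crucial observation is that the box condition $w^{(t)}_i\le\wbar^{(t)}_i$ is exactly the ``observability'' condition that keeps the weighted estimator low variance: with $M=\widetilde\Theta((d+k)/\eps^2)$, the variance of $\err(h,S_i)$ is $O(1/(\wbar^{(t)}_i M))$, so the variance of the weighted sum is at most $\sum_i (w^{(t)}_i)^2/(\wbar^{(t)}_i M)\le\sum_i w^{(t)}_i/M=1/M$, which a VC-type union bound over a class of dimension $d$ and a Bernstein argument turn into an $O(\eps)$ deviation uniformly over $\calH$; the parallel computation for the datasets $S_{i,t}$ and the $k$ per-distribution errors uses the $\Theta(k)$ budget and a union bound over $[k]$. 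Once these estimates are set up, it suffices to quote the precise forms of Lemmas~1 and~3 of~\cite{ZZCDL24} and substitute the dataset sizes---no change to their argument is needed.
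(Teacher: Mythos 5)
Your proposal is correct and follows essentially the same route as the paper: instantiate the box version of $\LazyHedge$ with $C = O(1)$, invoke \prettyref{prop:agnostic-box-version} for the $O(k\log k)$ round bound, use \prettyref{lem:hedge-trajectory-bound} to bound $\sum_i \wbar^{(T)}_i = \widetilde O(1)$ for the sample complexity, and observe that the box constraint $w^{(t)}_i \le \wbar^{(t)}_i$ combined with the maintained dataset-size invariants reproduces exactly the preconditions of Lemmas~1 and~3 of~\cite{ZZCDL24}. Your elaboration of the independence of $S_{i,t}$ from $\hat h^{(t)}$ and the variance computation for the weighted empirical error is a bit more explicit than what the paper writes out, but it is the same argument.
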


\paragraph{$\widetilde O(\sqrt{k})$-Round MDL from \prettyref{thm:lazy-hedge-ellipsoid}} We further improve the round complexity to $\widetilde O(\sqrt{k})$ using our results for the ellipsoid setting (\prettyref{thm:lazy-hedge-ellipsoid}). The key observation is that the analysis of~\cite{ZZCDL24} does not require the ``box'' constraint $w^{(t)}_i \le \wbar^{(t)}_i$ for every $i \in [k]$. Instead, the weaker ``ellipsoid'' constraint $\sum_{i=1}^{k}[w^{(t)}_i]^2 / \wbar^{(t)}_i \le 1$ suffices.

To see this, we note that the analysis of~\cite{ZZCDL24} uses the lower bounds on the sample size at two different places, both of which go through under the ellipsoid constraint. The first place is the proof of the uniform convergence result~\cite[Lemma~1]{ZZCDL24}. For fixed $w \in \Delta^{k-1}$, $n \in \bbN^k$ and $h \in \calH$, the analysis boils down to showing that, if every dataset $S_i$ contains $n_i$ independent samples from $\D_i$,
\[
    \sum_{i=1}^{k}\frac{w_i}{n_i}\sum_{(x, y) \in S_i}\1{h(x) \ne y}
\]
is an estimate for the population error of $h$ on the mixture distribution $\sum_{i=1}^{k}w_i\D_i$ with sub-Gaussian parameter $\sigma^2 \le \sum_{i=1}^{k}\frac{w_i^2}{n_i}$, and thus concentrates around the population error up to an error of $O(\sigma\sqrt{\log(1/\delta)})$ except with probability $\delta$. In particular, assuming that $n_i \ge w_i \cdot \widetilde\Theta((d + k)/\eps^2)$, we have
\[
    \sigma^2 \le \sum_{i=1}^{k}\frac{w_i^2}{w_i \cdot \widetilde\Theta((d + k)/\eps^2)}
    =   \widetilde \Theta\left(\frac{\eps^2}{d + k}\right)\sum_{i=1}^{k}w_i
    =   \widetilde \Theta\left(\frac{\eps^2}{d + k}\right).
\]
We note that the same bound hold under the weaker assumption that, for some cap vector $\wbar$, it holds that $\sum_{i=1}^{k}w_i^2 / \wbar_i \le 1$ and $n_i \ge \wbar_i \cdot \widetilde \Theta((d+k)/\eps^2)$:
\[
    \sigma^2
\le \sum_{i=1}^{k}\frac{w_i^2}{n_i}
\le \sum_{i=1}^{k}\frac{w_i^2}{\wbar_i \cdot \widetilde \Theta((d+k)/\eps^2)}
=   \widetilde \Theta\left(\frac{\eps^2}{d + k}\right) \cdot \sum_{i=1}^{k}w_i^2/\wbar_i
=   \widetilde \Theta\left(\frac{\eps^2}{d + k}\right).
\]

The second place is Step~3 in the proof of~\cite[Lemma~17]{ZZCDL24}. This step uses the fact that $w^{(t)}_i \cdot \Theta(k)$ samples are used to compute the estimate $r^{(t)}_i$, so that the estimate has a variance of $O\left(\frac{1}{kw^{(t)}_i}\right)$. Fortunately, this variance bound is only used in aggregate over all $i \in [k]$ in their Equation~(111), which shows that the weighted average $\sum_{i=1}^{k}w^{(t)}_i \cdot r^{(t)}_i$ has a variance of at most
\[
    \sum_{i=1}^{k}[w^{(t)}_i]^2\cdot O\left(\frac{1}{kw^{(t)}_i}\right)
=   O(1/k)\cdot\sum_{i=1}^{k}w^{(t)}_i
=   O(1/k).
\]
Again, this step would still go through under the ellipsoid constraint $\sum_{i=1}^{k}[w^{(t)}_i]^2 / \wbar^{(t)}_i$: As long as at least $\wbar^{(t)}_i \cdot \Theta(k)$ fresh samples are used to estimate $r^{(t)}_i$, the weighted average $\sum_{i=1}^{k}w^{(t)}_i \cdot r^{(t)}_i$ has a variance of at most
\[
    \sum_{i=1}^{k}[w^{(t)}_i]^2\cdot O\left(\frac{1}{k\wbar^{(t)}_i}\right)
    =   O(1/k)\cdot\sum_{i=1}^{k}\frac{[w^{(t)}_i]^2}{\wbar^{(t)}_i}
    =   O(1/k).
\]

Therefore, \prettyref{thm:lazy-hedge-ellipsoid} implies the following result for MDL.
\begin{cor}\label{cor:MDL-ellipsoid}
    There is an $\widetilde O(\sqrt{k})$-round MDL algorithm with an $\widetilde O((d+k)/\eps^2)$ sample complexity.
\end{cor}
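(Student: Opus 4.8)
The plan is to prove \prettyref{cor:MDL-ellipsoid} by turning the MDL algorithm of~\cite{ZZCDL24} into a lazy variant---precisely the ellipsoid version of $\LazyHedge$ from \prettyref{alg:lazy-hedge-MDL}---and then combining the round-complexity bound for $\LazyHedge$ in the ellipsoid setting (\prettyref{lem:lazy-hedge-round-ellipsoid}) with the correctness and sample-complexity analysis of~\cite{ZZCDL24}, which I would import essentially verbatim. Concretely: recall that the algorithm of~\cite{ZZCDL24} runs Hedge for $T = \Theta((\log k)/\eps^2)$ iterations, and in iteration $t$ it (i) performs an ERM step on the mixture weighted by $w^{(t)}$ using datasets $S_i$ with $|S_i| \ge w^{(t)}_i\cdot\widetilde\Theta((d+k)/\eps^2)$, and (ii) performs a Hedge update using fresh datasets of size $\ge w^{(t)}_i\cdot\Theta(k)$. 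I replace the "refresh every round" policy by a cap vector $\wbar^{(t)}$ and the ellipsoid observable region $\calO(\wbar) = \{w : \sum_{i=1}^k w_i^2/\wbar_i \le 1\}$: at iteration $t$, if $w^{(t)} \in \calO(\wbar^{(t-1)})$ we draw nothing new; otherwise we set $\wbar^{(t)}_i = C\cdot\max_{1\le s\le t}w^{(s)}_i$ and draw fresh samples to bring $|S_i|$ up to $\wbar^{(t)}_i\cdot\widetilde\Theta((d+k)/\eps^2)$ and each $|S_{i,t'}|$ (for $t'\ge t$) up to $\wbar^{(t)}_i\cdot\Theta(k)$. Each cap update is one adaptive round, so by \prettyref{lem:lazy-hedge-round-ellipsoid} there are $O(\sqrt{k/C}\cdot\log^8(k/\eps)) = \widetilde O(\sqrt{k})$ rounds for any constant $C$.

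The substantive step is to verify that the two places in~\cite{ZZCDL24} where sample-size lower bounds are invoked still yield the estimates their analysis needs under the weaker ellipsoid constraint $\sum_{i=1}^k [w^{(t)}_i]^2/\wbar^{(t)}_i \le 1$, rather than the coordinatewise box constraint $w^{(t)}_i \le \wbar^{(t)}_i$. For the uniform-convergence lemma~\cite[Lemma~1]{ZZCDL24}, the empirical error $\sum_{i=1}^k (w_i/|S_i|)\sum_{(x,y)\in S_i}\1{h(x)\ne y}$ is a sub-Gaussian estimate of $\err(h,\sum_i w_i\D_i)$ with parameter $\sigma^2 \le \sum_{i=1}^k w_i^2/|S_i|$; when $|S_i| \ge \wbar_i\cdot\widetilde\Theta((d+k)/\eps^2)$ and $\sum_i w_i^2/\wbar_i \le 1$, this is $\sigma^2 \le \widetilde\Theta(\eps^2/(d+k))\cdot\sum_i w_i^2/\wbar_i = \widetilde\Theta(\eps^2/(d+k))$, exactly the bound the concentration argument uses. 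Similarly, for Step~3 of~\cite[Lemma~17]{ZZCDL24}, the only quantity that matters is the variance of the weighted average $\sum_{i=1}^k w^{(t)}_i r^{(t)}_i$, and using $\ge \wbar^{(t)}_i\cdot\Theta(k)$ fresh samples to form $r^{(t)}_i$ gives variance $\sum_i [w^{(t)}_i]^2\cdot O(1/(k\wbar^{(t)}_i)) = O(1/k)\cdot\sum_i [w^{(t)}_i]^2/\wbar^{(t)}_i = O(1/k)$. Hence the entire analysis of~\cite{ZZCDL24} goes through and the modified algorithm outputs a randomized classifier with worst-case error $\le \OPT + O(\eps)$ with high probability. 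I expect this verification---carefully tracking where $w_i \le \wbar_i$ is used in~\cite{ZZCDL24} and confirming $\sum_i w_i^2/\wbar_i \le 1$ is the only property needed---to be the main (and essentially the only) obstacle; everything else is bookkeeping.

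Finally, I bound the sample complexity. With $|S_i| = \wbar^{(T)}_i\cdot\widetilde\Theta((d+k)/\eps^2)$, with $|S_{i,t'}| = \wbar^{(T)}_i\cdot\Theta(k)$ over $T = \widetilde O(1/\eps^2)$ iterations, and with $\wbar^{(T)}$ non-decreasing, the total number of samples is at most $\sum_{i=1}^k \wbar^{(T)}_i\cdot\widetilde\Theta((d+k)/\eps^2) + T\cdot\sum_{i=1}^k \wbar^{(T)}_i\cdot\Theta(k) = \widetilde O((d+k)/\eps^2)\cdot\sum_{i=1}^k\wbar^{(T)}_i$. Since $\LazyHedge$ guarantees $\wbar^{(T)}_i \le C\cdot\max_{1\le t\le T}w^{(t)}_i$, \prettyref{lem:hedge-trajectory-bound} (equivalently~\cite[Lemma~3]{ZZCDL24}) gives $\sum_{i=1}^k\wbar^{(T)}_i \le C\cdot O(\log^8(k/(\eps\delta))) = \widetilde O(C)$, which is $\widetilde O(1)$ for constant $C$. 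Combining this $\widetilde O((d+k)/\eps^2)$ sample bound with the $\widetilde O(\sqrt{k})$ round bound from \prettyref{lem:lazy-hedge-round-ellipsoid} (i.e.\ \prettyref{thm:lazy-hedge-ellipsoid}) completes the proof.
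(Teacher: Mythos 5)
Your proposal matches the paper's own argument (Appendix C.4) essentially step for step: lazify the [ZZCDL24] Hedge loop via cap vectors with the ellipsoid observable region, invoke \prettyref{lem:lazy-hedge-round-ellipsoid} for the $\widetilde O(\sqrt{k})$ round bound, check that the ellipsoid constraint $\sum_i w_i^2/\wbar_i \le 1$ suffices in exactly the two places you identify (the sub-Gaussian variance in their Lemma 1 and the aggregate variance in Step 3 of their Lemma 17), and bound the sample complexity via \prettyref{lem:hedge-trajectory-bound}. The reasoning and the specific variance calculations are correct and coincide with the paper's; there is nothing to add or correct.
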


\section{Lower Bounds for OODS}\label{app:OODS-lower}
We prove lower bounds on the sample-adaptivity tradeoff in the OODS model using two different hard instances: one for the ``large-$\eps$'' regime where $\eps \le O(1/k)$, and the other for the ``small-$\eps$'' regime where $\eps \le e^{-\Omega(k)}$.

\subsection{Hard Instance for Large $\eps$}
\begin{defn}[Hard instance for large $\eps$]
    \label{def:large-eps}
    Given $k \ge m \ge 1$, draw $m$ different indices $\istar_1, \istar_2, \ldots, \istar_m \in [k]$ uniformly at random. The objective function $f: \Delta^{k-1} \to [0, 2]$ is
    \[
        f(w) \coloneqq \min\left\{w_{\istar_1} + \frac{1}{m^2}, w_{\istar_2} + \frac{2}{m^2}, \ldots, w_{\istar_m} + \frac{m}{m^2}\right\}.
    \]
\end{defn}

Note that we allow the co-domain of $f$ to be $[0, 2]$ instead of $[0, 1]$ for brevity; scaling everything down by a factor of $2$ gives a hard instance and thus lower bounds for the formulation in \prettyref{def:OODS}.

Before we formally state and prove the lower bounds, we make a few simple observations and then sketch the intuition behind the lower bound proof.

\paragraph{Characterization of Approximate Maxima} If we set $w_{\istar_j} = \frac{3m + 1}{2m^2} - \frac{j}{m^2}$ for every $j \in [m]$ and $w_i = 0$ for every $i \in [k] \setminus \{\istar_1, \istar_2, \ldots, \istar_m\}$, we have
\[
    w_{\istar_1} + \frac{1}{m^2} = w_{\istar_2} + \frac{2}{m^2} = \cdots = w_{\istar_m} + \frac{m}{m^2} = \frac{3m + 1}{2m^2},
\]
which gives $f(w) = \frac{3m + 1}{2m^2}$. Furthermore, it can be easily verified that $w \in \Delta^{k-1}$. We note that $w$ is the maximizer of $f$ over $\Delta^{k-1}$, since achieving an objective strictly higher than $\frac{3m + 1}{2m^2}$ requires a strict increase in each of $w_{\istar_1}, \ldots, w_{\istar_m}$, which would violate the constraint that $w \in \Delta^{k-1}$.

More generally, we have a ``robust'' version of this observation: As long as $\eps \le O(1/k)$, every $\eps$-approximate maximum of $f$ must put a significant weight of $\Omega(1/m)$ on at least half of the \emph{critical indices} $\istar_1, \ldots, \istar_m$.

\begin{lem}\label{lem:approx-maxima-large-eps}
    For every $\eps \le 1 / (2k)$ and every $w \in \Delta^{k-1}$ that satisfies $f(w) \ge \frac{3m + 1}{2m^2} - \eps$, we have
    \[
        |\{i \in [k]: w_i \ge 1 / (2m)\} \cap \{\istar_1, \istar_2, \ldots, \istar_m\}| \ge m / 2.
    \]
\end{lem}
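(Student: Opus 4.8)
The plan is a short, self-contained computation: unfold the definition of $f$ to get a coordinate-wise lower bound on $w_{\istar_j}$ for each $j\in[m]$, and then count how many of these lower bounds already exceed the threshold $1/(2m)$.

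First I would use that $f(w)=\min_{j\in[m]}\bigl\{w_{\istar_j}+j/m^2\bigr\}$, so the hypothesis $f(w)\ge \frac{3m+1}{2m^2}-\eps$ forces \emph{every} term in the minimum to be at least $\frac{3m+1}{2m^2}-\eps$; rearranging, for every $j\in[m]$,
\[
    w_{\istar_j}\;\ge\;\frac{3m+1}{2m^2}-\frac{j}{m^2}-\eps\;=\;\frac{3m+1-2j}{2m^2}-\eps .
\]
Next I would ask for which $j$ this right-hand side is at least $\frac{1}{2m}=\frac{m}{2m^2}$. This holds precisely when $\frac{3m+1-2j}{2m^2}-\frac{m}{2m^2}\ge\eps$, i.e.\ when $\frac{2m+1-2j}{2m^2}\ge\eps$. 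Since $m\le k$, we have $\eps\le\frac{1}{2k}\le\frac{1}{2m}=\frac{m}{2m^2}$, so it suffices that $\frac{2m+1-2j}{2m^2}\ge\frac{m}{2m^2}$, equivalently $2j\le m+1$, i.e.\ $j\le\frac{m+1}{2}$.

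Finally I would count: the number of integers $j\in[m]$ with $j\le\frac{m+1}{2}$ is $\lfloor\frac{m+1}{2}\rfloor\ge\frac{m}{2}$ (regardless of the parity of $m$). For every such $j$ we have shown $w_{\istar_j}\ge\frac{1}{2m}$, and because $\istar_1,\dots,\istar_m$ are distinct, these $j$'s contribute at least $m/2$ distinct elements to the set $\{i\in[k]:w_i\ge 1/(2m)\}\cap\{\istar_1,\dots,\istar_m\}$, which is exactly the claim.

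There is essentially no genuine obstacle here; the statement follows from a one-line rearrangement plus a count. The only two points needing a moment of care are (i) invoking $m\le k$ to pass from the hypothesis $\eps\le 1/(2k)$ to the slacker bound $\eps\le 1/(2m)$, which is what makes the threshold calculation go through, and (ii) the integer rounding in the final count, i.e.\ checking that $\lfloor (m+1)/2\rfloor\ge m/2$ so that ``at least half'' of the critical indices are captured.
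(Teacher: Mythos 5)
Your proof is correct and is essentially the contrapositive phrasing of the paper's argument: the paper assumes fewer than $m/2$ of the weights $w_{\istar_j}$ reach $1/(2m)$, picks a witness $j \le \lceil m/2\rceil$ with $w_{\istar_j} < 1/(2m)$, and derives $f(w) < \frac{2m+1}{2m^2} \le \frac{3m+1}{2m^2} - \eps$, while you directly show $w_{\istar_j} \ge 1/(2m)$ for every $j \le (m+1)/2$ and count. Both rest on the same two ingredients (lower-bounding each term of the minimum by $f(w)$, and using $\eps \le 1/(2k) \le 1/(2m)$), so there is no substantive difference.
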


\begin{proof}
    Suppose towards a contradiction that strictly fewer than $m / 2$ entries among $w_{\istar_1}, w_{\istar_2}, \ldots, w_{\istar_m}$ are at least $1 / (2m)$. Then, there exists $j \in \{1, 2, \ldots, \lceil m / 2\rceil\}$ such that $w_{\istar_j} < 1 / (2m)$. It follows that
    \[
        f(w)
    \le w_{\istar_j} + \frac{j}{m^2}
    <   \frac{1}{2m} + \frac{(m + 1) / 2}{m^2}
    =   \frac{2m + 1}{2m^2}.
    \]
    On the other hand, since $m \le k$ and $\eps \le 1/(2k)$, we have
    \[
        \frac{3m+1}{2m^2} - \eps
    \ge \frac{3m+1}{2m^2} - \frac{1}{2m}
    =   \frac{2m + 1}{2m^2}
    >   f(w),
    \]
    a contradiction.
\end{proof}

\paragraph{Limited Information from a First-Order Oracle} Therefore, it remains to argue that an OODS algorithm cannot identify $\Omega(m)$ critical indices using $\ll m$ rounds while incurring a $\polylog(k)$ sample overhead. To this end, we observe that a first-order oracle provides little information on the critical indices, unless we have already found many such indices.

By definition of $f$ from \prettyref{def:large-eps}, the following is a valid first-order oracle: Given $w \in \Delta^{k-1}$, find the minimum $j \in [m]$ such that $f(w) = w_{\istar_j} + j/m^2$. Return the value of $f(w)$ and the supergradient $e_{\istar_j}$. In other words, via a first-order oracle, the optimization algorithm only gets to know the critical index that accounts for the value of $f(w)$.

Our next lemma suggests that, unless we put a weight of $> 1/m^2$ on each of $\istar_1, \istar_2, \ldots, \istar_j$, the value of $f(w)$ is determined by the first $j$ terms in the minimum. In other words, unless we already ``know'' the first $j$ critical indices, we cannot learn the value of $\istar_{j+1}, \ldots, \istar_m$ from the oracle.

\begin{lem}\label{lem:oracle-feedback-large-eps}
    For every $w \in \Delta^{k-1}$ and $j \in [m]$, assuming $\min\{w_{\istar_1}, w_{\istar_2}, \ldots, w_{\istar_{j}}\} \le 1 / m^2$, we have
    \[
        f(w) = \min\left\{w_{\istar_1} + \frac{1}{m^2}, w_{\istar_2} + \frac{2}{m^2}, \ldots, w_{\istar_j} + \frac{j}{m^2}\right\}.
    \]
\end{lem}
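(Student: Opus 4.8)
The plan is a direct comparison of the two minima. Write $g_\ell(w) \coloneqq w_{\istar_\ell} + \ell/m^2$ for $\ell \in [m]$, so that $f(w) = \min_{\ell \in [m]} g_\ell(w)$ and the claimed right-hand side is $\min_{\ell \in [j]} g_\ell(w)$. Since $[j] \subseteq [m]$, the inequality $f(w) \le \min_{\ell \in [j]} g_\ell(w)$ is immediate; the whole content is the reverse direction, namely that none of the ``later'' terms $g_{j+1}(w), \dots, g_m(w)$ can be strictly smaller than $\min_{\ell \in [j]} g_\ell(w)$.

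First I would use the hypothesis to extract a cheap upper bound on the partial minimum. By assumption there is some $i \in [j]$ with $w_{\istar_i} \le 1/m^2$, and hence
\[
    \min_{\ell \in [j]} g_\ell(w) \le g_i(w) = w_{\istar_i} + \frac{i}{m^2} \le \frac{1}{m^2} + \frac{j}{m^2} = \frac{j+1}{m^2},
\]
using $i \le j$. Next, for any $\ell \in \{j+1, \dots, m\}$, nonnegativity of the weights (which holds since $w \in \Delta^{k-1}$) gives
\[
    g_\ell(w) = w_{\istar_\ell} + \frac{\ell}{m^2} \ge \frac{\ell}{m^2} \ge \frac{j+1}{m^2} \ge \min_{\ell' \in [j]} g_{\ell'}(w).
\]
Combining the two displays, every later term is at least the partial minimum, so $\min_{\ell \in [m]} g_\ell(w) = \min_{\ell \in [j]} g_\ell(w)$, which is exactly the statement of the lemma.

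I do not anticipate any real obstacle: the argument is a two-line sandwich using only that $w$ lies in the simplex (for $w_{\istar_\ell} \ge 0$) together with the small-weight hypothesis on one of the first $j$ coordinates. The ``worst case'' for a later term $g_\ell(w)$ is when its weight is $0$, and even then the additive offset $\ell/m^2 \ge (j+1)/m^2$ already matches the bound we squeezed out of the hypothesis. The only thing worth stating carefully is the index bookkeeping ($i \le j$, and $\ell \ge j+1$ for the later terms), but there are no estimates to grind through.
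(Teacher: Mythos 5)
Your proof is correct and is essentially the paper's argument in direct form: the paper proves the same two-sided comparison by contradiction (positing a later index $j_2 > j$ that wins the minimum and deriving $w_{\istar_{j_1}} > 1/m^2$), while you present it as a sandwich via the explicit threshold $(j+1)/m^2$. Both rest on the same two facts, nonnegativity of $w_{\istar_\ell}$ and the additive gap $\ell/m^2 - j/m^2 \ge 1/m^2$ for $\ell > j$, so the content is identical.
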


\begin{proof}
    Suppose towards a contradiction that some $w \in \Delta^{k-1}$ satisfies $w_{\istar_{j_1}} \le 1/m^2$ for some $j_1 \le j$, but $f(w) \ne \min\left\{w_{\istar_1} + \frac{1}{m^2}, w_{\istar_2} + \frac{2}{m^2}, \ldots, w_{\istar_j} + \frac{j}{m^2}\right\}$. Then, there exists $j_2 > j$ such that
    \[
        w_{\istar_{j_2}} + \frac{j_2}{m^2}
    <   \min\left\{w_{\istar_1} + \frac{1}{m^2}, w_{\istar_2} + \frac{2}{m^2}, \ldots, w_{\istar_j} + \frac{j}{m^2}\right\}
    \le w_{\istar_{j_1}} + \frac{j_1}{m^2},
    \]
    which implies $w_{\istar_{j_1}} > w_{\istar_{j_2}} + (j_2 - j_1)/m^2 \ge 1/m^2$, a contradiction.
\end{proof}

\paragraph{Intuition behind Lower Bound} There is a natural $m$-round algorithm for solving the instance from \prettyref{def:large-eps}: In the first round, we query the uniform weight vector $w = (1/k, 1/k, \ldots, 1/k)$ to learn the value of $\istar_1$. In the second round, we query $f$ on some $w$ with $w_{\istar_1} \gg 1/m^2$, thereby learning the value of $\istar_2$. Repeating this gives a $m$-round algorithm with a low sample overhead.

One might hope to be ``more clever'' and learn more than one critical index in each round. For example, the algorithm might put a cap of $\gg 1/m^2$ on several coordinates in the first round, in the hope of hitting more than one indices in $\istar_1, \istar_2, \ldots$. However, if the algorithm has a sample overhead of $s$, only $O(m^2s)$ such guesses can be made. In particular, if $m^2s \ll k$, the guesses only cover a tiny fraction of the indices. Thus, over the uniform randomness in $\istar$, the algorithm can still only learn $O(1)$ critical indices in expectation within each round.

\subsection{Lower Bounds for Large $\eps$}
Now, we formally state and prove the lower bounds in the $\eps \le O(1/k)$ regime, for both the box and the ellipsoid settings.

\begin{thm}
\label{thm:OODS-lower-large-eps}
    The following holds for all sufficiently large $k$, $\eps \le 1 / (2k)$ and $s \ge 1$: (1) In the box setting, for $m \coloneqq \lfloor \frac{1}{2}\sqrt{k/s}\rfloor$, no OODS algorithm with $r \le m / 24$ rounds and sample overhead $s$ can find an $\eps$-approximate maximum with probability at least $9/10$; (2) In the ellipsoid setting, for $m \coloneqq \lfloor \frac{1}{2}(k/s)^{1/4}\rfloor$, no OODS algorithm with $r \le m / 24$ rounds and sample overhead $s$ can find an $\eps$-approximate maximum with probability at least $9/10$.
\end{thm}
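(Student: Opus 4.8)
I would run the standard two‑ingredient argument: \prettyref{lem:approx-maxima-large-eps} says any $\eps$‑approximate maximum of the instance in \prettyref{def:large-eps} (with $m$ set as in the theorem; rescale the co‑domain of $f$ from $[0,2]$ to $[0,1]$) must place weight $\ge 1/(2m)$ on at least $m/2$ of the critical indices $\istar_1,\dots,\istar_m$, while \prettyref{lem:oracle-feedback-large-eps} says an oracle query at $w$ can return the supergradient $e_{\istar_j}$ only if $w_{\istar_{j'}} > 1/m^2$ for every $j' < j$. The goal is then to show that an algorithm with $r \le m/24$ rounds and sample overhead $s$ cannot place weight $\ge 1/(2m)$ on $m/2$ critical indices except with small probability, hence finds an $\eps$‑approximate maximum with probability at most $1/10$.

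First I would set up a deferred‑decisions coupling in which the identity of $\istar_j$ is revealed only at the moment the oracle is forced to output $e_{\istar_j}$; call $\istar_j$ \emph{located} once that happens. For round $t$ let $G_t \coloneqq \{i \in [k] : \exists\, w \in \calO(\wbar^{(t)}),\ w_i > 1/m^2\}$ be the coordinates on which weight above $1/m^2$ is permissible during round $t$. Using $\sum_i \wbar^{(t)}_i \le \sum_i \wbar^{(r)}_i \le s$ (caps are non‑decreasing) I get $|G_t| < m^2 s$ in the box setting, and in the ellipsoid setting $w_i > 1/m^2$ together with $\sum_j w_j^2/\wbar_j \le 1$ forces $\wbar_i > 1/m^4$, so $|G_t| < m^4 s$. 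With the stated $m$ these give $|G_t| \le k/4$ and $|G_t| \le k/16$ respectively, while $m \le k/2$, so the density of $G_t$ among the not‑yet‑located coordinates is at most $1/2$ (resp.\ $1/8$).

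The heart is the per‑round progress bound. By \prettyref{lem:oracle-feedback-large-eps}, to locate a \emph{new} critical index during round $t$ the algorithm must query a point with weight $> 1/m^2$ on each of $\istar_1,\dots,\istar_{j-1}$, hence $\istar_1,\dots,\istar_{j-1} \in G_t$; and $G_t$ is fixed before the round, hence before those still‑unlocated indices are revealed, so under the coupling they are conditionally uniform over $\ge k/2$ coordinates. Therefore the number of critical indices newly located in round $t$ is stochastically dominated by $1 + X_t$ with $X_t$ geometric of success probability $\ge 1/2$; summing over $r \le m/24$ rounds and using the exponential tail of the resulting negative‑binomial sum, the total number of located critical indices is $< m/4$ except with probability $\le 1/20$ (if $m$ is below an absolute constant then $r \le m/24 < 1$ and the statement is vacuous). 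The output $\hat w \in \Delta^{k-1}$ places weight $\ge 1/(2m)$ on at most $2m$ coordinates, a set $A$ determined by the transcript and the algorithm's coins; at most $m/4$ of them are located critical indices, and since the $\le m$ unlocated critical indices are conditionally uniform over $\ge k/2$ coordinates, the expected number of them in the fixed size‑$\le 2m$ set $A$ is at most $4m^2/k \le 1$ (by the choice of $m$ and $s \ge 1$), hence $< m/4$ except with probability $\le 1/20$ by Markov. On the complementary event $|A \cap \{\istar_1,\dots,\istar_m\}| < m/2$, so by \prettyref{lem:approx-maxima-large-eps} the output is not an $\eps$‑approximate maximum; a union bound gives success probability $\le 1/10 < 9/10$. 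The ellipsoid case is identical, only with $|G_t| \le m^4 s \le k/16$ and $m = \lfloor \tfrac12 (k/s)^{1/4}\rfloor$.

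The step I expect to be the main obstacle is making the deferred‑decisions coupling rigorous: the oracle's \emph{value} output $f(w)$, not just the supergradient, could in principle leak information about a coordinate whose weight sits near the threshold $1/m^2$, so one must argue via \prettyref{lem:oracle-feedback-large-eps}—together with a consistent bookkeeping of which prefixes of $\istar$ have been ``covered'' by high weight, and with the number of speculative high‑weight coordinates controlled by the $|G_t|$ bound—that no information about an unlocated $\istar_j$ is disclosed before $\istar_1,\dots,\istar_{j-1}$ are located. Pinning down the exact conditional‑uniformity statement, and coping with the fact that the set of located indices need not be a clean prefix because the algorithm may ``guess'' high‑weight coordinates, is the delicate part of the formal argument; everything else is the routine Markov/negative‑binomial bookkeeping sketched above.
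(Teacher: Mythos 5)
Your proposal follows essentially the same skeleton as the paper's proof: the same hard instance (\prettyref{def:large-eps}), the same two key lemmas (\prettyref{lem:approx-maxima-large-eps} and \prettyref{lem:oracle-feedback-large-eps}), the same "deferred decisions" strategy, a geometric per-round progress bound of the form you describe, and a Markov-style final step against $|\hat I| \le 2m$. The correct-but-unpinned step you flag at the end is exactly where the paper does something different, and it is worth spelling out.

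The paper does \emph{not} reveal $\istar_j$ lazily "at the moment the oracle outputs $e_{\istar_j}$." Instead, at the \emph{start} of round $t$, after $I^{(t)} \coloneqq \{i : \wbar^{(t)}_i > 1/m^2\}$ is fixed but before any queries in that round, it keeps drawing $\istar_{n^{(t-1)}+1}, \istar_{n^{(t-1)}+2}, \ldots$ uniformly (without replacement from the remaining indices) and \emph{stops} as soon as it draws one outside $I^{(t)}$, setting $n^{(t)}$ to the total number drawn so far. Because the stopping index lies outside $I^{(t)}$, its cap (and hence its weight at any observable point) is at most $1/m^2$, so \prettyref{lem:oracle-feedback-large-eps} guarantees that \emph{every} query $\Alg$ makes during round $t$ has its answer determined by $\istar_1,\ldots,\istar_{n^{(t)}}$ alone; the remaining $m - n^{(t)}$ indices stay conditionally uniform. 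This deliberate over-revealing means the revealed set is always a clean prefix, so the issue you worry about — the algorithm "guessing" a high-weight coordinate that happens to equal $\istar_{j}$ for a non-consecutive $j$, producing a non-prefix set of located indices — never arises. Your lazy scheme would have to address it (e.g., the oracle can skip past an unlocated $\istar_{j'}$ if the algorithm accidentally places weight $> 1/m^2$ there), which is genuinely delicate. The paper's eager revealing is the fix, and with it the per-round geometric bound and the final Markov step go through with no further care. One other small point: the paper closes with a simple Markov bound on $N^{(r)}$ (giving $\le 1/3$ each for the two bad events, hence success $\le 2/3 < 9/10$), whereas you invoke exponential tails to reach $\le 1/20$ — the Chernoff step is unnecessary, and as stated it is not fully justified for $m$ in the low-but-nonvacuous range; the Markov version suffices and avoids that corner case.
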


In particular, to have a sample overhead $s \le \polylog(k)$, any OODS algorithm must take $\widetilde\Omega(\sqrt{k})$ rounds in the box setting, and $\widetilde\Omega(k^{1/4})$ rounds in the ellipsoid setting.

\begin{proof}
    We will focus on the box setting; the ellipsoid setting follows easily with only a few changes in the proof.
    
    Suppose towards a contradiction that an OODS algorithm $\Alg$ for the box setting takes $r \le m/24$ rounds and succeeds with probability at least $9/10$. Consider the execution of $\Alg$ on a random OODS instance defined in \prettyref{def:large-eps} with $m \coloneqq \left\lfloor \frac{1}{2}\sqrt{k/s}\right\rfloor$.

    For each round $t \in [r]$, let $I^{(t)} \coloneqq \{i \in [m]: \wbar_i^{(t)} > 1/m^2\}$ denote the indices on which algorithm $\Alg$ sets a cap strictly higher than $1/m^2$. Since $\Alg$ has a sample overhead of $s$, we have $\|\wbar^{(t)}\|_1 \le s$ and thus $|I^{(t)}| \le m^2s$. By definition of the box setting, within each round $t$, $\Alg$ cannot query the oracle on $w$ if $w_i > 1/m^2$ holds for some $i \in [k] \setminus I^{(t)}$. Therefore, \prettyref{lem:oracle-feedback-large-eps} implies the following: Unless $\{\istar_1, \istar_2, \ldots, \istar_j\} \subseteq I^{(t)}$, for every point $w$ that $\Alg$ queries in round~$t$, it holds that
    \[
        f(w) = \min\left\{w_{\istar_1} + \frac{1}{m^2}, w_{\istar_2} + \frac{2}{m^2}, \ldots, w_{\istar_j} + \frac{j}{m^2}\right\}.
    \]

    \paragraph{Deferring Randomness} Rather than drawing all the $m$ critical indices at the beginning, in our analysis, we choose these indices ``on demand'' as the OODS algorithm expands the observable region. We do this carefully, so that the distribution of $\istar$ is not biased.

    In the first round, after $\wbar^{(1)}$ (and thus the set $I^{(1)}$) has been determined by $\Alg$, we draw $\istar_1$ from $[k]$ uniformly at random. If $\istar_1 \notin I^{(1)}$, we stop the process; otherwise, we draw $\istar_2$ from $[k] \setminus \{\istar_1\}$ uniformly at random. We keep doing this, until either all the $m$ critical indices are determined, or we draw some $\istar_j \notin I^{(1)}$. Let $n^{(1)}$ denote the number of critical indices drawn in the first round. By \prettyref{lem:oracle-feedback-large-eps}, for every $w \in \calO(\wbar^{(1)})$ observable to $\Alg$ in the first round, $f(w)$ is determined by the first $n^{(1)}$ critical indices. In particular, the remaining $m - n^{(1)}$ critical indices ($\istar_{n^{(1)} + 1}$ through $\istar_m$) are still uniformly distributed among $[k] \setminus \{\istar_1, \ldots, \istar_{n^{(1)}}\}$, conditioned on any information that $\Alg$ obtains from the first-order oracle.

    More generally, at the beginning of each round $t \in [r]$, we observe the set $I^{(t)}$ is determined by algorithm $\Alg$. We keep sampling $\istar_{n^{(t-1)} + 1}, \istar_{n^{(t-1)} + 2}, \ldots$, until either all $m$ indices are chosen or we encounter an index outside $I^{(t)}$. Let $n^{(t)}$ denote the total number of critical indices that have been sampled, including those sampled in the first $t-1$ rounds. Then, all queries made by $\Alg$ during the $t$-th round can be answered solely based on the values of $\istar_1$ through $\istar_{n^{(t)}}$, as they do not depend on the $m - n^{(t)}$ indices that have not been decided.

    \paragraph{Control the Progress Measure} Let random variable $N^{(t)}$ denote the value of $n^{(t)}$ in round~$t$, over the randomness in both algorithm $\Alg$ and the random drawing of the critical indices. Next, we upper bound the expectation of $N^{(r)}$ after all $r$ rounds.

    In each round $t \in [r]$, whenever we sample a critical index $\istar_j$ ($j \in [m]$), there are $k - (j - 1) > k - m$ possible choices (namely, $[k] \setminus \{\istar_1, \ldots, \istar_{j-1}\}$). Among these choices, at most $|I^{(t)}| \le m^2s$ fall into the set $I^{(t)}$. Recall our choice of $m \coloneqq \left\lfloor \frac{1}{2}\sqrt{k/s}\right\rfloor$, which ensures $m^2s \le k/4$ and $m \le k/2$. Thus, the probability of not stopping after drawing $\istar_j$ is at most $\frac{m^2s}{k - m} \le \frac{k/4}{k/2} = \frac{1}{2}$. It follows that the number of critical indices sampled in each round $t$ is stochastically dominated by a geometric random variable with parameter $1/2$. Therefore, we conclude that $\EEx{}{N^{(r)}} \le 2r$.

    \paragraph{Control the Success Probability} Finally, we derive a contradiction by arguing that the probability that $\Alg$ finds an $\eps$-approximate maximum of $f$ is below $9/10$. Let $\hat w$ denote the output of $\Alg$, and $\hat I \coloneqq \{i \in [k]: \hat w_i \ge 1/(2m)\}$ be the indices on which $\hat w$ puts a weight of at least $1/(2m)$. Since the entries of $\hat w \in \Delta^{k-1}$ sum up to $1$, $|\hat I| \le 2m$. By \prettyref{lem:approx-maxima-large-eps}, for $\hat w$ to be an $\eps$-approximate maximum, $\hat I \cap \{\istar_1, \istar_2, \ldots, \istar_m\}$ must have a size $\ge m/2$.
    
    Conditioning on the event that $N^{(r)} \le m / 4$, at least $3m/4$ critical indices have not been chosen, and they are uniformly distributed among the $k - N^{(r)}$ remaining indices. For the condition $|\hat I \cap \{\istar_1, \istar_2, \ldots, \istar_m\}| \ge m/2$ to hold, we must have $|\hat I \cap \{\istar_{N^{(r)} + 1}, \istar_{N^{(r)} + 2}, \ldots, \istar_m\}| \ge m/2 - N^{(r)} \ge m/4$. For any choice of $\hat I$, over the remaining randomness in $\{\istar_{N^{(r)} + 1}, \istar_{N^{(r)} + 2}, \ldots, \istar_m\}$, it holds that
    \[
        \EEx{}{|\hat I \cap \{\istar_{N^{(r)} + 1}, \istar_{N^{(r)} + 2}, \ldots, \istar_m\}| \mid N^{(r)}}
    \le |\hat I| \cdot \frac{m - N^{(r)}}{k - N^{(r)}}
    \le 2m \cdot \frac{m}{k - m/4}
    \le \frac{4m^2}{k}.
    \]

    Recall that $m \le \frac{1}{2}\sqrt{k/s} \le \sqrt{k} / 2$. Markov's inequality gives
    \begin{align*}
        &~\pr{}{|\hat I \cap \{\istar_{N^{(r)} + 1}, \istar_{N^{(r)} + 2}, \ldots, \istar_m\}| \ge m/4 \mid N^{(r)} \le m/4}\\
    \le &~\frac{\EEx{}{|\hat I \cap \{\istar_{N^{(r)} + 1}, \istar_{N^{(r)} + 2}, \ldots, \istar_m\}| \mid N^{(r)} \le m/4}}{m / 4}\\
    \le &~\frac{4m^2/k}{m/4}
    \le \frac{8}{\sqrt{k}}
    \le \frac{1}{3},
    \end{align*}
    where the last step holds for all sufficiently large $k \ge 576$. In other words, conditioning on $N^{(r)} \le m/4$, the probability of finding an $\eps$-approximate maximum is at most $1/3$. On the other hand, by Markov's inequality and the assumption that $r \le m / 24$,
    \[
        \pr{}{N^{(r)} > m/4}
    \le \frac{\EEx{}{N^{(r)}}}{m/4}
    \le \frac{2r}{m/4}
    \le \frac{1}{3}.
    \]
    Therefore, the overall probability for $\Alg$ to output an $\eps$-approximate maximum is at most $2/3 < 9/10$, a contradiction.

    \paragraph{The Ellipsoid Setting} In this setting, we consider the execution of $\Alg$ on a random OODS instance defined in \prettyref{def:large-eps} with $m \coloneqq \left\lfloor \frac{1}{2}(k/s)^{1/4}\right\rfloor$. Different from the analysis for the box setting, we define $I^{(t)} \coloneqq \{i \in [m]: \wbar^{(t)}_i > 1/m^4\}$ using threshold $1/m^4$ instead of $1/m^2$. Then, we have an analogous implication of \prettyref{lem:oracle-feedback-large-eps}: Unless $\{\istar_1, \istar_2, \ldots, \istar_j\} \subseteq I^{(t)}$, for every point $w$ that $\Alg$ queries in round~$t$, it holds that
    \[
        f(w) = \min\left\{w_{\istar_1} + \frac{1}{m^2}, w_{\istar_2} + \frac{2}{m^2}, \ldots, w_{\istar_j} + \frac{j}{m^2}\right\}.
    \]
    To see this, note that if $\istar_{j_0} \notin I^{(t)}$ holds for some $j_0 \in [j]$, we have $\wbar^{(t)}_{\istar_{j_0}} \le 1/m^4$. Then, for any $w \in \calO(\wbar^{(t)})$, we must have
    \[
        \frac{w_{\istar_{j_0}}^2}{\wbar^{(t)}_{\istar_{j_0}}}
    \le \sum_{i=1}^{k}\frac{w_i^2}{\wbar^{(t)}_i}
    \le 1,
    \]
    which implies $w_{\istar_{j_0}} \le \sqrt{\wbar^{(t)}_{\istar_{j_0}}} \le 1/m^2$. Then, \prettyref{lem:oracle-feedback-large-eps} implies
    \[
        f(w) = \min\left\{w_{\istar_1} + \frac{1}{m^2}, w_{\istar_2} + \frac{2}{m^2}, \ldots, w_{\istar_j} + \frac{j}{m^2}\right\}.
    \]

    To control the expectation of $N^{(r)}$, we note that the definition of $I^{(t)}$ and the assumption on $\Alg$ having a sample overhead of $s$ together imply $|I^{(t)}| \le m^4s$. Then, the choice of $m \le \frac{1}{2}(k/s)^{1/4} \le k/2$ guarantees that the sampling process stops at each step except with probability $\frac{m^4s}{k - m} \le \frac{k / 16}{k / 2} \le 1/2$. The rest of the proof goes through. 
\end{proof}

\subsection{Hard Instance for Small $\eps$}
When the accuracy parameter $\eps$ is exponentially small, we give a slightly different instance on which any OODS algorithm must take $\Omega(k)$ rounds in the box setting and $\Omega(\sqrt{k})$ rounds in the ellipsoid setting, matching the exponents in Theorems \ref{thm:lazy-hedge-box}~and~\ref{thm:lazy-hedge-ellipsoid}.

\begin{defn}[Hard instance for small $\eps$]
    \label{def:small-eps}
    Given $k \ge m \ge 1$, draw $m$ different indices $\istar_1, \istar_2, \ldots, \istar_m \in [k]$ uniformly at random. The objective function $f: \Delta^{k-1} \to [0, 2]$ is
    \[
        f(w) \coloneqq \min_{j \in [m]}(a_j \cdot w_{\istar_j} + b_j),
    \]
    where $a_j = 2^{-j}$ and $b_j = (1 - 2^{-j}) / m$.
\end{defn}

Again, we allow the co-domain of $f$ to be $[0, 2]$ instead of $[0, 1]$ for brevity. We start with a few simple observations and the intuition behind the lower bound proof.

\paragraph{Characterization of Approximate Maxima} We first note that the maximizer of $f$ over $\Delta^{k-1}$ is the following vector $w$: $w_i = 1 / m$ for every $i \in \{\istar_1, \ldots, \istar_m\}$ and $w_i = 0$ for $i \in [k] \setminus \{\istar_1, \ldots, \istar_m\}$. Indeed, such $w$ ensures that
\[
    a_j \cdot w_{\istar_j} + b_j
=   2^{-j}\cdot \frac{1}{m} + \frac{1 - 2^{-j}}{m}
=   \frac{1}{m}
\]
for every $j \in [m]$ and thus $f(w) = 1/m$. Furthermore, to achieve an objective strictly higher than $1/m$, we must strictly increase each of $w_{\istar_1}, \ldots, w_{\istar_m}$, which would violate $w \in \Delta^{k-1}$.

Analogous to \prettyref{lem:approx-maxima-large-eps}, the following lemma states that for every $\eps \le e^{-\Omega(k)}$, every $\eps$-approximate maximum of $f$ must put a significant weight of $\Omega(1/m)$ on at least half of the \emph{critical indices} $\istar_1, \ldots, \istar_m$.

\begin{lem}\label{lem:approx-maxima-small-eps}
    For every $\eps \le 2^{-(k+1)/2}/(2k)$ and every $w \in \Delta^{k-1}$ that satisfies $f(w) \ge \frac{1}{m} - \eps$, we have
    \[
        |\{i \in [k]: w_i \ge 1 / (2m)\} \cap \{\istar_1, \istar_2, \ldots, \istar_m\}| \ge m / 2.
    \]
\end{lem}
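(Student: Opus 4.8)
The plan is to run the same contradiction argument as in the proof of \prettyref{lem:approx-maxima-large-eps}, replacing the additive perturbations $j/m^2$ there by the multiplicative weights $a_j = 2^{-j}$ appearing in \prettyref{def:small-eps}. So I would start by assuming, towards a contradiction, that some $w \in \Delta^{k-1}$ satisfies $f(w) \ge \frac1m - \eps$ while strictly fewer than $m/2$ of the values $w_{\istar_1}, \dots, w_{\istar_m}$ are at least $1/(2m)$.

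The first step is to locate a \emph{low-index} critical coordinate carrying little weight. If each of the first $\lceil m/2\rceil$ coordinates $w_{\istar_1}, \dots, w_{\istar_{\lceil m/2\rceil}}$ were at least $1/(2m)$, then at least $\lceil m/2\rceil \ge m/2$ of the $m$ critical coordinates would be $\ge 1/(2m)$, contradicting the assumption; hence there is some $j \le \lceil m/2\rceil$ with $w_{\istar_j} < 1/(2m)$. Plugging this coordinate into the $j$-th term of the minimum defining $f$ gives
\[
    f(w) \le a_j w_{\istar_j} + b_j < 2^{-j}\cdot\frac{1}{2m} + \frac{1-2^{-j}}{m} = \frac1m - \frac{2^{-j-1}}{m}.
\]
Since $m \le k$, I would then use $j \le \lceil m/2\rceil \le \lceil k/2\rceil \le (k+1)/2$ to conclude $2^{-j-1} \ge 2^{-(k+3)/2}$, and hence $f(w) < \frac1m - \frac{2^{-(k+3)/2}}{m}$.

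Finally I would match this against the hypothesis $\eps \le 2^{-(k+1)/2}/(2k)$: together with $m \le k$ this yields $\eps \le 2^{-(k+1)/2}/(2m) = 2^{-(k+3)/2}/m$, so $f(w) \ge \frac1m - \eps \ge \frac1m - \frac{2^{-(k+3)/2}}{m}$, contradicting the previous display. I do not expect any real obstacle here; the only thing that needs care is bookkeeping the powers of $2$ so that the worst case $j=\lceil m/2\rceil\le (k+1)/2$ is exactly absorbed by the stated threshold (the extra factor of $k$ in the denominator of $\eps$ accounts for the loss from comparing $1/(2m)$ against $1/m$, using $m\le k$). One could equivalently phrase the whole argument as the contrapositive of a ``robustness'' statement around the exact maximizer identified just before the lemma.
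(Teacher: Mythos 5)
Your argument is correct and is essentially the paper's own proof: both locate an index $j \le \lceil m/2\rceil$ with $w_{\istar_j} < 1/(2m)$, bound $f(w) < \frac{1}{m} - \frac{2^{-j}}{2m}$, and use $m \le k$ together with the hypothesis on $\eps$ to derive a contradiction. The only difference is superficial bookkeeping (you pass to the exponent $(k+3)/2$ directly, while the paper routes through $(m+1)/2$); the threshold $\eps \le 2^{-(k+1)/2}/(2k)$ is absorbed identically in both versions.
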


\begin{proof}
    Suppose towards a contradiction that strictly fewer than $m / 2$ entries among $w_{\istar_1}, w_{\istar_2}, \ldots, w_{\istar_m}$ are at least $1 / (2m)$. Then, there exists $j \in \{1, 2, \ldots, \lceil m / 2\rceil\}$ such that $w_{\istar_j} < 1 / (2m)$. It follows that
    \[
        f(w)
    \le a_j \cdot w_{\istar_j} + b_j
    <   \frac{2^{-j}}{2m} + \frac{1 - 2^{-j}}{m}
    =   \frac{1}{m} - \frac{2^{-j}}{2m}
    \le \frac{1}{m} - \frac{2^{-(m+1)/2}}{2m}.
    \]
    On the other hand, since $m \le k$ and $\eps \le 2^{-(k+1)/2}/(2k)$, we have
    \[
        \frac{1}{m} - \eps
    \ge \frac{1}{m} - \frac{2^{-(k+1)/2}}{2k}
    =   \frac{1}{m} - \frac{2^{-(m+1)/2}}{2m}.
    \]
    This contradicts the assumption that $f(w) \ge \frac{1}{m} - \eps$.
\end{proof}

\paragraph{Limited Information from a First-Order Oracle} Again, the lower bound proof amounts to showing that an OODS algorithm cannot identify $\Omega(m)$ critical indices using $\ll m$ rounds while having a low sample overhead. To this end, we note that the following is a valid first-order oracle for $f$: Given $w \in \Delta^{k-1}$, find the minimum $j \in [m]$ such that $f(w) = a_j \cdot w_{\istar_j} + b_j$. Return the value of $f(w)$ and the supergradient $a_j \cdot e_{\istar_j}$. Again, the optimization algorithm only gets to know the critical index that accounts for the value of $f(w)$.

The following lemma is an analogue of \prettyref{lem:oracle-feedback-large-eps}: We must put a weight of $\Omega(1/m)$ on each of $\istar_1, \istar_2, \ldots, \istar_j$ to learn the values of $\istar_{j+1}, \ldots, \istar_m$ from the first-order oracle.

\begin{lem}\label{lem:oracle-feedback-small-eps}
    For every $w \in \Delta^{k-1}$ and $j \in [m]$, assuming $\min\{w_{\istar_1}, w_{\istar_2}, \ldots, w_{\istar_{j}}\} \le 1 / (2m)$, we have
    \[
        f(w) = \min\left\{a_1 \cdot w_{\istar_1} + b_1, a_2 \cdot w_{\istar_2} + b_2, \ldots, a_j \cdot w_{\istar_j} + b_j\right\}.
    \]
\end{lem}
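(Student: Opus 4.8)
The plan is to follow the proof of \prettyref{lem:oracle-feedback-large-eps} almost verbatim, arguing by contradiction. Suppose some $w \in \Delta^{k-1}$ satisfies $w_{\istar_{j_1}} \le 1/(2m)$ for some $j_1 \le j$, yet $f(w) \ne \min\{a_1 w_{\istar_1} + b_1, \ldots, a_j w_{\istar_j} + b_j\}$. Since $f(w)$ is the minimum of all $m$ affine terms and the right-hand side is the minimum over a subset of them, we always have $f(w) \le \min\{a_1 w_{\istar_1} + b_1, \ldots, a_j w_{\istar_j} + b_j\}$; hence the inequality is strict, and the overall minimum must be attained at some index $j_2 > j$.

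Next I would plug in the explicit values $a_\ell = 2^{-\ell}$ and $b_\ell = (1 - 2^{-\ell})/m$ and bound the two relevant terms. Using $w_{\istar_{j_1}} \le 1/(2m)$ and $a_{j_1} = 2^{-j_1} \le 1$,
\[
    a_{j_1} w_{\istar_{j_1}} + b_{j_1} \le \frac{2^{-j_1}}{2m} + \frac{1 - 2^{-j_1}}{m} = \frac{1}{m} - \frac{2^{-j_1 - 1}}{m},
\]
while from $w_{\istar_{j_2}} \ge 0$,
\[
    a_{j_2} w_{\istar_{j_2}} + b_{j_2} \ge b_{j_2} = \frac{1 - 2^{-j_2}}{m} = \frac{1}{m} - \frac{2^{-j_2}}{m}.
\]
Finally, since $j_2 > j \ge j_1$ and the indices are integers, we have $j_2 \ge j_1 + 1$, so $2^{-j_2} \le 2^{-j_1 - 1}$, which yields $a_{j_2} w_{\istar_{j_2}} + b_{j_2} \ge \frac{1}{m} - \frac{2^{-j_1 - 1}}{m} \ge a_{j_1} w_{\istar_{j_1}} + b_{j_1}$. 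This contradicts $f(w) = a_{j_2} w_{\istar_{j_2}} + b_{j_2} < a_{j_1} w_{\istar_{j_1}} + b_{j_1}$, completing the argument.

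I do not expect a genuine obstacle here. The only point requiring a moment of care—and the reason the hypothesis can be the relatively weak bound $w_{\istar_{j_1}} \le 1/(2m)$ rather than something exponentially small—is that the discount factor $a_{j_1} = 2^{-j_1} \le 1$ caps the contribution of the small coordinate $w_{\istar_{j_1}}$ at $2^{-j_1-1}/m$, which is exactly matched by the integer-gap estimate $b_{j_2} \ge 1/m - 2^{-j_1-1}/m$. So the structure mirrors the large-$\eps$ case, with the geometric offsets $b_j$ playing the role that the arithmetic offsets $j/m^2$ played in \prettyref{lem:oracle-feedback-large-eps}.
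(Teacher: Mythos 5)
Your proof is correct and takes essentially the same route as the paper: argue by contradiction, extract an index $j_2 > j$ at which the global minimum is attained strictly below the subset minimum, and then derive an algebraic contradiction using $w_{\istar_{j_1}} \le 1/(2m)$ and the geometric decay of $a_\ell = 2^{-\ell}$. The only difference is cosmetic: the paper solves the resulting inequality for $w_{\istar_{j_2}}$ and shows it would have to be negative, whereas you bound the two affine terms separately by the common threshold $\frac{1}{m} - \frac{2^{-j_1-1}}{m}$ and compare; both reduce to the same computation.
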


\begin{proof}
    Suppose towards a contradiction that some $w \in \Delta^{k-1}$ satisfies $w_{\istar_{j_1}} \le 1/(2m)$ for some $j_1 \le j$, but $f(w) \ne \min\left\{a_1 \cdot w_{\istar_1} + b_1, a_2 \cdot w_{\istar_2} + b_2, \ldots, a_j \cdot w_{\istar_j} + b_j\right\}$. Then, there exists $j_2 > j$ such that
    \[
        a_{j_2} \cdot w_{\istar_{j_2}} + b_{j_2}
    <   \min\left\{a_1 \cdot w_{\istar_1} + b_1, a_2 \cdot w_{\istar_2} + b_2, \ldots, a_j \cdot w_{\istar_j} + b_j\right\}
    \le a_{j_1} \cdot w_{\istar_{j_1}} + b_{j_1}.
    \]
    Plugging $a_j = 2^{-j}$ and $b_j = (1 - 2^{-j}) / m$ into the above gives
    \[
        2^{-j_2} \cdot w_{\istar_{j_2}} + \frac{1 - 2^{-j_2}}{m} < 2^{-j_1} \cdot w_{\istar_{j_1}} + \frac{1 - 2^{-j_1}}{m}.
    \]
    Recalling the assumption that $w_{\istar_{j_1}} \le 1 / (2m)$, we have
    \[
        w_{\istar_{j_2}}
    <   \frac{1}{m} + 2^{j_2 - j_1} \cdot \left(w_{\istar_{j_1}} - \frac{1}{m}\right)
    \le \frac{1}{m} - \frac{1}{2m} \cdot 2^{j_2 - j_1}
    \le 0,
    \]
    a contradiction.
\end{proof}

\paragraph{Intuition behind Lower Bound} As in the large-$\eps$ regime, there is a natural $m$-round algorithm for solving the instance from \prettyref{def:small-eps}, so the proof amounts to arguing that the OODS algorithm cannot be ``more clever'' and learn more than one critical index in each round. For example, the algorithm might put a cap of $\gg 1/m$ on several coordinates in the first round, in the hope of hitting more than one indices in $\istar_1, \istar_2, \ldots$. However, any algorithm with a sample overhead of $s$ can make only $O(ms)$ such guesses. Then, assuming $ms \ll k$, the guesses only cover a tiny fraction of the indices. Thus, over the uniform randomness in $\istar$, the algorithm can still only learn $O(1)$ critical indices in expectation within each round.

\subsection{Lower Bounds for Small $\eps$}
Finally, we state and prove the lower bounds in the $\eps \le e^{-\Omega(k)}$ regime, for both the box and the ellipsoid settings.

\begin{thm}
\label{thm:OODS-lower-small-eps}
    The following holds for all sufficiently large $k$, $\eps \le 2^{-(k+1)/2}/(2k)$ and $s \ge 1$: (1) In the box setting, for $m \coloneqq \min\{\lfloor k/(8s)\rfloor, \lfloor k/48\rfloor\}$, no OODS algorithm with $r \le m/24$ rounds and sample overhead $s$ can find an $\eps$-approximate maximum with probability at least $9/10$; (2) In the ellipsoid setting, for $m \coloneqq \min\{\lfloor \frac{1}{4}\sqrt{k/s}\rfloor, \lfloor k/48\rfloor\}$, no OODS algorithm with $r \le m/24$ rounds and sample overhead $s$ can find an $\eps$-approximate maximum with probability at least $9/10$.
\end{thm}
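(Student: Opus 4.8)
The plan is to reuse the template of the proof of \prettyref{thm:OODS-lower-large-eps}, swapping in the small-$\eps$ hard instance of \prettyref{def:small-eps} and invoking \prettyref{lem:approx-maxima-small-eps} and \prettyref{lem:oracle-feedback-small-eps} in place of their large-$\eps$ counterparts. Suppose towards a contradiction that an OODS algorithm $\Alg$ runs in $r \le m/24$ rounds with sample overhead $s$ and succeeds with probability at least $9/10$ on a random instance with the stated $m$. For each round $t \in [r]$, I would let $I^{(t)}$ be the set of coordinates on which $\Alg$'s current cap $\wbar^{(t)}$ is large enough that a query with $w_{\istar_j} > 1/(2m)$ is possible: in the box setting $I^{(t)} = \{i : \wbar^{(t)}_i > 1/(2m)\}$, and in the ellipsoid setting $I^{(t)} = \{i : \wbar^{(t)}_i > 1/(4m^2)\}$, using that $\sum_i (w_i)^2/\wbar^{(t)}_i \le 1$ forces $w_i \le \sqrt{\wbar^{(t)}_i}$. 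The sample-overhead bound $\|\wbar^{(t)}\|_1 \le s$ then gives $|I^{(t)}| \le 2ms$ in the box setting and $|I^{(t)}| \le 4m^2 s$ in the ellipsoid setting; by \prettyref{lem:oracle-feedback-small-eps}, as long as $\{\istar_1, \dots, \istar_j\} \not\subseteq I^{(t)}$, every value and supergradient the oracle returns in round $t$ depends only on $\istar_1, \dots, \istar_j$.

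Next I would defer the randomness exactly as in the large-$\eps$ proof: once $\Alg$ commits to $\wbar^{(t)}$ at the start of round $t$, reveal the critical indices $\istar_{n^{(t-1)}+1}, \istar_{n^{(t-1)}+2}, \dots$ one at a time, stopping as soon as one lands outside $I^{(t)}$ (or all $m$ are revealed). Each freshly drawn index is uniform over the $\ge k - m$ remaining slots and at most $|I^{(t)}|$ of them lie in $I^{(t)}$, so the stated choices of $m$ — namely $m \le k/(8s)$ (box) or $m \le \frac{1}{4}\sqrt{k/s}$ (ellipsoid), together with $m \le k/48$ — force $|I^{(t)}| \le k/4$ and $k - m \ge k/2$, hence $|I^{(t)}|/(k-m) \le 1/2$. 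Thus the number of fresh critical indices revealed per round is stochastically dominated by a $\mathrm{Geometric}(1/2)$ variable, and writing $N^{(r)}$ for the total number revealed after $r$ rounds gives $\EEx{}{N^{(r)}} \le 2r \le m/12$, whence $\pr{}{N^{(r)} > m/4} \le 1/3$ by Markov's inequality.

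In the final step, conditioned on $N^{(r)} \le m/4$, the remaining $\ge 3m/4$ critical indices are uniform over the $\ge k - m/4$ undetermined coordinates and independent of $\Alg$'s transcript. By \prettyref{lem:approx-maxima-small-eps} (which uses $\eps \le 2^{-(k+1)/2}/(2k)$), for the output $\hat w$ to be $\eps$-approximate the set $\hat I = \{i : \hat w_i \ge 1/(2m)\}$, of size at most $2m$ since $\hat w \in \Delta^{k-1}$, must hit $\{\istar_1, \dots, \istar_m\}$ in $\ge m/2$ coordinates, hence must hit the undetermined critical indices in $\ge m/4$ coordinates. The conditional expectation of this intersection is at most $2m \cdot \frac{m}{k - m/4}$, so with $m \le k/48$ Markov's inequality bounds the conditional success probability by $1/3$; combining with the previous paragraph, $\Alg$ succeeds with probability at most $1/3 + 1/3 < 9/10$, a contradiction. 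The ellipsoid case differs only in the definition of $I^{(t)}$ and the attendant arithmetic, exactly as in the proof of \prettyref{thm:OODS-lower-large-eps}.

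I do not expect a genuinely new obstacle; the argument is a constant-tracking variant of the large-$\eps$ proof. The one point that demands care is that the ``information threshold'' in \prettyref{lem:oracle-feedback-small-eps} is $\Theta(1/m)$ rather than the $\Theta(1/m^2)$ of \prettyref{lem:oracle-feedback-large-eps}, so $|I^{(t)}|$ now grows like $ms$ (box) or $m^2 s$ (ellipsoid), a factor of $m$ larger than in the large-$\eps$ setting. This is precisely what forces the smaller $m = \Theta(\min\{k/s, k\})$ (resp.\ $\Theta(\min\{\sqrt{k/s}, k\})$), and thereby the stronger $\Omega(k/s)$ (resp.\ $\Omega(\sqrt{k/s})$) round lower bound; the extra $\lfloor k/48 \rfloor$ cap on $m$, absent from the large-$\eps$ proof, is needed here only to keep the final intersection bound below $1/3$.
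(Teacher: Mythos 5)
Your proposal is correct and follows the paper's own proof essentially step for step: same hard instance, same $I^{(t)}$ thresholds ($1/(2m)$ in the box setting and $1/(4m^2)$ in the ellipsoid setting, with the resulting bounds $|I^{(t)}|\le 2ms$ and $|I^{(t)}|\le 4m^2s$), same deferred-randomness argument giving geometric stochastic domination and $\EEx{}{N^{(r)}}\le 2r$, and the same two Markov bounds in the final step. You even correctly diagnose the role of the extra $\lfloor k/48\rfloor$ cap: in the large-$\eps$ proof $m\le\sqrt{k}/2$ already makes $16m/k$ vanish, whereas here $m$ can be linear in $k$ when $s=O(1)$, so an explicit cap is needed to keep $16m/k\le 1/3$ (the paper's final step even carries a vestigial ``$m\le\tfrac12\sqrt{k/s}$'' line from the large-$\eps$ proof that you have tacitly corrected).
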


In particular, to have a sample overhead $s \le \polylog(k)$, any OODS algorithm must take $\widetilde\Omega(k)$ rounds in the box setting, and $\widetilde\Omega(\sqrt{k})$ rounds in the ellipsoid setting. These match the exponents in the upper bounds (Theorems \ref{thm:lazy-hedge-box}~and~\ref{thm:lazy-hedge-ellipsoid}).

The proof is analogous to the one for \prettyref{thm:OODS-lower-large-eps}, so we will be brief.

\begin{proof}
    Again, we focus on the box setting; the ellipsoid setting follows easily with only a few changes in the proof.
    
    Suppose towards a contradiction that an OODS algorithm $\Alg$ for the box setting takes $r \le m/24$ rounds and succeeds with probability at least $9/10$. Consider the execution of $\Alg$ on a random OODS instance defined in \prettyref{def:large-eps} with $m \coloneqq \min\{\lfloor k/(8s)\rfloor, \lfloor k/48\rfloor\}$. For each round $t \in [r]$, let $I^{(t)} \coloneqq \{i \in [m]: \wbar_i^{(t)} > 1/(2m)\}$ denote the indices on which algorithm $\Alg$ sets a cap of $\Omega(1/m)$. Since $\Alg$ has a sample overhead of $s$, we have $\|\wbar^{(t)}\|_1 \le s$ and thus $|I^{(t)}| \le 2ms$. By definition of the box setting, within each round $t$, $\Alg$ cannot query the oracle on $w$ if $w_i > 1/(2m)$ holds for some $i \in [k] \setminus I^{(t)}$. Therefore, \prettyref{lem:oracle-feedback-small-eps} implies the following: Unless $\{\istar_1, \istar_2, \ldots, \istar_j\} \subseteq I^{(t)}$, for every point $w$ that $\Alg$ queries in round~$t$, it holds that
    \[
        f(w) = \min\left\{a_1 \cdot w_{\istar_1} + b_1, a_2 \cdot w_{\istar_2} + b_2, \ldots, a_j \cdot w_{\istar_j} + b_j\right\}.
    \]

    \paragraph{Deferring Randomness} Again, we draw the $m$ critical indices ``on demand'' in our analysis. We start with $n^{(0)} = 0$. In each round $t \in [r]$, after $\Alg$ decides on $\wbar^{(t)}$ and $I^{(t)}$, we keep sampling $\istar_{n^{(t-1)} + 1}, \istar_{n^{(t-1)} + 2}, \ldots$, until either all $m$ indices are chosen or we encounter an index outside $I^{(t)}$. Let $n^{(t)}$ denote the total number of critical indices that have been sampled by the end of round~$t$. By the implication of \prettyref{lem:oracle-feedback-small-eps}, all queries made by $\Alg$ during the $t$-th round can be answered solely based on the values of $\istar_1$ through $\istar_{n^{(t)}}$, as they do not depend on the $m - n^{(t)}$ indices that have not been decided.

    \paragraph{Control the Progress Measure} Let random variable $N^{(t)}$ denote the value of $n^{(t)}$ over the randomness in both algorithm $\Alg$ and the critical indices. In each round $t \in [r]$, whenever we sample a critical index $\istar_j$ ($j \in [m]$), there are $k - (j - 1) > k - m$ possible choices (namely, $[k] \setminus \{\istar_1, \ldots, \istar_{j-1}\}$). Among these choices, at most $|I^{(t)}| \le 2ms$ fall into the set $I^{(t)}$. Recall our choice of $m \le k/(8s)$, which ensures both $2ms \le k/4$ and $m \le k/2$. Thus, the probability of not stopping after drawing $\istar_j$ is at most $\frac{2ms}{k - m} \le \frac{k/4}{k/2} = \frac{1}{2}$. It follows that at most $2$ critical indices are sampled in each round $t$ in expectation, so we have $\EEx{}{N^{(r)}} \le 2r$.

    \paragraph{Control the Success Probability} Finally, we derive a contradiction by arguing that the probability that $\Alg$ finds an $\eps$-approximate maximum of $f$ is below $9/10$. Let $\hat w$ denote the output of $\Alg$, and $\hat I \coloneqq \{i \in [k]: \hat w_i \ge 1/(2m)\}$ be the indices on which $\hat w$ puts a weight of at least $1/(2m)$. Since the entries of $\hat w \in \Delta^{k-1}$ sum up to $1$, $|\hat I| \le 2m$. By \prettyref{lem:approx-maxima-small-eps}, for $\hat w$ to be an $\eps$-approximate maximum, $\hat I \cap \{\istar_1, \istar_2, \ldots, \istar_m\}$ must have a size $\ge m/2$.
    
    Conditioning on the event that $N^{(r)} \le m / 4$, at least $3m/4$ critical indices have not been chosen, and they are uniformly distributed among the $k - N^{(r)}$ remaining indices. For the condition $|\hat I \cap \{\istar_1, \istar_2, \ldots, \istar_m\}| \ge m/2$ to hold, we must have $|\hat I \cap \{\istar_{N^{(r)} + 1}, \istar_{N^{(r)} + 2}, \ldots, \istar_m\}| \ge m/2 - N^{(r)} \ge m/4$. For any choice of $\hat I$, over the remaining randomness in $\{\istar_{N^{(r)} + 1}, \istar_{N^{(r)} + 2}, \ldots, \istar_m\}$, it holds that
    \[
        \EEx{}{|\hat I \cap \{\istar_{N^{(r)} + 1}, \istar_{N^{(r)} + 2}, \ldots, \istar_m\}| \mid N^{(r)}}
    \le |\hat I| \cdot \frac{m - N^{(r)}}{k - N^{(r)}}
    \le 2m \cdot \frac{m}{k - m/4}
    \le \frac{4m^2}{k}.
    \]

    Recall that $m \le \frac{1}{2}\sqrt{k/s} \le \sqrt{k} / 2$. Markov's inequality gives
    \begin{align*}
        &~\pr{}{|\hat I \cap \{\istar_{N^{(r)} + 1}, \istar_{N^{(r)} + 2}, \ldots, \istar_m\}| \ge m/4 \mid N^{(r)} \le m/4}\\
    \le &~\frac{\EEx{}{|\hat I \cap \{\istar_{N^{(r)} + 1}, \istar_{N^{(r)} + 2}, \ldots, \istar_m\}| \mid N^{(r)} \le m/4}}{m / 4}\\
    \le &~\frac{4m^2/k}{m/4}
    =   \frac{16m}{k}
    \le \frac{1}{3},
    \end{align*}
    where the last step follows from our choice of $m \le k / 48$. In other words, conditioning on $N^{(r)} \le m/4$, the probability of finding an $\eps$-approximate maximum is at most $1/3$. On the other hand, by Markov's inequality and the assumption that $r \le m / 24$,
    \[
        \pr{}{N^{(r)} > m/4}
    \le \frac{\EEx{}{N^{(r)}}}{m/4}
    \le \frac{2r}{m/4}
    \le \frac{1}{3}.
    \]
    Therefore, the overall probability for $\Alg$ to output an $\eps$-approximate maximum is at most $2/3 < 9/10$, a contradiction.

    \paragraph{The Ellipsoid Setting} Finally, for the ellipsoid setting, we instead consider the execution of $\Alg$ on a random OODS instance defined in \prettyref{def:large-eps} with $m \coloneqq \min\{\lfloor \frac{1}{4}\sqrt{k/s}\rfloor, \lfloor k/48\rfloor\}$, and define $I^{(t)} \coloneqq \{i \in [m]: \wbar^{(t)}_i > 1/(4m^2)\}$ using threshold $1/(4m^2)$ instead of $1/(2m)$. Then, we have an analogous implication of \prettyref{lem:oracle-feedback-large-eps}: Unless $\{\istar_1, \istar_2, \ldots, \istar_j\} \subseteq I^{(t)}$, for every point $w$ that $\Alg$ queries in round~$t$, it holds that
    \[
        f(w) = \min\left\{w_{\istar_1} + \frac{1}{m^2}, w_{\istar_2} + \frac{2}{m^2}, \ldots, w_{\istar_j} + \frac{j}{m^2}\right\}.
    \]
    To see this, note that if $\istar_{j_0} \notin I^{(t)}$ holds for some $j_0 \in [j]$, we have $\wbar^{(t)}_{\istar_{j_0}} \le 1/(4m^2)$. Then, for any $w \in \calO(\wbar^{(t)})$, we must have
    \[
        \frac{w_{\istar_{j_0}}^2}{\wbar^{(t)}_{\istar_{j_0}}}
    \le \sum_{i=1}^{k}\frac{w_i^2}{\wbar_i}
    \le 1,
    \]
    which implies $w_{\istar_{j_0}} \le \sqrt{\wbar^{(t)}_{\istar_{j_0}}} \le 1/(2m)$. Then, \prettyref{lem:oracle-feedback-small-eps} implies
    \[
        f(w) = \min\left\{w_{\istar_1} + \frac{1}{m^2}, w_{\istar_2} + \frac{2}{m^2}, \ldots, w_{\istar_j} + \frac{j}{m^2}\right\}.
    \]

    To control the expectation of $N^{(r)}$, we note that the definition of $I^{(t)}$ and the assumption on $\Alg$ having a sample overhead of $s$ together imply $|I^{(t)}| \le 4m^2s$. Then, the choice of $m \le \frac{1}{4}\sqrt{k/s}$ guarantees that the sampling process stops at each step except with probability $\frac{4m^2s}{k - m} \le \frac{k / 4}{k / 2} = 1/2$. The rest of the proof goes through. 
\end{proof}

\end{document}